\renewcommand*{\backrefalt}[4]{%
    \ifcase #1 \footnotesize{(Not cited.)}%
    \or        \footnotesize{(p.~#2)}%
    \else      \footnotesize{(pp.~#2)}%
    \fi}
\newcommand{\cmark}{\ding{51}}%
\newcommand{\xmark}{\ding{55}}%
\newtheorem{theorem}{Theorem}
\newtheorem{remark}{Remark}
\newtheorem*{theorem*}{Theorem}
\newtheorem{assumption}{Assumption}
\newtheorem*{assumption*}{Assumption}
\newtheorem{lemma}{Lemma}
\newtheorem*{lemma*}{Lemma}
\newtheorem{proposition}{Proposition}
\newtheorem*{proposition*}{Proposition}
\Crefname{assumption}{Assumption}{Assumptions}
\Crefname{lemma}{Lemma}{Lemmas}
\Crefname{definition}{Definition}{Definitions}
\Crefname{proposition}{Proposition}{Propositions}
\newcommand*\X{\mathcal{X}}
\newcommand*\U{\mathcal{U}}
\newcommand*\Deltab{\bar{\Delta}}
\newcommand*\biasb{\mathrm{bias}}
\newcommand*\varb{\mathrm{var}}
\newcommand*\covb{\mathrm{cov}}
\newcommand*\M{\mathcal{M}}
\newcommand*{\eg}{e.g.,\@\xspace}
\newcommand*{\versus}{vs.\@\xspace}
\newcommand*{\sut}{s.t.\@\xspace}
\newcommand*{\ie}{i.e.,\@\xspace}
\newcommand*{\iid}{IID\@\xspace}
\newcommand*{\ood}{OOD\@\xspace}
\newcommand*{\wrt}{w.r.t.\@\xspace}
\newcommand*{\aka}{a.k.a.\@\xspace}
\newcommand*{\etc}{etc.\@\xspace}
\let\originalleft\left
\let\originalright\right
\renewcommand{\left}{\mathopen{}\mathclose\bgroup\originalleft}
\renewcommand{\right}{\aftergroup\egroup\originalright}
\def\eqref#1{eq.~\ref{#1}}
\def\1{\bm{1}}
\def\X{\mathcal{X}}
\def\Y{\mathcal{Y}}
\def\U{\mathcal{U}}
\DeclareMathAlphabet{\mathsfit}{\encodingdefault}{\sfdefault}{m}{sl}
\SetMathAlphabet{\mathsfit}{bold}{\encodingdefault}{\sfdefault}{bx}{n}
\newcommand{\reb}[1]{#1} %\textcolor{NavyBlue}{#1}}
\newcommand{\mytag}[2]{%
  \text{#1}%
  \@bsphack
  \begingroup
    \@onelevel@sanitize\@currentlabelname
    \edef\@currentlabelname{%
      \expandafter\strip@period\@currentlabelname\relax.\relax\@@@%
    }%
    \protected@write\@auxout{}{%
      \string\newlabel{#2}{%
        {#1}%
        {\thepage}%
        {\@currentlabelname}%
        {\@currentHref}{}%
      }%
    }%
  \endgroup
  \@esphack
}
\renewcommand\maketitle{\par
  \begingroup
    \@maketitle
    \@thanks
    \@notice
    \thispagestyle{empty}
  \endgroup
  \setcounter{footnote}{0}%
}
\begin{document}
\title{Diverse Weight Averaging for Out-of-Distribution Generalization}
%\titlerunning{Diverse Weight Averaging for Out-of-distribution Generalization}
%
% \titlerunning{Diverse Weight Averaging for OOD Generalization}
% If the paper title is too long for the running head, you can set
% an abbreviated paper title here
%

\author{%
  Alexandre~Ramé\textsuperscript{1,*}, Matthieu~Kirchmeyer\textsuperscript{1,2,*} \\ \textbf{Thibaud~Rahier\textsuperscript{2}, Alain~Rakotomamonjy\textsuperscript{2,4}, Patrick~Gallinari\textsuperscript{1,2}, Matthieu~Cord\textsuperscript{1,3}}\\
  \textsuperscript{1}Sorbonne Université, CNRS, ISIR, F-75005 Paris, France
  ~~\textsuperscript{2}Criteo AI Lab, Paris, France \\
  \textsuperscript{3}Valeo.ai, Paris, France ~~
  \textsuperscript{4}Université de Rouen, LITIS, France \\
  \textsuperscript{*}Equal contribution\\
  %\texttt{\{alexandre.rame,matthieu.kirchmeyer,patrick.gallinari\}@sorbonne-universite.fr}
}

% \author{%
%   Alexandre Ramé\\
%   \texttt{alexandre.rame@isir.upmc.fr} \\
%   Matthieu Kirchmeyer\\
%   \texttt{matthieu.kirchmeyer@isir.upmc.fr} \\
%   % examples of more authors
%   % \And
%   % Coauthor \\
%   % Affiliation \\
%   % Address \\
%   % \texttt{email} \\
%   % \AND
%   % Coauthor \\
%   % Affiliation \\
%   % Address \\
%   % \texttt{email} \\
%   % \And
%   % Coauthor \\
%   % Affiliation \\
%   % Address \\
%   % \texttt{email} \\
%   % \And
%   % Coauthor \\
%   % Affiliation \\
%   % Address \\
%   % \texttt{email} \\
% }
%
\maketitle
\begin{abstract}
    Standard neural networks struggle to generalize under distribution shifts in computer vision.
\reb{Fortunately, combining multiple networks can consistently improve out-of-distribution generalization. In particular, weight averaging (WA) strategies were shown to perform best on the competitive DomainBed benchmark; they directly average the weights of multiple networks despite their nonlinearities.}
In this paper, we propose Diverse Weight Averaging (DiWA), \reb{a new WA strategy whose} main motivation is to increase the functional diversity across averaged models.
To this end, DiWA averages weights obtained from several independent training runs:
indeed, models obtained from different runs are more diverse than those collected along a single run thanks to differences in hyperparameters and training procedures.
We motivate the need for diversity 
by a new bias-variance-covariance-locality decomposition of the expected error, exploiting similarities between WA and standard functional ensembling.
Moreover, this decomposition highlights that WA succeeds when the variance term dominates, which we show occurs when the marginal distribution changes at test time.
Experimentally, DiWA consistently improves the state of the art on DomainBed without inference overhead.
% TLDR: To improve out-of-distribution generalization, we average diverse weights obtained from different training runs; this strategy is motivated by an extension of the bias-variance theory to weight averaging and is state of the art on DomainBed.
\end{abstract}
\section{Introduction}
Learning robust models that generalize well is critical for many real-world applications \cite{Zech2018,degrave2021ai}.
Yet, the classical Empirical Risk Minimization (ERM) lacks robustness to distribution shifts \cite{hendrycks2018benchmarking,NEURIPS2020_6cfe0e61,d2020underspecification}.
To improve out-of-distribution (\ood) generalization in classification, several recent works proposed to train models simultaneously on multiple related but different domains \cite{muandet2013domain}.
Though theoretically appealing, domain-invariant approaches \cite{peters2016causal} either underperform \cite{arjovsky2019invariant,krueger2020utofdistribution} or only slightly improve \cite{coral216aaai,rame_fishr_2021} ERM on the reference DomainBed benchmark \cite{gulrajani2021in}.
The state-of-the-art strategy on DomainBed is currently to average the weights obtained along a training trajectory \cite{izmailov2018}.
\cite{cha2021wad} argues that this weight averaging (WA) succeeds in \ood because it finds solutions with flatter loss landscapes.

In this paper, we show the limitations of this flatness-based analysis and provide a new explanation for the success of WA in \ood.
It is based on WA's similarity with ensembling \cite{Lakshminarayanan2017}, a well-known strategy to improve robustness \cite{Ovadia2019,ashukha2020pitfalls}, that averages the predictions from various models.
Based on \cite{ueda1996generalization}, we present %in \Cref{subsec:biasvariance}
a bias-variance-covariance-locality decomposition of WA's expected error.
It contains four terms:
\textit{first} the bias that we show increases under shift in label posterior distributions (\ie correlation shift \cite{ye2021odbench});
\textit{second}, the variance that we show increases under shift in input marginal distributions (\ie diversity shift \cite{ye2021odbench});
\textit{third}, the covariance that decreases when models are diverse;
\textit{finally}, a locality condition on the weights of averaged models.
% to enforce the validity of our approximation.

Based on this analysis, we aim at obtaining diverse models whose weights are averageable with our Diverse Weight Averaging (DiWA) approach.
In practice, DiWA averages in weights the models obtained from independent training runs that share the same initialization. The motivation is that those models are more diverse than those obtained along a single run \cite{Fort2019DeepEA,gontijolopes2022no}.
Yet, averaging the weights of independently trained networks with batch normalization \cite{batchnorm2015} and ReLU layers \cite{agarap2018deep} may be counter-intuitive.
Such averaging is efficient especially when models can be connected linearly in the weight space via a low loss path. Interestingly, this linear mode connectivity property \cite{Frankle2020} was empirically validated when the runs start from a shared pretrained initialization \cite{Neyshabur2020}.
This insight is at the heart of DiWA but also of other recent works \cite{Wortsman2022robust,mergefisher21,Wortsman2022ModelSA}, as discussed in \Cref{sec:related_work}.

In summary, our main contributions are the following:
\begin{itemize}
    \item We propose a new theoretical analysis of WA for \ood based on a bias-variance-covariance-locality decomposition of its expected error (\Cref{sec:theory}). By relating correlation shift to its bias and diversity shift to its variance, we show that WA succeeds under diversity shift.
    \item We empirically tackle the covariance term by increasing the diversity across models averaged in weights. In our DiWA approach, we decorrelate their training procedures: in practice, these models are obtained from independent runs (\Cref{sect:dwa}).
    We then empirically validate that diversity improves \ood performance (\Cref{sect:analysis}) and show that DiWA is state of the art on all real-world datasets from the DomainBed benchmark \cite{gulrajani2021in} (\Cref{sec:domainbed}).%
\end{itemize}%
\section{Theoretical insights}
\label{sec:theory}
Under the setting described in \Cref{subsec:notations}, we introduce WA in \Cref{sec:limithessian} and decompose its expected \ood error in \Cref{subsec:biasvariance}.
Then, we separately consider the four terms of this bias-variance-covariance-locality decomposition in \Cref{subsec:analysisbvc}.
This theoretical analysis will allow us to better understand when WA succeeds, and most importantly, how to improve it empirically in \Cref{sect:dwa}.%
\subsection{Notations and problem definition}%
\label{subsec:notations}
% \vspace{-0.6em}%
\paragraph{Notations.}%
We denote $\X$ the input space of images, $\Y$ the label space
and $\ell:\Y^2 \rightarrow \mathbb{R}_+$ a loss function.
$S$ is the training (source) domain with distribution $p_S$, and $T$ is the test (target) domain with distribution $p_T$.
For simplicity, we will indistinctly use the notations $p_S$ and $p_T$ to refer to the joint, posterior and marginal distributions of $(X, Y)$.
% $p_S$ and $p_T$ will indistinctly refer to the joint, posterior and marginal distributions, respectively $p(X,Y), p(Y|X), p(X)$.
We note $f_S, f_T:\X\rightarrow\Y$ the source and target labeling functions.
We assume that there is no noise in the data: then $f_S$ is defined on $\X_S \triangleq \{x\in \X / p_S(x)>0\}$ by $\forall (x, y) \sim p_S,f_S(x)=y$ and similarly $f_T$ is defined on $\X_T \triangleq \{x\in \X / p_T(x)>0\}$ by $\forall (x, y) \sim p_T, f_T(x)=y$.%
% \vspace{-0.1em}%
\paragraph{Problem.}
We consider a neural network (NN) $f(\cdot, \theta):\X\rightarrow\Y$ made of a fixed architecture $f$ with weights $\theta$.
We seek $\theta$ minimizing the target generalization error:
\begin{equation} \label{eq:gen_error}
    \mathcal{E}_T(\theta)= \mathbb{E}_{(x,y) \sim p_T}[\ell(f(x, \theta),y)].
\end{equation}
% $\mathcal{E}_T(\theta)= \mathbb{E}_{(x,y) \sim p_T}[\ell(f(x, \theta),y)]$.
$f(\cdot, \theta)$ should approximate $f_T$ on $\X_T$. However, this is complex in the \ood setup because we only have data from domain $S$ in training, related yet different from $T$.
The differences between $S$ and $T$ are due to distribution shifts (\ie the fact that
$p_S(X,Y) \neq p_T(X,Y)$) which are decomposed per \cite{ye2021odbench} into
\textbf{diversity shift} (\aka covariate shift), when marginal distributions differ (\ie $p_S(X) \neq p_T(X)$), and
\textbf{correlation shift} (\aka concept shift), when posterior distributions differ (\ie $p_S(Y|X) \neq p_T(Y|X)$ and $f_S \neq f_T$).
The weights are typically learned on a training dataset $d_S$ from $S$ (composed of $n_S$ i.i.d. samples from $p_S(X,Y)$) with a configuration $c$, which contains all other sources of randomness in learning (\eg initialization, hyperparameters, training stochasticity, epochs, \etc). %number of steps, \etc).
We call $l_S=\{d_S, c\}$ a learning procedure on domain $S$, and explicitly write $\theta(l_S)$ to refer to the weights obtained after stochastic minimization of $1/n_S \sum_{(x,y)\in d_S} \ell(f(x, \theta), y)$ \wrt $\theta$ under $l_S$.
% \vspace{-0.1em}%
\subsection{Weight averaging for OOD and limitations of current analysis}%
\label{sec:limithessian}%
\paragraph{Weight averaging.}%
We study the benefits of combining $M$ individual member weights $\{\theta_m\}_{m=1}^M \triangleq \{\theta(l_S^{(m)})\}_{m=1}^M$ obtained from $M$ (potentially correlated) identically distributed (i.d.) learning procedures $L_S^M\triangleq\{l_S^{(m)}\}_{m=1}^M$.
Under conditions discussed in \Cref{subsec:sharedinithpws}, these $M$ weights can be averaged despite nonlinearities in the architecture $f$.
Weight averaging (WA) \cite{izmailov2018}, defined as:
\begin{equation} \label{eq:f_wa}
    f_{\text{WA}} \triangleq f(\cdot, \theta_{\text{WA}}), \text{~where~} \theta_{\text{WA}}\triangleq\theta_{\text{WA}}(L_S^M) \triangleq 1/M \sum\nolimits_{m=1}^{M} \theta_m,%
\end{equation}
is the state of the art \cite{cha2021wad,arpit2021ensemble} on DomainBed \cite{gulrajani2021in} when the weights $\{\theta_m\}_{m=1}^M$ are sampled along a single training trajectory (a description we refine in \Cref{remark:identical_assumption} from \Cref{app:proof_bvc}).
\paragraph{Limitations of the flatness-based analysis.}
To explain this success, Cha \textit{et al.} \cite{cha2021wad} argue that flat minima generalize better; indeed, WA flattens the loss landscape.
Yet, as shown~in~\Cref{app:limit_proof_swad}, this analysis does not fully explain WA's spectacular results on DomainBed.
First, flatness does not act on distribution shifts thus the \ood error is uncontrolled with their upper bound (see \Cref{app:flatness_distribution_shifts}).
Second, this analysis does not clarify why WA outperforms \reb{Sharpness-Aware Minimizer (SAM) \cite{foret2021} for \ood generalization, even though SAM} directly optimizes flatness (see \Cref{app:mav_better_than_sam}).
Finally, it does not justify why combining WA and SAM succeeds in \iid \cite{questionsflatminima22} yet fails in \ood (see \Cref{app:mav_sam_failure}).
These observations motivate a new analysis of WA; we propose one below that better explains these~results. %
\subsection{Bias-variance-covariance-locality decomposition}
\label{subsec:biasvariance}
We now introduce our bias-variance-covariance-locality decomposition which extends the bias-variance decomposition \cite{kohavi1996bias} to WA.
In the rest of this theoretical section, $\ell$ is the Mean Squared Error for simplicity: yet, our results may be extended to other losses as in \cite{domingos2000unified}.
In this case, the expected error of a model with weights $\theta(l_S)$ \wrt the learning procedure $l_S$ was decomposed in \cite{kohavi1996bias} into:
\begin{equation} \tag{BV}\label{eq:b_v}
    \mathbb{E}_{l_S}\mathcal{E}_T(\theta(l_S)) = \mathbb{E}_{(x,y)\sim p_T}[\biasb^2(x, y)+\varb(x)],
\end{equation}
where $\biasb(x,y), \varb(x)$ are the bias and variance of the considered model \wrt a sample $(x,y)$, defined later in \Cref{eq:b_var_cov}.
To decompose WA's error, we leverage the similarity (already highlighted in \cite{izmailov2018}) between WA and functional ensembling (ENS) \cite{Lakshminarayanan2017,dietterich2000ensemble}, a more traditional way to combine a collection of weights.
More precisely, ENS averages the predictions, $f_{\text{ENS}} \triangleq f_{\text{ENS}}(\cdot, \{\theta_m\}_{m=1}^M) \triangleq 1/M \sum_{m=1}^{M} f(\cdot, \theta_m)$.
\Cref{lemma:wa_ensembling} establishes that $f_{\text{WA}}$ is a first-order approximation of $f_{\text{ENS}}$ when $\{\theta_m\}_{m=1}^M$ are close in the weight space.
\begin{lemma}[WA and ENS. Proof in \Cref{app:wa_loss}. Adapted from \cite{izmailov2018,Wortsman2022ModelSA}.] \label{lemma:wa_ensembling}
    Given $\{\theta_m\}_{m=1}^M$ with learning procedures $L_S^M\triangleq\{l_S^{(m)}\}_{m=1}^M$. Denoting $\Delta_{L_S^M}=\max_{m=1}^{M}\left\|\theta_m-\theta_{\text{WA}}\right\|_2$, $\forall (x,y) \in \X \times \Y$:%
    \begin{equation*}
        f_{\text{WA}}(x) = f_{\text{ENS}}(x) + O(\Delta^2_{L_S^M}) \text{~and~} \ell\left(f_{\text{WA}}(x), y\right) = \ell\left(f_{\text{ENS}}(x) , y\right) + O(\Delta_{L_S^M}^2).%
    \end{equation*}%
\end{lemma}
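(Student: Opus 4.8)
The plan is to derive both identities from a second-order Taylor expansion of $f(x,\cdot)$ around the averaged weights $\theta_{\text{WA}}$. The only structural fact I need is that $\theta_{\text{WA}}=\frac{1}{M}\sum_{m=1}^M\theta_m$ is \emph{exactly} the mean of the $\theta_m$, so that the centered displacements $\theta_m-\theta_{\text{WA}}$ average to zero. This forces the first-order (linear) contribution to $f_{\text{ENS}}$ to vanish, leaving only the quadratic remainder, which is precisely the source of the $O(\Delta^2_{L_S^M})$ term.

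Concretely, assuming $f(x,\cdot)$ is twice continuously differentiable in $\theta$, I would expand each member (componentwise if $f$ is vector-valued) as
\begin{equation*}
    f(x,\theta_m) = f(x,\theta_{\text{WA}}) + \langle \nabla_\theta f(x,\theta_{\text{WA}}),\, \theta_m-\theta_{\text{WA}}\rangle + R_m(x),
\end{equation*}
where the Lagrange remainder obeys $\lvert R_m(x)\rvert \le \tfrac{1}{2}\sup_\theta\lVert\nabla^2_\theta f(x,\theta)\rVert\,\lVert\theta_m-\theta_{\text{WA}}\rVert_2^2 = O(\Delta^2_{L_S^M})$, the supremum being over the segment joining $\theta_m$ and $\theta_{\text{WA}}$. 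Averaging over $m$ and using $\frac{1}{M}\sum_{m=1}^M(\theta_m-\theta_{\text{WA}})=\vzero$, the linear term cancels and
\begin{equation*}
    f_{\text{ENS}}(x) = \frac{1}{M}\sum_{m=1}^M f(x,\theta_m) = f(x,\theta_{\text{WA}}) + \frac{1}{M}\sum_{m=1}^M R_m(x) = f_{\text{WA}}(x) + O(\Delta^2_{L_S^M}),
\end{equation*}
which is the first claim.

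For the loss identity I would exploit the smoothness of the Mean Squared Error. Since $\ell(\cdot,y)$ is quadratic, hence $C^\infty$, I substitute $f_{\text{ENS}}(x)=f_{\text{WA}}(x)+O(\Delta^2_{L_S^M})$ and expand $\ell$ once more around $f_{\text{WA}}(x)$: the linear-in-perturbation term is $O(\Delta^2_{L_S^M})$ and the residual quadratic term is $O(\Delta^4_{L_S^M})=O(\Delta^2_{L_S^M})$, yielding $\ell(f_{\text{WA}}(x),y)=\ell(f_{\text{ENS}}(x),y)+O(\Delta^2_{L_S^M})$.

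The argument is elementary; the only point requiring care is the control of the remainder $R_m$. The hard part is therefore not the cancellation — which is immediate from the definition of $\theta_{\text{WA}}$ — but justifying that the hidden constant in $O(\Delta^2_{L_S^M})$ is finite. This needs $f(x,\cdot)$ to be $C^2$ with Hessian bounded on the convex hull of $\{\theta_m\}_{m=1}^M$; the constant may then depend on $(x,y)$ and on this Hessian bound, which is acceptable since the statement is pointwise in $(x,y)$. In practice this is exactly the regime where the linear-mode-connectivity conditions of \Cref{subsec:sharedinithpws} keep the weights close, so that $\Delta_{L_S^M}$ is small and the approximation is tight.
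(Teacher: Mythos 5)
Your proposal is correct and follows essentially the same route as the paper's proof: a first-order Taylor expansion of the predictions around $\theta_{\text{WA}}$ whose linear term cancels because $\sum_{m=1}^M(\theta_m-\theta_{\text{WA}})=\vzero$, followed by a Taylor expansion of the loss in its first argument. The only difference is cosmetic — you make explicit the $C^2$/bounded-Hessian condition needed to control the hidden constant in $O(\Delta^2_{L_S^M})$, which the paper leaves implicit.
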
%
This similarity is useful since \Cref{eq:b_v} was extended into a bias-variance-covariance decomposition for ENS in \cite{ueda1996generalization,brown2005between}.
We can then derive the following decomposition of WA's expected test error. To take into account the $M$ averaged weights, the expectation is over the joint distribution describing the $M$ identically distributed (i.d.) learning procedures $L_S^M\triangleq\{l_S^{(m)}\}_{m=1}^M$.
\begin{proposition}[Bias-variance-covariance-locality decomposition of the expected generalization error of WA in \ood. Proof in \Cref{app:proof_bvc}.]
    \label{prop:b_var_cov}
    Denoting $\bar{f}_S\left(x\right) = \mathbb{E}_{l_S} \left[f\left(x,\theta\left(l_S\right)\right)\right]$, under identically distributed learning procedures $L_S^M\triangleq\{l_S^{(m)}\}_{m=1}^M$, the expected generalization error on domain $T$ of $\theta_{\text{WA}}(L_S^M)\triangleq\frac{1}{M} \sum\nolimits_{m=1}^{M} \theta_m$ over the joint distribution of $L_S^M$ is:%
    \begin{equation}
        \begin{aligned}
             \mathbb{E}_{L_S^M}\mathcal{E}_T(\theta_{\text{WA}}(L_S^M)) &= \mathbb{E}_{(x,y)\sim p_T}\Big[\biasb^2(x, y)+\frac{1}{M} \varb(x)+\frac{M-1}{M} \covb(x)\Big] + O(\Deltab^2), \\
             \text{where~}\biasb(x,y)&=y-\bar{f}_S\left(x\right), \\
             \text{and~}\varb(x)&=\mathbb{E}_{l_S}\left[\left(f(x, \theta(l_S)) - \bar{f}_S\left(x\right)\right)^{2}\right], \\
             \text{and~}\covb(x)&= \mathbb{E}_{l_S,l_S'}\left[\left(f(x,\theta(l_S))-\bar{f}_S\left(x\right)\right)\left(f(x,\theta(l_S')))-\bar{f}_S\left(x\right)\right)\right], \\
             \text{and~}\Deltab^2&=\mathbb{E}_{L_S^M}\Delta_{L_S^M}^2 \text{~with~} \Delta_{L_S^M}=\max_{m=1}^{M}\left\|\theta_m-\theta_{\text{WA}}\right\|_2.%
        \end{aligned} \tag{BVCL}\label{eq:b_var_cov}%
    \end{equation}%
    $\covb$ is the prediction covariance between two member models whose weights are averaged.
    The locality term $\Deltab^2$ is the expected squared maximum distance between weights and their average.
\end{proposition}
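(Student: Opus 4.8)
The plan is to reduce the weight-averaged error to an ensemble error via \Cref{lemma:wa_ensembling}, and then apply the classical bias--variance--covariance decomposition of \cite{ueda1996generalization} to the ensemble. First I would invoke the second identity of \Cref{lemma:wa_ensembling}, namely $\ell(f_{\text{WA}}(x),y) = \ell(f_{\text{ENS}}(x),y) + O(\Delta_{L_S^M}^2)$, which holds pointwise in $(x,y)$ and in the realization $L_S^M$. Taking the expectation over $(x,y)\sim p_T$ and over the joint distribution of $L_S^M$, and using Fubini to exchange the two expectations, the remainder becomes $\mathbb{E}_{L_S^M}[O(\Delta_{L_S^M}^2)] = O(\mathbb{E}_{L_S^M}\Delta_{L_S^M}^2) = O(\Deltab^2)$. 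It therefore suffices to decompose $\mathbb{E}_{(x,y)\sim p_T}\mathbb{E}_{L_S^M}[\ell(f_{\text{ENS}}(x),y)]$, which I would do at a fixed $(x,y)$ before integrating.

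At fixed $(x,y)$, because the $M$ learning procedures are identically distributed, each member satisfies $\mathbb{E}_{l_S^{(m)}}[f(x,\theta_m)]=\bar{f}_S(x)$, so by linearity $\mathbb{E}_{L_S^M}[f_{\text{ENS}}(x)]=\bar{f}_S(x)$ as well. Writing $f_{\text{ENS}}(x)-y = (f_{\text{ENS}}(x)-\bar{f}_S(x)) + (\bar{f}_S(x)-y)$ and expanding the square for the MSE loss, the cross term vanishes since $\mathbb{E}_{L_S^M}[f_{\text{ENS}}(x)-\bar{f}_S(x)]=0$. This yields the bias--variance split $\mathbb{E}_{L_S^M}[\ell(f_{\text{ENS}}(x),y)] = \biasb^2(x,y) + \mathrm{Var}_{L_S^M}(f_{\text{ENS}}(x))$, with $\biasb(x,y)=y-\bar{f}_S(x)$ as claimed.

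The central step is to resolve the ensemble variance into per-member variance and pairwise covariance. Expanding $f_{\text{ENS}}(x)-\bar{f}_S(x)=\tfrac{1}{M}\sum_m (f(x,\theta_m)-\bar{f}_S(x))$ and squaring produces a double sum of $M^2$ terms, which I would separate into the $M$ diagonal and $M(M-1)$ off-diagonal contributions. Here the identically-distributed hypothesis does the crucial work: every diagonal term equals $\varb(x)$ and every off-diagonal term equals $\covb(x)$, so the double sum collapses to $\tfrac{1}{M^2}[M\varb(x)+M(M-1)\covb(x)] = \tfrac{1}{M}\varb(x)+\tfrac{M-1}{M}\covb(x)$. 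Integrating over $p_T$ and restoring the locality remainder $O(\Deltab^2)$ then gives \eqref{eq:b_var_cov}.

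The main obstacle is conceptual rather than computational: it lies in justifying that all pairwise covariances coincide so that the off-diagonal sum collapses to a single $\covb(x)$. This requires that the procedures be not merely identically distributed marginally, but that the law of $(\theta_m,\theta_{m'})$ be the same for every pair $m\neq m'$; this is precisely the content of the identically-distributed assumption, which the paper refines in \Cref{remark:identical_assumption}. A secondary technicality is passing the expectation through the asymptotic $O(\Delta^2)$ remainder of \Cref{lemma:wa_ensembling}, which is legitimate provided the second moment $\Deltab^2$ is finite.
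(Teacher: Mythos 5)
Your proposal is correct and follows essentially the same route as the paper's proof in \Cref{app:proof_bvc}: reduce $f_{\text{WA}}$ to $f_{\text{ENS}}$ via \Cref{lemma:wa_ensembling}, then apply the bias--variance--covariance decomposition of \cite{ueda1996generalization} under the identically distributed hypothesis to collapse the diagonal and off-diagonal terms into $\tfrac{1}{M}\varb+\tfrac{M-1}{M}\covb$. Your explicit remark that the off-diagonal collapse requires all \emph{pairwise} joint laws $(\theta_m,\theta_{m'})$ to coincide --- not merely the marginals --- is a point the paper leaves implicit, and is correctly resolved by the exchangeable sampling construction of \Cref{remark:identical_assumption}.
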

\Cref{eq:b_var_cov} decomposes the \ood error of WA into four terms.
The bias is the same as that of each of its i.d. members.
WA's variance is split into the variance of each of its i.d. members divided by $M$ and a covariance term.
The last locality term constrains the weights to ensure the validity of our approximation.
In conclusion, combining $M$ models divides the variance by $M$ but introduces the covariance and locality terms which should be controlled along bias to guarantee low \ood error.
\subsection{Analysis of the bias-variance-covariance-locality decomposition}%
\label{subsec:analysisbvc}
We now analyze the four terms in \Cref{eq:b_var_cov}.
We show that bias dominates under correlation shift (\Cref{subsec:expression_ood_bias}) and variance dominates under diversity shift (\Cref{subsec:expression_ood_var}).
Then, we discuss a trade-off between covariance, reduced with diverse models (\Cref{subsec:expression_cov_div}), and the locality term, reduced when weights are similar (\Cref{subsec:expression_loc_lir}).
This analysis shows that \textit{WA is effective against diversity shift when $M$ is large and when its members are diverse but close in the weight space}.
\subsubsection{Bias and correlation shift (and support mismatch)}
\label{subsec:expression_ood_bias}
We relate \ood bias to correlation shift \cite{ye2021odbench} %, \ie posterior shift between $p_S(Y|X)$ and $p_T(Y|X)$.
under \Cref{ass:no_bias_iid}, where $\bar{f}_{S}\left(x\right) \triangleq \mathbb{E}_{l_S} \left[f\left(x,\theta\left(l_S\right)\right)\right]$.
As discussed in \Cref{app:bias_correlation_ass_noiidbias}, \Cref{ass:no_bias_iid} is reasonable for a large NN trained on a large dataset representative of the source domain $S$. 
It is relaxed in \Cref{prop:app_bias_full} from \Cref{app:bias_correlation}.
\begin{assumption}[Small \iid bias]%
    $\exists \epsilon > 0 \text{~small~s.t.~} \forall x\in \X_S, |f_{S}\left(x\right)-\bar{f}_{S}\left(x\right)|\leq \epsilon$.%
    \label{ass:no_bias_iid}%
    % \vspace{-0.5em}%
\end{assumption}%
\begin{proposition}[\ood bias and correlation shift. Proof in \Cref{app:bias_correlation}] \label{prop:bias}
    With a bounded difference between the labeling functions $f_T-f_S$ on $\X_T \cap \X_S$, under \Cref{ass:no_bias_iid}, the bias on domain $T$ is:
    \begin{equation}\label{eq:bias}
        \begin{aligned}
            \mathbb{E}_{(x,y)\sim p_T}[\biasb^2(x, y)] &= \text{Correlation shift} + \text{Support mismatch} + O(\epsilon), \\
            \text{where~} \text{Correlation shift} &= \int_{\X_T \cap \X_S} \left(f_T(x)-f_S(x)\right)^{2} p_T(x) dx, \\
            \text{and~} \text{Support mismatch} &= \int_{\X_T \setminus \X_S} \left(f_T(x)-\bar{f}_{S}\left(x\right)\right)^{2} p_T(x) dx.%
        \end{aligned}%
    \end{equation}
\end{proposition}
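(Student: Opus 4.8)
The plan is to reduce the squared-bias expectation to a single integral over the target support and then split the domain of integration. First I would invoke the no-noise assumption on the target domain: since $(x,y)\sim p_T$ forces $y=f_T(x)$, the bias term from \Cref{eq:b_var_cov} becomes $\biasb(x,y)=y-\bar{f}_S(x)=f_T(x)-\bar{f}_S(x)$, so that
\begin{equation*}
    \mathbb{E}_{(x,y)\sim p_T}[\biasb^2(x,y)] = \int_{\X_T} \left(f_T(x)-\bar{f}_S(x)\right)^2 p_T(x)\,dx,
\end{equation*}
where I use that $p_T$ places all its mass on $\X_T$.

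Next I would partition $\X_T = (\X_T \cap \X_S) \sqcup (\X_T \setminus \X_S)$ and split the integral accordingly. The contribution over $\X_T \setminus \X_S$ is exactly the Support mismatch term by definition, requiring no further manipulation: this is the region where the averaged predictor is evaluated on inputs with no training support, so nothing sharper than $(f_T-\bar{f}_S)^2$ can be said.

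On the overlap $\X_T \cap \X_S$ both labeling functions are simultaneously defined, so I would insert $f_S$ and decompose $f_T(x)-\bar{f}_S(x) = \bigl(f_T(x)-f_S(x)\bigr) + \bigl(f_S(x)-\bar{f}_S(x)\bigr)$. Expanding the square produces the Correlation shift integrand $(f_T-f_S)^2$ plus a cross term and a residual $(f_S-\bar{f}_S)^2$. \Cref{ass:no_bias_iid} gives $|f_S-\bar{f}_S|\le\epsilon$ on $\X_S$, so the residual is $O(\epsilon^2)$ and, combined with the assumed boundedness of $f_T-f_S$ on $\X_T\cap\X_S$, the cross term is $O(\epsilon)$; since $p_T$ is a probability measure, integrating these against $p_T(x)\,dx$ preserves the $O(\epsilon)$ bound. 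Collecting the three pieces yields Correlation shift $+$ Support mismatch $+ O(\epsilon)$.

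I do not expect a genuine obstacle here, since the argument is an algebraic bias split followed by a triangle-inequality-style bound; the only care needed is measure-theoretic. Specifically, I must check that $f_S$ and $f_T$ are jointly defined precisely on $\X_T\cap\X_S$ so the correlation-shift integrand is meaningful, and that $p_T$ carries no mass outside $\X_T$ so that the two-piece decomposition of the integration domain is exhaustive and the omitted region $\X_S \setminus \X_T$ does not contribute.
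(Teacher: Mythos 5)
Your proof is correct and follows essentially the same route as the paper's: the paper first derives the exact four-term decomposition (correlation shift, weighted \iid bias, cross term, support mismatch) by splitting the integral over $\X_T \cap \X_S$ and $\X_T \setminus \X_S$ and inserting $f_S$ on the overlap, then uses \Cref{ass:no_bias_iid} together with the boundedness of $f_T-f_S$ to absorb the weighted \iid bias as $O(\epsilon^2)$ and the cross term as $O(\epsilon)$. Your handling of the no-noise reduction $y=f_T(x)$, the domain split, and the $O(\epsilon)$ bookkeeping matches the paper's argument step for step.
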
%
We analyze the first term by noting that $f_T(x) \triangleq \mathbb{E}_{p_T}[Y|X=x]$ and $f_S(x) \triangleq \mathbb{E}_{p_S}[Y|X=x],$ $\forall x \in \X_T \cap \X_S$.
This expression confirms that our correlation shift term measures shifts in posterior distributions between source and target, as in \cite{ye2021odbench}.
It increases in presence of spurious correlations: \eg on ColoredMNIST \cite{arjovsky2019invariant} where the color/label correlation is reversed at test time.
The second term is caused by support mismatch between source and target.
It was analyzed in \cite{ruan2022optimal} and shown irreducible in their \enquote{No free lunch for learning representations for DG}.
Yet, this term can be tackled if we transpose the analysis in the feature space rather than the input space. This motivates encoding the source and target domains into a shared latent space, \eg by pretraining the encoder on a task with minimal domain-specific information as in \cite{ruan2022optimal}.

This analysis explains why WA fails under correlation shift, as shown on ColoredMNIST in \Cref{app:failure_corr_shift}.
Indeed, combining different models does \textit{not} reduce the bias.
\Cref{subsec:expression_ood_var} explains that WA is however efficient against diversity shift.%
\subsubsection{Variance and diversity shift}
\label{subsec:expression_ood_var}
Variance is known to be large in \ood \cite{d2020underspecification} and to cause a phenomenon named  underspecification, when models behave differently in \ood despite similar test \iid accuracy.
We now relate \ood variance to diversity shift \cite{ye2021odbench} in a simplified setting.
We fix the source dataset $d_S$ (with input support $X_{d_S}$), the target dataset $d_T$ (with input support $X_{d_T}$) and the network's initialization.
We get a closed-form expression for the variance of $f$ over all other sources of randomness under \Cref{ass:infinite_width,ass:inter_sample}.
\begin{assumption}[Kernel regime] $f$ is in the kernel regime \cite{Jacot2018,daniely2017sgd}.%
\label{ass:infinite_width}%
% \vspace{-0.5em}%
\end{assumption}%
This states that $f$ behaves as a Gaussian process (GP); it is reasonable if $f$ is a wide network \cite{Jacot2018,Lee2018DeepNN}.
The corresponding kernel $K$ is the neural tangent kernel (NTK) \cite{Jacot2018} depending only on the initialization.
GPs are useful because their variances have a closed-form expression (\Cref{app:nns_as_gps}).
To simplify the expression of variance, we now make \Cref{ass:inter_sample}.
\begin{assumption}[Constant norm and low intra-sample similarity on $d_S$]
    % $\exists 0\leq \epsilon \ll \lambda_S \text{~s.t.~} \forall (x_S, x_S^\prime \neq x_S) \in X_{d_S}^2,|K(x_S,x_S^\prime)|\leq \epsilon$.%
    $\exists (\lambda_S, \epsilon) \text{~with~}\linebreak 0 \leq \epsilon \ll \lambda_S \text{~such~that~} \forall x_S \in X_{d_S}, K(x_S,x_S)=\lambda_S \text{~and~} \forall x_S^\prime \neq x_S \in X_{d_S},|K(x_S,x_S^\prime)|\leq \epsilon$.%
    \label{ass:inter_sample}%
    % \vspace{-0.5em}%
\end{assumption}%
This states that training samples have the same norm (following standard practice \cite{Lee2018DeepNN,ah2010normalized,ghojogh2021reproducing,rennie2005}) and weakly interact \cite{He2020The,seleznova2022neural}. This assumption is further discussed and relaxed in \Cref{app:discussion_inter_sample}.
We are now in a position to relate variance and diversity shift when~$\epsilon \to 0$.%
\clearpage
\begin{proposition}[\ood variance and diversity shift. Proof in \Cref{app:var_diversity}] \label{prop:var}%
    Given $f$ trained on source dataset $d_S$ (of size $n_S$) with NTK $K$, under \Cref{ass:infinite_width,ass:inter_sample}, the variance on dataset $d_T$ is:%
    \begin{equation} \label{eq:int_var}%
        \mathbb{E}_{x_T\in X_{d_T}}[\varb(x_T)] = \frac{n_S}{2\lambda_S}\text{MMD}^{2}(X_{d_S}, X_{d_T}) + \lambda_T - \frac{n_S}{2\lambda_S}\beta_T + O(\epsilon),%
    \end{equation}%
    where $\text{MMD}$ is the empirical Maximum Mean Discrepancy in the RKHS of $K^2(x,y)=(K(x,y))^2$;$\lambda_T \triangleq \mathbb{E}_{x_T \in X_{d_T}} K\left(x_T, x_T\right)$ and $\beta_T \triangleq \mathbb{E}_{(x_T,x_T^\prime)\in X_{d_T}^2, x_T \neq x_T^\prime} K^2\left(x_T, x_T^{\prime}\right)$ are the empirical mean similarities respectively measured between identical (\wrt $K$) and different (\wrt $K^2$) samples averaged over $X_{d_T}$.%
\end{proposition}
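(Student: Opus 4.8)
The plan is to turn the abstract variance over learning procedures into an explicit kernel quantity using the Gaussian-process picture of \Cref{ass:infinite_width}, and then to massage that quantity into the stated MMD form using \Cref{ass:inter_sample}. Under \Cref{ass:infinite_width}, $f$ behaves as a GP with kernel $K$; conditioning on the (noise-free) source data $d_S$ gives the standard posterior-predictive variance, which I would import from the NNs-as-GPs computation (\Cref{app:nns_as_gps}):
\[
  \varb(x) = K(x,x) - \vk_S(x)^\top K_{SS}^{-1}\, \vk_S(x),
\]
where $\vk_S(x) = \left(K(x,x_S)\right)_{x_S\in X_{d_S}}$ and $K_{SS} = \left(K(x_S,x_S')\right)_{x_S,x_S'\in X_{d_S}}$ is the $n_S\times n_S$ source Gram matrix. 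This single identity is what converts the expectation over all non-initialization randomness into a closed form.

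Next I would invert $K_{SS}$ perturbatively. By \Cref{ass:inter_sample}, $K_{SS} = \lambda_S I + E$ with $E$ having a zero diagonal and entries bounded by $\epsilon$, so a first-order Neumann expansion yields $K_{SS}^{-1} = \tfrac{1}{\lambda_S} I + O(\epsilon)$ entrywise (the diagonal correction is even $O(\epsilon^2)$). Substituting collapses the quadratic form to $\tfrac{1}{\lambda_S}\lVert \vk_S(x)\rVert_2^2 = \tfrac{1}{\lambda_S}\sum_{x_S\in X_{d_S}} K^2(x,x_S)$, and since the datasets and kernel values are fixed while $\epsilon\to 0$, the cross terms dropped are genuinely $O(\epsilon)$. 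Hence, for every target point,
\[
  \varb(x_T) = K(x_T,x_T) - \frac{1}{\lambda_S}\sum_{x_S\in X_{d_S}} K^2(x_T,x_S) + O(\epsilon).
\]

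The last step is bookkeeping to expose the MMD. Averaging over $x_T\in X_{d_T}$, the first term is $\lambda_T$ by definition and the second becomes $\tfrac{1}{\lambda_S n_T}\sum_{x_T,x_S} K^2(x_T,x_S)$. I would then recognise this cross sum as the cross block of the empirical $\mathrm{MMD}^2(X_{d_S},X_{d_T})$ in the RKHS of $K^2$: expanding the (unbiased) MMD, its source--source block is $O(\epsilon^2)$ because $K^2(x_S,x_S')\le\epsilon^2$ off the diagonal (this is exactly what \Cref{ass:inter_sample} is for), its target--target block equals $\beta_T$ by the definition of $\beta_T$, and its cross block contributes precisely $-\tfrac{2}{n_S n_T}\sum_{x_T,x_S}K^2$. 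Multiplying the expansion by $\tfrac{n_S}{2\lambda_S}$ and adding back $\lambda_T - \tfrac{n_S}{2\lambda_S}\beta_T$ then cancels the target--target block and reproduces \Cref{eq:int_var}.

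I expect the main obstacle to be twofold. First, the justification that the variance of $f$ over all non-initialization randomness truly coincides with the kernel posterior variance above is the crux and rests entirely on \Cref{ass:infinite_width} plus the GP derivation in \Cref{app:nns_as_gps}; everything downstream is algebra. Second, on the technical side, one must track the $O(\epsilon)$ remainder uniformly through both the matrix inversion and the three MMD blocks, so that the source self-similarity and the target diagonal corrections are all absorbed into $O(\epsilon)$ rather than surviving as spurious $O(1)$ residuals; this is where keeping $n_S, n_T$ fixed as $\epsilon\to 0$ and using the unbiased MMD estimator matters.
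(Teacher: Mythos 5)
Your proposal is correct and follows essentially the same route as the paper's proof: both start from the GP posterior-variance identity of \Cref{lemma:var_gp}, invert the source Gram matrix perturbatively around $\lambda_S \mathbb{I}_{n_S}$ (the paper via the differential of matrix inversion, you via an equivalent first-order Neumann expansion), and then expand the empirical $\mathrm{MMD}^2$ in the RKHS of $K^2$ into its three blocks, with the source--source block absorbed into $O(\epsilon)$ by \Cref{ass:inter_sample}. The only cosmetic difference is that the paper first treats the $\epsilon=0$ case as a warm-up before the general case, whereas you handle $\epsilon\neq 0$ directly.
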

The MMD empirically estimates shifts in input marginals, \ie between $p_S(X)$ and $p_T(X)$. Our expression of variance is thus similar to the diversity shift formula in \cite{ye2021odbench}: MMD replaces the $L_1$ divergence used in \cite{ye2021odbench}.
The other terms, $\lambda_T$ and $\beta_T$, both involve internal dependencies on the target dataset $d_T$: they are constants \wrt $X_{d_T}$ and do not depend on distribution shifts.
At fixed $d_T$ and under our assumptions, \Cref{eq:int_var} shows that variance on $d_T$ decreases when $X_{d_S}$ and $X_{d_T}$ are closer (for the MMD distance defined by the kernel $K^2$) and increases when they deviate. Intuitively, the further $X_{d_T}$ is from $X_{d_S}$, the less the model's predictions on $X_{d_T}$ are constrained after fitting $d_S$.

This analysis shows that WA reduces the impact of diversity shift as combining $M$ models divides the variance per $M$. This is a strong property achieved \textit{without requiring data from the target domain}.
\subsubsection{Covariance and diversity}
\label{subsec:expression_cov_div}
The covariance term increases when the predictions of $\{f(\cdot, \theta_m)\}_{m=1}^M$ are correlated.
In the worst case where all predictions are identical, covariance equals variance and WA is no longer beneficial.
On the other hand, the lower the covariance, the greater the gain of WA over its members; this is derived by comparing
\Cref{eq:b_v,eq:b_var_cov}, as detailed in \Cref{app:proof_wa_ind}.
It motivates tackling covariance by encouraging members to make different predictions, thus to be functionally diverse.
Diversity is a widely analyzed concept in the ensemble literature \cite{Lakshminarayanan2017}, for which numerous measures have been introduced \cite{kuncheva2003measures,aksela2003comparison,DBLP:journals/corr/abs-1905-00414}.
\reb{In \Cref{sect:dwa}, we aim at decorrelating the learning procedures to increase members' diversity and reduce the covariance term}.

% Deep ensembles \cite{Lakshminarayanan2017} usually seek zero covariance by members' independence, \reb{\eg via explicit adversarial regularization in \cite{rame2021dice}: we implicitly follow the same goal when s in \Cref{sect:dwa}}.
% we follow this objective in \Cref{sect:dwa} by decorrelating the learning procedures.
% For example, \cite{rame2021dice} explicitly sought zero covariance by member's independence in Deep ensembles \cite{Lakshminarayanan2017}.
\subsubsection{Locality and linear mode connectivity}
\label{subsec:expression_loc_lir}
To ensure that WA approximates ENS, the last locality term $O(\Deltab^2)$ constrains the weights to be close.
Yet, the covariance term analyzed in \Cref{subsec:expression_cov_div} is antagonistic, as it motivates functionally diverse models.
Overall, to reduce WA's error in \ood, we thus seek a good trade-off between diversity and locality.
In practice, we consider that the main goal of this locality term is to ensure that the weights are averageable despite the nonlinearities in the NN such that WA's error does not explode.
This is why in \Cref{sect:dwa}, we empirically relax this locality constraint and simply require that the weights are linearly connectable in the loss landscape, as in the linear mode connectivity \cite{Frankle2020}. We empirically verify later in \Cref{fig:home0_samediffruns_net_soup} that the approximation $f_{\text{WA}} \approx f_{\text{ENS}}$ remains valid even in this case.

\section{DiWA: Diverse Weight Averaging}%
\label{sect:dwa}%
\subsection{Motivation: weight averaging from different runs for more diversity}%
\label{subsec:diversitybydifferentruns}%
\paragraph{Limitations of previous WA approaches.}
Our analysis in \Cref{subsec:expression_ood_bias,subsec:expression_ood_var} showed that the bias and the variance terms are mostly fixed by the distribution shifts at hand.
In contrast, the covariance term can be reduced by enforcing diversity across models (\Cref{subsec:expression_cov_div}) obtained from learning procedures $\{l_S^{(m)}\}_{m=1}^M$.
Yet, previous methods \cite{cha2021wad,arpit2021ensemble} only average weights obtained along a single run.
This corresponds to highly correlated procedures sharing the same initialization, hyperparameters, batch orders, data augmentations and noise, that only differ by the number of training steps. The models are thus mostly similar: this does not leverage the full potential of WA.

\paragraph{DiWA.}
Our Diverse Weight Averaging approach seeks to reduce the \ood expected error in \Cref{eq:b_var_cov} by decreasing covariance across predictions: DiWA decorrelates the learning procedures $\{l_S^{(m)}\}_{m=1}^M$.
Our weights are obtained from $M\gg 1$ different runs, with diverse learning procedures: these have different hyperparameters (learning rate, weight decay and dropout probability), batch orders, data augmentations (\eg random crops, horizontal flipping, color jitter, grayscaling), stochastic noise and number of training steps.
Thus, the corresponding models are more diverse on domain $T$ per \cite{gontijolopes2022no} and reduce the impact of variance when $M$ is large.
However, this may break the locality requirement analyzed in \Cref{subsec:expression_loc_lir} if the weights are too distant. Empirically, we show that DiWA works under two conditions: shared initialization and mild hyperparameter ranges.%
\subsection{Approach: shared initialization, mild hyperparameter search and weight selection}%
\label{subsec:sharedinithpws}%
\paragraph{Shared initialization.}
The shared initialization condition follows \cite{Neyshabur2020}: when models are fine-tuned from a shared pretrained model, their weights can be connected along a linear path where error remains low \cite{Frankle2020}.
Following standard practice on DomainBed \cite{gulrajani2021in}, our encoder is pretrained on ImageNet \cite{krizhevsky2012imagenet}; this pretraining is key as it controls the bias (by defining the feature support mismatch, see \Cref{subsec:expression_ood_bias}) and variance (by defining the kernel $K$, see \Cref{remark:mmdk}).
Regarding the classifier initialization, we test two methods.
The first is the random initialization, which may distort the features \cite{kumar2022finetuning}.
The second is Linear Probing (LP) \cite{kumar2022finetuning}: it first learns the classifier (while freezing the encoder) to serve as a shared initialization.
Then, LP fine-tunes the encoder and the classifier together in the $M$ subsequent runs; the locality term is smaller as weights remain closer (see \cite{kumar2022finetuning}).%

\paragraph{Mild hyperparameter search.}
As shown in \Cref{fig:home0_locality_requirement}, extreme hyperparameter ranges lead to weights whose average may perform poorly.
Indeed, weights obtained from extremely different hyperparameters may not be linearly connectable; they may belong to different regions of the loss landscape.
In our experiments, we thus use the mild search space defined in \Cref{tab:hyperparam}, first introduced in SWAD \cite{cha2021wad}.
These hyperparameter ranges induce diverse models that are averageable in weights.

\paragraph{Weight selection.}
The last step of our approach (summarized in \Cref{alg:pseudo-code}) is to choose which weights to average among those available.
We explore two simple weight selection protocols, as in \cite{Wortsman2022ModelSA}.
The first \textit{uniform} equally averages all weights; it is practical but may underperform when some runs are detrimental.
The second \textit{restricted} (\textit{greedy} in \cite{Wortsman2022ModelSA}) solves this drawback by restricting the number of selected weights: weights are ranked in decreasing order of validation accuracy and sequentially added only if they improve DiWA's validation accuracy.

In the following sections, we experimentally validate our theory.
First, \Cref{sect:analysis} confirms our findings on the OfficeHome dataset \cite{venkateswara2017deep} where diversity shift dominates \cite{ye2021odbench} (see \Cref{app:analysis_pacs} for a similar analysis on PACS \cite{li2017deeper}). Then, \Cref{sec:domainbed} shows that DiWA is state of the art on DomainBed~\cite{gulrajani2021in}.%
\begin{figure}
    \vskip -0.1in
    \begin{algorithm}[H]
        \caption{DiWA Pseudo-code} \label{alg:pseudo-code}
        \begin{algorithmic}
            \REQUIRE $\theta_0$ pretrained encoder and initialized classifier; $\{h_m\}_{m=1}^H$ hyperparameter configurations.
            \STATE \hspace*{-1.25em} \textit{\underline{Training}:} $\forall m=1$ to $H$, $\theta_m \triangleq \mathrm{FineTune}(\theta_0, h_m)$
            \STATE \hspace*{-1.25em} \textit{\underline{Weight selection}:}
            \STATE \hspace*{-0.5em} \textit{Uniform:} $\M=\{1, \cdots, H\}$.
            \STATE \hspace*{-0.5em} \textit{Restricted:} Rank $\{\theta_m\}_{m=1}^H$ by decreasing $\mathrm{ValAcc}(\theta_m)$. $\M\leftarrow\emptyset$.
            \FOR{$m=1$ to $H$}
            \STATE \textbf{If} $\mathrm{ValAcc}(\theta_{\M \cup \{m\}})\geq\mathrm{ValAcc}(\theta_\M)$
            \STATE $\M \leftarrow \M\cup \{m\}$
            \ENDFOR
            \STATE \hspace*{-1.25em} \textit{\underline{Inference}}: with $f(\cdot, \theta_\M)$, where $\theta_\M=\sum_{m\in\M}\theta_m/|\M|$.
        \end{algorithmic}
    \end{algorithm}
    \vskip -0.3in
\end{figure}%
\section{Empirical validation of our theoretical insights}
\label{sect:analysis}
We consider several collections of weights $\{\theta_{m}\}_{m=1}^M$ ($2\leq M < 10$) trained on the \enquote{Clipart}, \enquote{Product} and \enquote{Photo} domains from OfficeHome \cite{venkateswara2017deep} with a shared random initialization and mild hyperparameter ranges.
These weights are first indifferently sampled from a single run (every $50$ batches) or from different runs.
They are evaluated on \enquote{Art}, the fourth domain from OfficeHome.%
\begin{figure}
% \centering
\begin{minipage}{.45\textwidth}%
  \includegraphics[width=\linewidth]{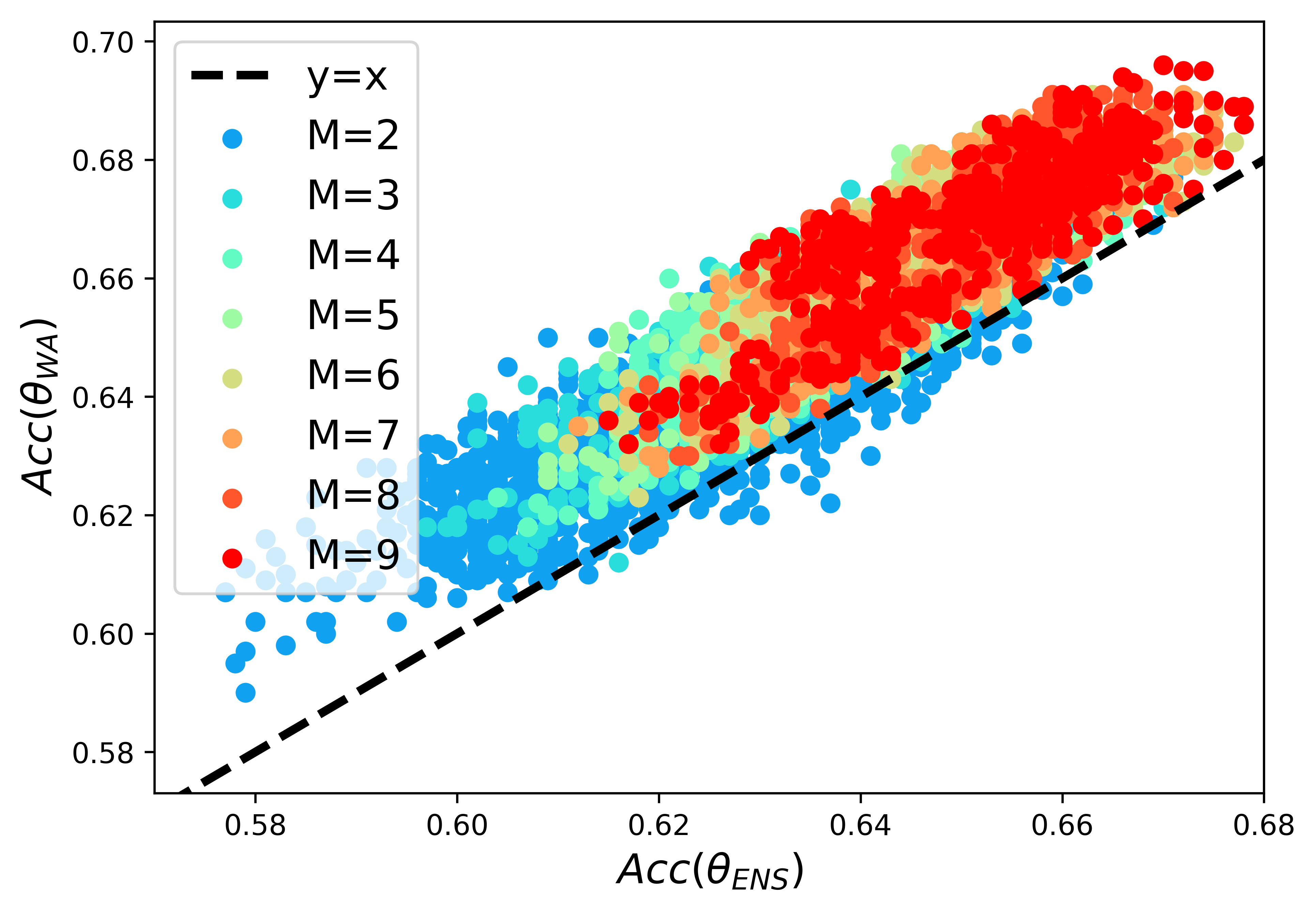}
%   \vspace{-0.2em}%
  \captionof{figure}{Each dot displays the accuracy ($\uparrow$) of weight averaging (WA) \versus accuracy ($\uparrow$) of prediction averaging (ENS) for $M$ models.}
  \label{fig:home0_samediffruns_net_soup}
\end{minipage}%
\hskip 4ex
\begin{minipage}{.45\textwidth}%
  \includegraphics[width=\linewidth]{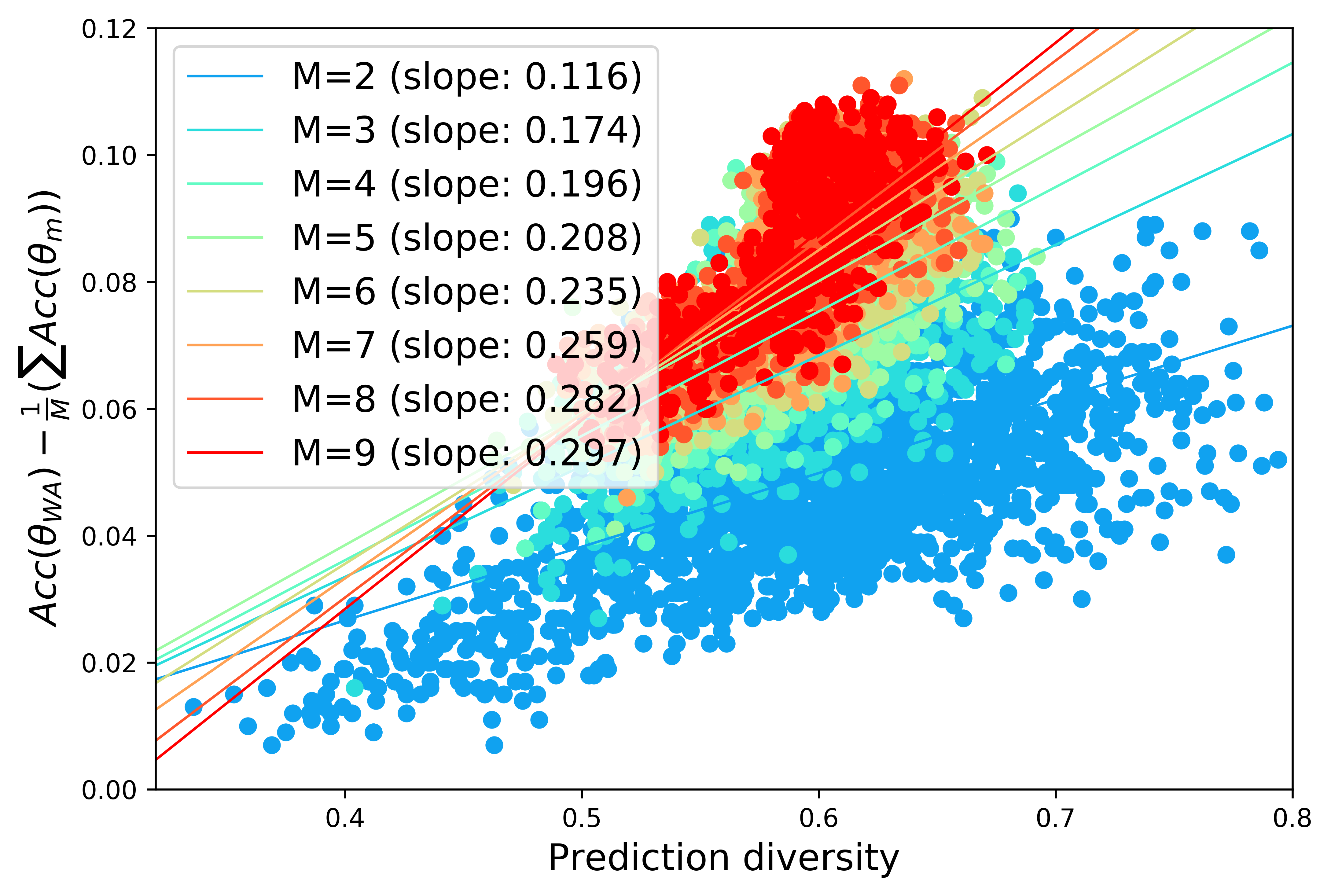}
  \captionof{figure}{Each dot displays the accuracy ($\uparrow$) gain of WA over its members \versus the prediction diversity \cite{aksela2003comparison} ($\uparrow$) for $M$ models.}
  \label{fig:home0_dr_soup-netm}
\end{minipage}
% \vspace{-1.em}%
\end{figure}

\paragraph{WA \versus ENS.}
\Cref{fig:home0_samediffruns_net_soup} validates \Cref{lemma:wa_ensembling} and that \textbf{$f_{\text{WA}} \approx f_{\text{ENS}}$}. 
More precisely, $f_{\text{WA}}$ slightly but consistently improves $f_{\text{ENS}}$: we discuss this in \Cref{app:wa_vs_ens_analysis}.
Moreover, a larger $M$ improves the results; in accordance with \Cref{eq:b_var_cov}, this motivates averaging as many weights as possible.
In contrast, large $M$ is computationally impractical for ENS at test time, requiring $M$ forwards.%

\paragraph{Diversity and accuracy.}
% \vspace{-1em}%
We validate in \Cref{fig:home0_dr_soup-netm} that $f_{\text{WA}}$ benefits from diversity.
Here, we measure diversity with the ratio-error \cite{aksela2003comparison}, \ie the ratio $N_{\text{diff}}/N_{\text{simul}}$ between the number of different errors $N_{\text{diff}}$ and of simultaneous errors $N_{\text{simul}}$ in test for a pair in $\{f\left(\cdot, \theta_{m}\right)\}_{m=1}^M$.
A higher average over the $\binom{M}{2}$ pairs means that members are less likely to err on the same inputs.
Specifically, the gain of $\operatorname{Acc}(\theta_{\text{WA}})$ over the mean individual accuracy $\frac{1}{M}\sum_{m=1}^M\operatorname{Acc}\left(\theta_m\right)$ increases with diversity.
Moreover, this phenomenon intensifies for larger $M$: the linear regression's slope (\ie the accuracy gain per unit of diversity) increases with $M$. This is consistent with the $(M-1)/M$ factor of $\covb\left(x\right)$ in \Cref{eq:b_var_cov}, as further highlighted in \Cref{app:soup:m_slope}.
Finally, in \Cref{app:analysis_office_features}, we show that the conclusion also holds with CKAC \cite{DBLP:journals/corr/abs-1905-00414}, another established diversity measure.

\paragraph{Increasing diversity thus accuracy via different runs.}
Now we investigate the difference between sampling the weights from a single run or from different runs.
\Cref{fig:home0_dr_frequency} \textit{first} shows that diversity increases when weights come from different runs.
\textit{Second}, in \Cref{fig:home0_m_vs_acc}, this is reflected on the accuracies in \ood.
Here, we rank by validation accuracy the $60$ weights obtained (1) from $60$ different runs and (2) along $1$ well-performing run.
We then consider the WA of the top $M$ weights as $M$ increases from $1$ to $60$.
Both have initially the same performance and improve with $M$; yet,
% because weights from different runs are more diverse, their WA performs better.
WA of weights from different runs gradually outperforms the single-run WA.
\textit{Finally}, \Cref{fig:home0_locality_requirement} shows that this holds only for mild hyperparameter ranges and with a shared initialization.
Otherwise, when hyperparameter distributions are extreme (as defined in \Cref{tab:hyperparam}) or when classifiers are not similarly initialized, DiWA may perform worse than its members due to a violation of the locality condition.
These experiments confirm that \textit{diversity is key as long as the weights remain averageable}.%
% \begin{figure}[h]%
%     % \vspace{-3em}%
%     \centering%
%     \begin{subfigure}{.33\textwidth}%
%         \centering%
%         \includegraphics[width=0.9\textwidth]{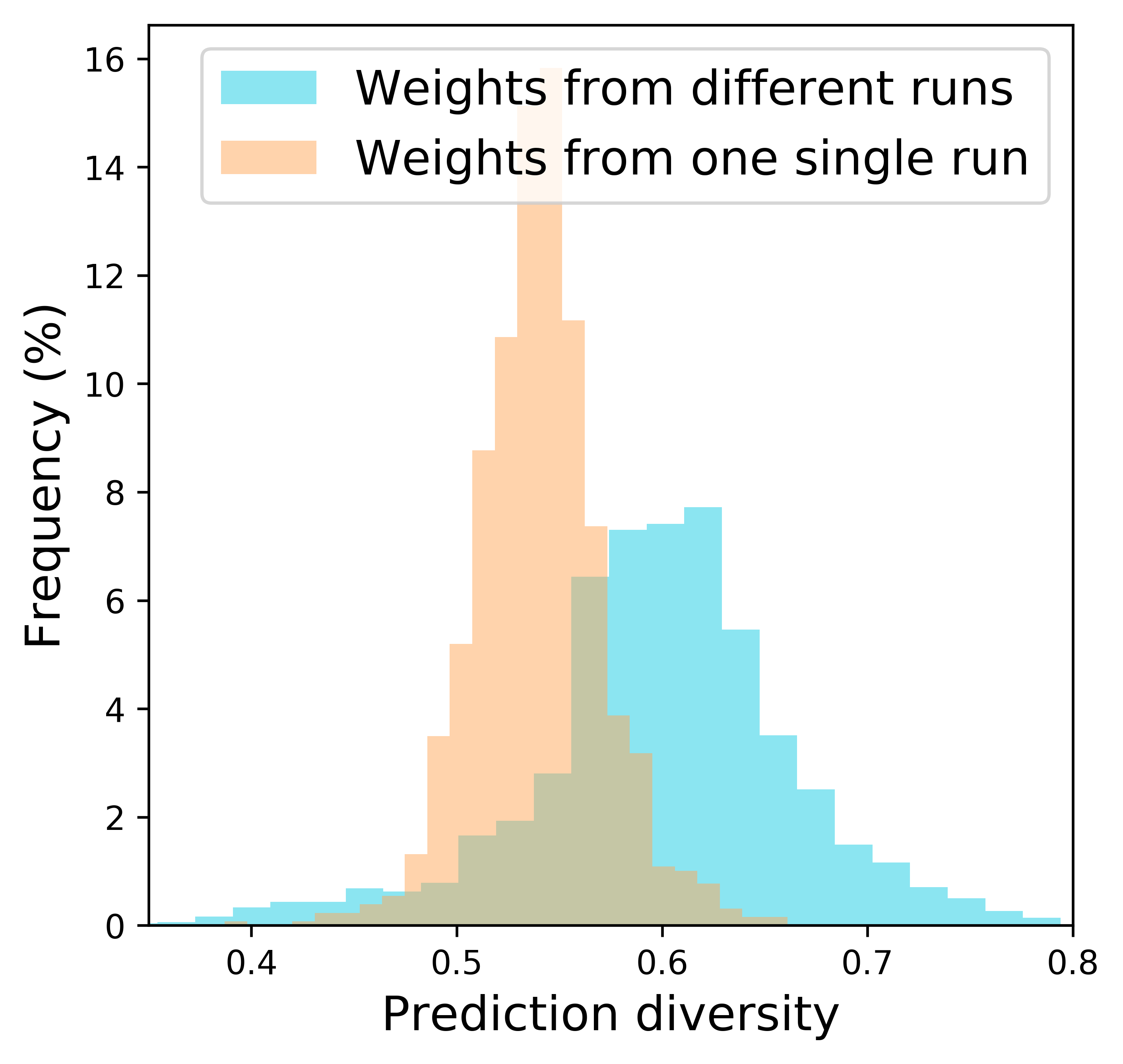}%
%         \caption{Diversity ($\uparrow$) across weights.}%
%         \label{fig:home0_dr_frequency}
%     \end{subfigure}%
%     \begin{subfigure}{.33\textwidth}%
%         \centering%
%         \includegraphics[width=0.9\textwidth]{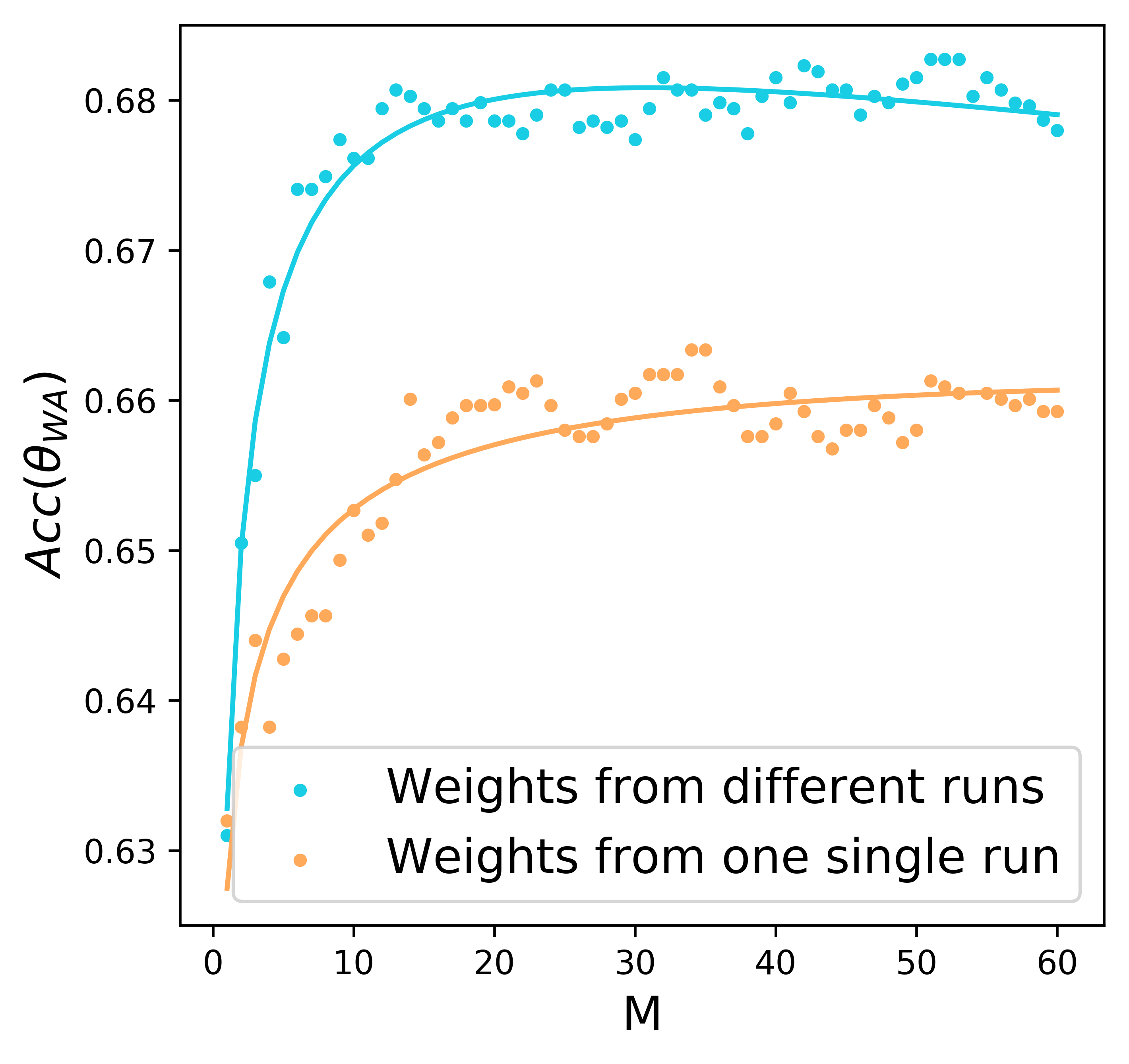}%
%         \caption{WA accuracy ($\uparrow$) as $M$ grows.}%
%         \label{fig:home0_m_vs_acc}%
%     \end{subfigure}%
%     \begin{subfigure}{.33\textwidth}%
%         \centering%
%         \includegraphics[width=0.9\textwidth]{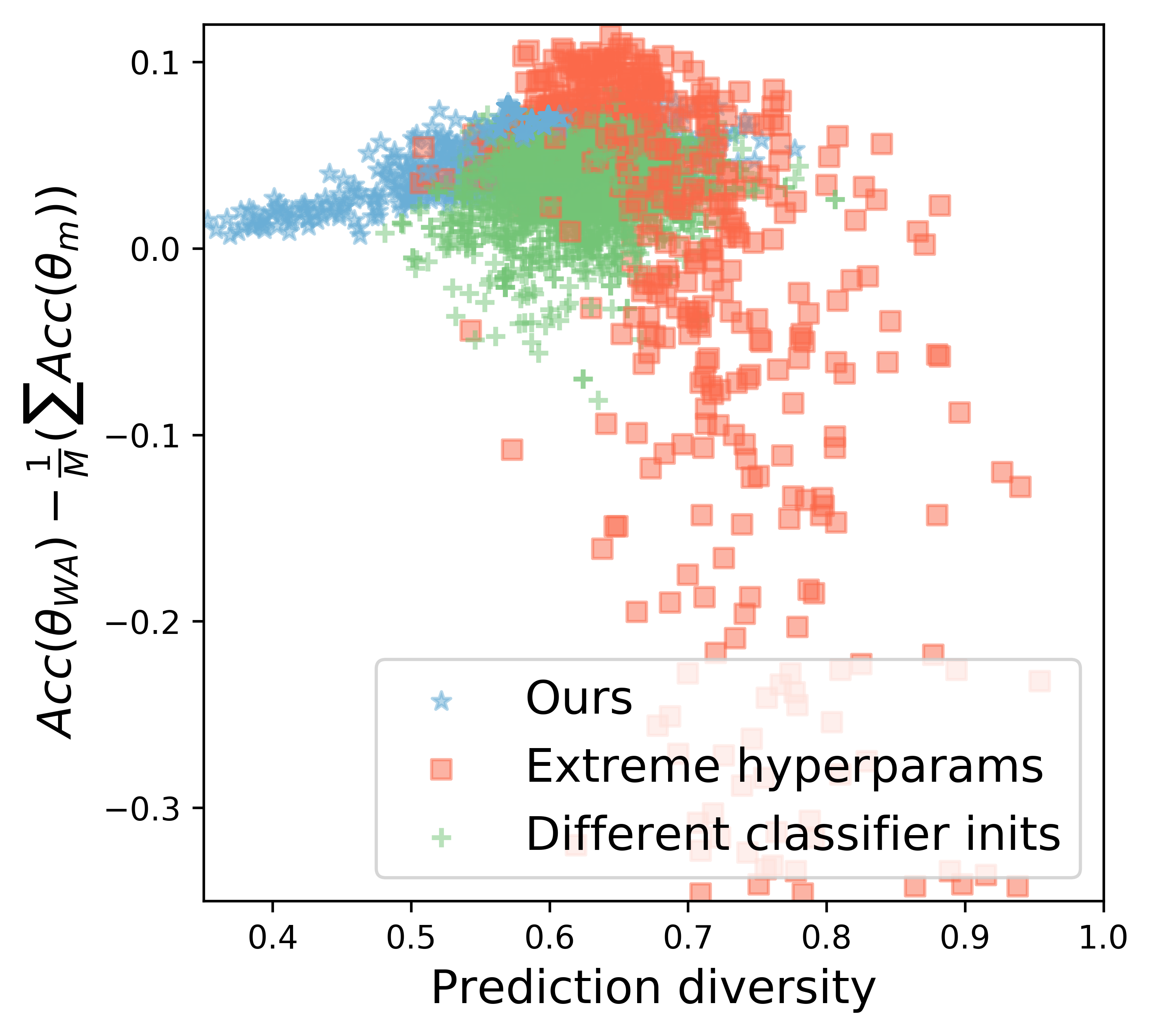}%
%         \caption{Averaging ability.}%conditions ça ne me parait pas plus clair
%         \label{fig:home0_locality_requirement}%
%     \end{subfigure}%
%     \caption{Weights sampled from different runs are more diverse than those from a single run, and perform better after averaging, when they have close hyperparameters and shared initialization.}%
%     \label{fig:home0_dwa_vs_wa}%
%     \vspace{-1em}%
% \end{figure}%
\begin{figure}[b]
% \vspace{-0.5em}%
\begin{minipage}{.32\textwidth}%
    \includegraphics[width=1.0\textwidth]{pictures/files/final/home0_hist_dr.png}%
    \captionof{figure}{Frequencies of prediction diversities ($\uparrow$) \cite{aksela2003comparison} across $2$ weights obtained along a single run or from different runs.}%
    \label{fig:home0_dr_frequency}
\end{minipage}%
\hskip 1.5ex
\begin{minipage}{.32\textwidth}%
    \includegraphics[width=1.0\textwidth]{pictures/files/final/home0_samediffruns_m_soup_ood.png}%
    \captionof{figure}{WA accuracy ($\uparrow$) as $M$ increases, when the $M$ weights are obtained along a single run or from different runs.}%
    \label{fig:home0_m_vs_acc}%
\end{minipage}%
\hskip 1.5ex
\begin{minipage}{.32\textwidth}%
    \includegraphics[width=1.0\textwidth]{pictures/files/final/home0_diffrunslargehpnosh_dr_soup-netm.png}%
    \captionof{figure}{Each dot displays the accuracy ($\uparrow$) gain of WA over its members \versus prediction diversity ($\uparrow$) for $2\leq M < 10$ models.}
    \label{fig:home0_locality_requirement}%
\end{minipage}%
\vspace{-1.em}%
\end{figure}
\clearpage
\section{Experimental results on the DomainBed benchmark}%
\label{sec:domainbed}%%
% \subsection{Benchmark and experimental details}%
% \label{subsec:implem_domainbed}%
\textbf{Datasets.}
We now present our evaluation on DomainBed \cite{gulrajani2021in}.
By imposing the code, the training procedures and the ResNet50 \cite{he51deep} architecture, DomainBed is arguably the fairest benchmark for \ood generalization.
It includes $5$ multi-domain real-world datasets: PACS \cite{li2017deeper}, VLCS \cite{fang2013unbiased}, OfficeHome \cite{venkateswara2017deep}, TerraIncognita \cite{beery2018recognition} and DomainNet \cite{peng2019moment}. \cite{ye2021odbench} showed that \textit{diversity shift dominates in these datasets}. Each domain is successively considered as the target $T$ while other domains are merged into the source $S$. The validation dataset is sampled from $S$, \ie we follow DomainBed's training-domain model selection.
The experimental setup is further described in \Cref{app:domainbed_description}.
Our code is available at \url{https://github.com/alexrame/diwa}.%will be available anonymously at \url{https://anonymous.4open.science/r/DiWA-anonymous-9884/}.%

\textbf{Baselines.}
ERM is the standard Empirical Risk Minimization.
Coral \cite{coral216aaai} is the best approach based on domain invariance.
SWAD (Stochastic Weight Averaging Densely) \cite{cha2021wad} and MA (Moving Average) \cite{arpit2021ensemble} average weights along one training trajectory but differ in their weight selection strategy.
SWAD \cite{cha2021wad} is the current state of the art (SoTA) thanks to it \enquote{overfit-aware} strategy, yet at the cost of three additional hyperparameters (a patient parameter, an overfitting patient parameter and a tolerance rate) tuned per dataset.
In contrast, MA \cite{arpit2021ensemble} is easy to implement as it simply combines all checkpoints uniformly starting from batch $100$ until the end of training.
\reb{Finally, we report the scores obtained in \cite{arpit2021ensemble} for the costly Deep Ensembles (DENS) \cite{Lakshminarayanan2017} (with different initializations)}: we discuss other ensembling strategies in~\Cref{app:wa_vs_ens_analysis}.
% For the sake of fairness, we only report methods with no ;%

\textbf{Our runs.}
ERM and DiWA share the same training protocol in DomainBed: yet, instead of keeping only one run from the grid-search, DiWA leverages $M$ runs.
In practice, we sample $20$ configurations from the hyperparameter distributions detailed in \Cref{tab:hyperparam} and report the mean and standard deviation across $3$ data splits.
For each run, we select the weights of the epoch with the highest validation accuracy.
ERM and MA select the model with highest validation accuracy across the $20$ runs, following standard practice on DomainBed.
\reb{Ensembling (ENS) averages the predictions of all $M=20$ models (with shared initialization).}
DiWA-restricted selects $1 \leq M \leq 20 $ weights with \Cref{alg:pseudo-code} while DiWA-uniform averages all $M=20$ weights.
DiWA$^{\dagger}$ averages uniformly the $M=3\times20=60$ weights from all $3$ data splits.
DiWA$^{\dagger}$ benefits from larger $M$ (without additional inference cost) and from data diversity (see \Cref{app:more_diversity_comparison}).
However, we cannot report standard deviations for DiWA$^{\dagger}$ for computational reasons. Moreover, DiWA$^{\dagger}$ cannot leverage the restricted weight selection, as the validation is not shared across all $60$ weights that have different data splits.

\subsection{Results on DomainBed}
\label{subsec:domain_bed}
We report our \textbf{main results} in \Cref{table:db_all_training}, detailed per domain in \Cref{app:full_results}.
With a randomly initialized classifier, DiWA$^\dagger$-uniform is the best on PACS, VLCS and OfficeHome:
DiWA-uniform is the second best on PACS and OfficeHome.
On TerraIncognita and DomainNet, DiWA is penalized by some bad runs, filtered in DiWA-restricted which improves results on these datasets.
Classifier initialization with linear probing (LP) \cite{kumar2022finetuning} improves all methods on OfficeHome, TerraIncognita and DomainNet.
On these datasets, DiWA$^\dagger$ increases MA by $1.3$, $0.5$ and $1.1$ points respectively.
After averaging, DiWA$^\dagger$ with LP \textit{establishes a new SoTA of $68.0\%$}, improving SWAD by $1.1$ points.%
\begin{table}[b]%
    \vspace{-1.5em}%
    \caption{\textbf{Accuracy ($\%, \uparrow$) on DomainBed with ResNet50} (best in \textbf{bold} and second best \underline{underlined}).}%
    \centering
    \vspace{0.2em}
    \adjustbox{width=\textwidth}{
        \begin{tabular}{llll|cccccc}
            \toprule
             & \textbf{Algorithm}                                & \textbf{Weight selection} & \textbf{Init}                                  & \textbf{PACS}  & \textbf{VLCS}              & \textbf{OfficeHome}        & \textbf{TerraInc}          & \textbf{DomainNet}      & \textbf{Avg}     \\
            \midrule
             & ERM                                               & N/A                       & \multirow{5}{*}{Random}                        & 85.5 $\pm$ 0.2 & 77.5 $\pm$ 0.4             & 66.5 $\pm$ 0.3             & 46.1 $\pm$ 1.8             & 40.9 $\pm$ 0.1          & 63.3             \\
             & Coral \cite{coral216aaai}                         & N/A                       &                                                & 86.2 $\pm$ 0.3 & 78.8 $\pm$ 0.6             & 68.7 $\pm$ 0.3             & 47.6 $\pm$ 1.0             & 41.5 $\pm$ 0.1          & 64.6             \\
             & SWAD \cite{cha2021wad}                            & Overfit-aware             &                                                & 88.1 $\pm$ 0.1 & 79.1 $\pm$ 0.1             & 70.6 $\pm$ 0.2             & 50.0 $\pm$ 0.3             & 46.5 $\pm$ 0.1          & 66.9             \\
             & MA \cite{arpit2021ensemble}                       & Uniform                   &                                                & 87.5 $\pm$ 0.2 & 78.2 $\pm$ 0.2             & 70.6 $\pm$ 0.1             & 50.3 $\pm$ 0.5             & 46.0 $\pm$ 0.1          & 66.5             \\
            & \reb{DENS \cite{Lakshminarayanan2017,arpit2021ensemble}} & \reb{Uniform: $M=6$}            &                                                & \reb{87.6}           & \reb{78.5}                       & \reb{70.8}                       & \reb{49.2}                       & \reb{\textbf{47.7}}           & \reb{66.8}             \\
            \midrule
            \multirow{12}{*}{\begin{turn}{90} Our runs \end{turn}}
             & ERM                                               & N/A                       & \multirow{6}{*}{Random}                        & 85.5 $\pm$ 0.5 & 77.6 $\pm$ 0.2             & 67.4 $\pm$ 0.6             & 48.3 $\pm$ 0.8             & 44.1 $\pm$ 0.1          & 64.6             \\
             & MA \cite{arpit2021ensemble}                       & Uniform                   &                                                & 87.9 $\pm$ 0.1 & 78.4 $\pm$ 0.1             & 70.3 $\pm$ 0.1             & 49.9 $\pm$ 0.2             & 46.4 $\pm$ 0.1          & 66.6             \\
            & \reb{ENS}                                               & \reb{Uniform: $M=20$}           &                                                & \reb{88.0 $\pm$ 0.1} & \reb{78.7 $\pm$ 0.1}             & \reb{70.5 $\pm$ 0.1}             & \reb{51.0 $\pm$ 0.5}             & \reb{47.4 $\pm$ 0.2}          & \reb{67.1}             \\
             & DiWA                                              & Restricted: $M \leq 20$   &                                                & 87.9 $\pm$ 0.2 & \underline{79.2} $\pm$ 0.1 & 70.5 $\pm$ 0.1             & 50.5 $\pm$ 0.5             & 46.7 $\pm$ 0.1          & 67.0             \\
             & DiWA                                              & Uniform: $M=20$           &                                                & 88.8 $\pm$ 0.4 & 79.1 $\pm$ 0.2             & 71.0 $\pm$ 0.1             & 48.9 $\pm$ 0.5             & 46.1 $\pm$ 0.1          & 66.8             \\
             & DiWA$^{\dagger}$                                  & Uniform: $M=60$           &                                                & \textbf{89.0}  & \textbf{79.4}              & 71.6                       & 49.0                       & 46.3                    & 67.1             \\
            \cmidrule{2-10}
             & ERM                                               & N/A                       & \multirow{6}{*}{LP \cite{kumar2022finetuning}} & 85.9 $\pm$ 0.6 & 78.1 $\pm$ 0.5             & 69.4 $\pm$ 0.2             & 50.4 $\pm$ 1.8             & 44.3 $\pm$ 0.2          & 65.6             \\
             & MA \cite{arpit2021ensemble}                       & Uniform                   &                                                & 87.8 $\pm$ 0.3 & 78.5 $\pm$ 0.4             & 71.5 $\pm$ 0.3             & 51.4 $\pm$ 0.6             & 46.6 $\pm$ 0.0          & 67.1             \\
             & \reb{ENS}                                               & \reb{Uniform: $M=20$}           &                                                & \reb{88.1 $\pm$ 0.3} & \reb{78.5 $\pm$ 0.1}             & \reb{71.7 $\pm$ 0.1}             & \reb{50.8 $\pm$ 0.5}             & \reb{47.0 $\pm$ 0.2}          & \reb{67.2}             \\
             & DiWA                                              & Restricted: $M \leq 20$   &                                                & 88.0 $\pm$ 0.3 & 78.5 $\pm$ 0.1             & 71.5 $\pm$ 0.2             & \underline{51.6} $\pm$ 0.9 & \textbf{47.7} $\pm$ 0.1 & 67.5             \\
             & DiWA                                              & Uniform: $M=20$           &                                                & 88.7 $\pm$ 0.2 & 78.4 $\pm$ 0.2             & \underline{72.1} $\pm$ 0.2 & 51.4 $\pm$ 0.6             & 47.4 $\pm$ 0.2          & \underline{67.6} \\
             & DiWA$^{\dagger}$                                  & Uniform: $M=60$           &                                                & \textbf{89.0}  & 78.6                       & \textbf{72.8}              & \textbf{51.9}              & \textbf{47.7}           & \textbf{68.0}    \\
            \bottomrule%
        \end{tabular}%
    }%
    \label{table:db_all_training}%
\end{table}%
\clearpage
\begin{wraptable}[9]{hR!}{0.45\textwidth}%
%   \vspace{1em}%
  \caption{\textbf{Accuracy ($\%, \uparrow$) on OfficeHome} domain \enquote{Art} with various objectives.}%
  \centering%
  \adjustbox{width=1.0\linewidth}{%
    \begin{tabular}{lcccc}%
      \toprule
      \textbf{Algorithm} & \textbf{No WA}               & \textbf{MA} & \textbf{DiWA}       & \textbf{DiWA$^{\dagger}$} \\
      \midrule
      ERM                & 62.9 $\pm$ 1.3             & \underline{65.0} $\pm$ 0.2           & 67.3 $\pm$ 0.2             & 67.7                             \\
      Mixup              & \underline{63.1} $\pm$ 0.7 & \textbf{66.2} $\pm$ 0.3              & 67.8 $\pm$ 0.6             & 68.4                             \\
      Coral              & \textbf{64.4} $\pm$ 0.4    & 64.4 $\pm$ 0.4                       & 67.7 $\pm$ 0.2             & 68.2                             \\
      ERM/Mixup          & N/A                        & N/A                                  & 67.9 $\pm$ 0.7             & \underline{68.9}                 \\
      ERM/Coral          & N/A                        & N/A                                  & \underline{68.1} $\pm$ 0.3 & 68.7                             \\
      ERM/Mixup/Coral    & N/A                        & N/A                                  & \textbf{68.4} $\pm$ 0.4    & \textbf{69.1}                    \\
      \bottomrule%
    \end{tabular}}%
  \label{table:db_home0_training}%
\end{wraptable}%%
\textbf{DiWA with different objectives.}
So far we used ERM that does not leverage the domain information.
\Cref{table:db_home0_training} shows that DiWA-uniform benefits from averaging weights trained with Interdomain Mixup \cite{yan2020improve} and Coral \cite{coral216aaai}: accuracy gradually improves as we add more objectives.
Indeed, as highlighted in \Cref{app:more_diversity_comparison}, DiWA benefits from the increased diversity brought by the various objectives.
This suggests a new kind of linear connectivity across models trained with different objectives; the full analysis of this is left for future work.%
\vspace{-0.2em}%
\subsection{Limitations of DiWA}
\label{subsec:limitations}
Despite this success, DiWA has some limitations.
\textit{First}, DiWA cannot benefit from additional diversity that would break the linear connectivity between weights --- as discussed in \Cref{app:wa_vs_ens_analysis}.
\textit{Second}, DiWA (like all WA approaches) can tackle diversity shift but not correlation shift: this property is explained for the first time in \Cref{subsec:analysisbvc} and illustrated in \Cref{app:failure_corr_shift} on ColoredMNIST.%
\vspace{-0.2em}%
\section{Related work}%
\label{sec:related_work}
\textbf{Generalization and ensemble.}
To generalize under distribution shifts, invariant approaches \cite{arjovsky2019invariant,krueger2020utofdistribution,rame_fishr_2021,coral216aaai,Sagawa2020Distributionally,ganin2016domain} try to detect the causal mechanism rather than memorize correlations: yet, they do not outperform ERM on various benchmarks \cite{gulrajani2021in,ye2021odbench,pmlr-v139-koh21a}.
In contrast, ensembling of deep networks \cite{Lakshminarayanan2017,hansen1990neural,krogh1995neural} consistently increases robustness \cite{Ovadia2019} and was successfully applied to domain generalization \cite{arpit2021ensemble,thopalli2021multidomain,Mesbah2022,li2022domain,lee2022diversify,pagliardini2022agree}.
As highlighted in \cite{ueda1996generalization} (whose analysis underlies our \Cref{eq:b_var_cov}), ensembling works due to the diversity among its members.
This diversity comes primarily from the randomness of the learning procedure \cite{Lakshminarayanan2017} and can be increased with different hyperparameters \cite{wenzel2020hyperparameter}, data \cite{breiman1996bagging,nixon2020why,Yeo2021}, augmentations \cite{wen2021combining,rame2021ixmo} or with regularizations \cite{lee2022diversify,pagliardini2022agree,rame2021dice,teneydiv}.%

\textbf{Weight averaging.}
Recent works \cite{izmailov2018,Draxler2018,Guo2022,zhang2019lookahead} combine in weights (rather than in predictions) models collected along a single run. This was shown suboptimal in \iid \cite{ashukha2020pitfalls} but successful in \ood \cite{cha2021wad,arpit2021ensemble}.
Following the linear mode connectivity \cite{Frankle2020,nagarajan2019uniform} and the property that many independent models are connectable \cite{Benton2021}, a second group of works average weights with fewer constraints  \cite{Wortsman2022robust,mergefisher21,Wortsman2022ModelSA,Gupta2020Stochastic,choshen2022fusing,wortsman2021learning}.
\reb{To induce greater diversity, \cite{maddox2019simple} used a high constant learning rate; \cite{Benton2021} explicitly encouraged the weights to encompass more volume in the weight space; \cite{wortsman2021learning} minimized cosine similarity between weights; \cite{izmailov_subspace_2019} used a tempered posterior.}
\reb{From a loss landscape perspective \cite{Fort2019DeepEA}, these methods aimed at \enquote{explor[ing] the set of possible solutions instead of simply converging to a single point}, as stated in \cite{maddox2019simple}}.
% ,izmailov_subspace_2019
% \reb{This diversity was probabilistically motivated in \cite{izmailov_subspace_2019} to prevent \enquote{premature concentration} of the \enquote{posterior [which] can overly concentrate around the maximum likelihood [...] leading to overconfident uncertainty estimates}.}
The recent \enquote{Model soups} introduced by Wortsman \textit{et al.} \cite{Wortsman2022ModelSA} is a WA algorithm similar to \Cref{alg:pseudo-code}; yet, the theoretical analysis and the goals of these two works are different.
% they demonstrate the robustness of \enquote{Model soups} out-of-distribution by evaluating it on several ImageNet variants with distribution shift.
Theoretically, we explain why WA succeeds under diversity shift: the bias/correlation shift, variance/diversity shift and diversity-based findings are novel and are confirmed empirically.
Regarding the motivation, our work aims at combining more diverse weights: it may be analyzed as a general framework to average weights obtained in various ways. In contrast, \cite{Wortsman2022ModelSA} challenges the standard model selection after a grid search.
Regarding the task, \cite{Wortsman2022ModelSA} and our work complement each other: while \cite{Wortsman2022ModelSA} demonstrate robustness on several ImageNet variants with distribution shift, we improve the SoTA on the multi-domain DomainBed benchmark against other established \ood methods after a thorough and fair comparison. Thus, DiWA and \cite{Wortsman2022ModelSA} are theoretically complementary with different motivations and applied successfully for different tasks.
% Old
% The recent \enquote{Model soups} by Wortsman \textit{et al.} \cite{Wortsman2022ModelSA} developed a WA algorithm similar to \Cref{alg:pseudo-code}.
% However the task, the theoretical analysis and most importantly the goals of these two works are different.
% Regarding the task, we focused on \ood and the multi-domain DomainBed benchmark.
% Theoretically, we explain why WA succeeds under diversity shift. The bias/correlation shift, variance/diversity shift and diversity-based findings are novel; they are confirmed empirically.
% DiWA aims at combining weights from diverse models: our work may be analyzed as a general framework to average in weights models which can be obtained in various ways, as traditionally done in ensembling.
% In contrast, \cite{Wortsman2022ModelSA} challenges the standard model selection after a grid search. Thus, DiWA and \cite{Wortsman2022ModelSA} are theoretically complementary and applied successfully in different contexts.
\section{Conclusion}
In this paper, we propose a new explanation for the success of WA in \ood by leveraging its ensembling nature.
Our analysis is based on a new bias-variance-covariance-locality decomposition for WA, where we theoretically relate bias to correlation shift and variance to diversity shift. It also shows that diversity is key to improve generalization.
This motivates our DiWA approach that averages in weights models trained independently.
DiWA improves the state of the art on DomainBed, the reference benchmark for \ood generalization.
Critically, DiWA has no additional inference cost --- removing a key limitation of standard ensembling.
Our work may encourage the community to further create diverse learning procedures and objectives --- whose models may be averaged in weights.

\subsection*{Acknowledgements}
We would like to thank Jean-Yves Franceschi for his helpful comments and discussions on our paper.
This work was granted access to the HPC resources of IDRIS under the allocation AD011011953 made by GENCI. We acknowledge the financial support by the French National Research Agency (ANR) in the chair VISA-DEEP (project number ANR-20-CHIA-0022-01) and the ANR projects DL4CLIM ANR-19-CHIA-0018-01, RAIMO ANR-20-CHIA-0021-01, OATMIL ANR-17-CE23-0012 and LEAUDS ANR-18-CE23-0020.

\bibliographystyle{unsrt}
\bibliography{ref}

\begin{thebibliography}{100}

\bibitem{Zech2018}
John~R. Zech, Marcus~A. Badgeley, Manway Liu, Anthony~B. Costa, Joseph~J.
  Titano, and Eric~Karl Oermann.
\newblock Variable generalization performance of a deep learning model to
  detect pneumonia in chest radiographs: A cross-sectional study.
\newblock {\em PLOS Medicine}, 2018.

\bibitem{degrave2021ai}
Alex~J DeGrave, Joseph~D Janizek, and Su-In Lee.
\newblock {AI} for radiographic {COVID}-19 detection selects shortcuts over
  signal.
\newblock {\em Nature Machine Intelligence}, 2021.

\bibitem{hendrycks2018benchmarking}
Dan Hendrycks and Thomas Dietterich.
\newblock Benchmarking neural network robustness to common corruptions and
  perturbations.
\newblock In {\em ICLR}, 2019.

\bibitem{NEURIPS2020_6cfe0e61}
Harshay Shah, Kaustav Tamuly, Aditi Raghunathan, Prateek Jain, and Praneeth
  Netrapalli.
\newblock The pitfalls of simplicity bias in neural networks.
\newblock In {\em NeurIPS}, 2020.

\bibitem{d2020underspecification}
Alexander D'Amour, Katherine Heller, Dan Moldovan, Ben Adlam, Babak Alipanahi,
  Alex Beutel, Christina Chen, Jonathan Deaton, Jacob Eisenstein, Matthew~D
  Hoffman, et~al.
\newblock Underspecification presents challenges for credibility in modern
  machine learning.
\newblock {\em JMLR}, 2020.

\bibitem{muandet2013domain}
Krikamol Muandet, David Balduzzi, and Bernhard Sch{\"o}lkopf.
\newblock Domain generalization via invariant feature representation.
\newblock In {\em ICML}, 2013.

\bibitem{peters2016causal}
Jonas Peters, Peter B{\"u}hlmann, and Nicolai Meinshausen.
\newblock Causal inference by using invariant prediction: identification and
  confidence intervals.
\newblock {\em JSTOR}, 2016.

\bibitem{arjovsky2019invariant}
Martin Arjovsky, Léon Bottou, Ishaan Gulrajani, and David Lopez-Paz.
\newblock Invariant risk minimization.
\newblock {\em arXiv preprint}, 2019.

\bibitem{krueger2020utofdistribution}
David Krueger, Ethan Caballero, Joern-Henrik Jacobsen, Amy Zhang, Jonathan
  Binas, Dinghuai Zhang, Remi~Le Priol, and Aaron Courville.
\newblock Out-of-distribution generalization via risk extrapolation (rex).
\newblock In {\em ICML}, 2021.

\bibitem{coral216aaai}
Baochen Sun, Jiashi Feng, and Kate Saenko.
\newblock Return of frustratingly easy domain adaptation.
\newblock In {\em AAAI}, 2016.

\bibitem{rame_fishr_2021}
Alexandre Rame, Corentin Dancette, and Matthieu Cord.
\newblock Fishr: Invariant gradient variances for out-of-distribution
  generalization.
\newblock In {\em ICML}, 2022.

\bibitem{gulrajani2021in}
Ishaan Gulrajani and David Lopez-Paz.
\newblock In search of lost domain generalization.
\newblock In {\em ICLR}, 2021.

\bibitem{izmailov2018}
Pavel Izmailov, Dmitrii Podoprikhin, Timur Garipov, Dmitry Vetrov, and {Andrew
  Gordon} Wilson.
\newblock Averaging weights leads to wider optima and better generalization.
\newblock In {\em UAI}, 2018.

\bibitem{cha2021wad}
Junbum Cha, Sanghyuk Chun, Kyungjae Lee, Han-Cheol Cho, Seunghyun Park, Yunsung
  Lee, and Sungrae Park.
\newblock {SWAD}: Domain generalization by seeking flat minima.
\newblock In {\em NeurIPS}, 2021.

\bibitem{Lakshminarayanan2017}
Balaji Lakshminarayanan, Alexander Pritzel, and Charles Blundell.
\newblock Simple and scalable predictive uncertainty estimation using deep
  ensembles.
\newblock In {\em NeurIPS}, 2017.

\bibitem{Ovadia2019}
Yaniv Ovadia, Emily Fertig, Jie Ren, Zachary Nado, David Sculley, Sebastian
  Nowozin, Joshua Dillon, Balaji Lakshminarayanan, and Jasper Snoek.
\newblock Can you trust your model's uncertainty? evaluating predictive
  uncertainty under dataset shift.
\newblock In {\em NeurIPS}, 2019.

\bibitem{ashukha2020pitfalls}
Arsenii Ashukha, Alexander Lyzhov, Dmitry Molchanov, and Dmitry Vetrov.
\newblock Pitfalls of in-domain uncertainty estimation and ensembling in deep
  learning.
\newblock In {\em ICLR}, 2020.

\bibitem{ueda1996generalization}
Naonori Ueda and Ryohei Nakano.
\newblock Generalization error of ensemble estimators.
\newblock In {\em ICNN}, 1996.

\bibitem{ye2021odbench}
Nanyang Ye, Kaican Li, Lanqing Hong, Haoyue Bai, Yiting Chen, Fengwei Zhou, and
  Zhenguo Li.
\newblock Ood-bench: Benchmarking and understanding out-of-distribution
  generalization datasets and algorithms.
\newblock {\em CVPR}, 2022.

\bibitem{Fort2019DeepEA}
Stanislav Fort, Huiyi Hu, and Balaji Lakshminarayanan.
\newblock Deep ensembles: A loss landscape perspective.
\newblock {\em arXiv preprint}, 2019.

\bibitem{gontijolopes2022no}
Raphael Gontijo-Lopes, Yann Dauphin, and Ekin~Dogus Cubuk.
\newblock No one representation to rule them all: Overlapping features of
  training methods.
\newblock In {\em ICLR}, 2022.

\bibitem{batchnorm2015}
Sergey Ioffe and Christian Szegedy.
\newblock Batch normalization: Accelerating deep network training by reducing
  internal covariate shift.
\newblock In {\em ICML}, 2015.

\bibitem{agarap2018deep}
Abien~Fred Agarap.
\newblock Deep learning using rectified linear units (relu).
\newblock {\em arXiv preprint}, 2018.

\bibitem{Frankle2020}
Jonathan Frankle, Gintare~Karolina Dziugaite, Daniel~M. Roy, and Michael
  Carbin.
\newblock Linear mode connectivity and the lottery ticket hypothesis.
\newblock In {\em ICML}, 2020.

\bibitem{Neyshabur2020}
Behnam Neyshabur, Hanie Sedghi, and Chiyuan Zhang.
\newblock What is being transferred in transfer learning?
\newblock In {\em NeurIPS}, 2020.

\bibitem{Wortsman2022robust}
Mitchell Wortsman, Gabriel Ilharco, Jong~Wook Kim, Mike Li, Hanna Hajishirzi,
  Ali Farhadi, Hongseok Namkoong, and Ludwig Schmidt.
\newblock Robust fine-tuning of zero-shot models.
\newblock In {\em CVPR}, 2022.

\bibitem{mergefisher21}
Michael Matena and Colin Raffel.
\newblock Merging models with {Fisher}-weighted averaging.
\newblock In {\em NeurIPS}, 2022.

\bibitem{Wortsman2022ModelSA}
Mitchell Wortsman, Gabriel Ilharco, Samir~Yitzhak Gadre, Rebecca Roelofs,
  Raphael Gontijo-Lopes, Ari~S. Morcos, Hongseok Namkoong, Ali Farhadi, Yair
  Carmon, Simon Kornblith, and Ludwig Schmidt.
\newblock Model soups: averaging weights of multiple fine-tuned models improves
  accuracy without increasing inference time.
\newblock In {\em ICML}, 2022.

\bibitem{arpit2021ensemble}
Devansh Arpit, Huan Wang, Yingbo Zhou, and Caiming Xiong.
\newblock Ensemble of averages: Improving model selection and boosting
  performance in domain generalization.
\newblock In {\em NeurIPS}, 2021.

\bibitem{foret2021}
Pierre Foret, Ariel Kleiner, Hossein Mobahi, and Behnam Neyshabur.
\newblock Sharpness-aware minimization for efficiently improving
  generalization.
\newblock In {\em ICLR}, 2021.

\bibitem{questionsflatminima22}
Jean Kaddour, Linqing Liu, Ricardo Silva, and Matt Kusner.
\newblock When do flat minima optimizers work?
\newblock In {\em NeurIPS}, 2022.

\bibitem{kohavi1996bias}
Ron Kohavi, David~H Wolpert, et~al.
\newblock Bias plus variance decomposition for zero-one loss functions.
\newblock In {\em ICML}, 1996.

\bibitem{domingos2000unified}
Pedro Domingos.
\newblock A unified bias-variance decomposition.
\newblock In {\em ICML}, 2000.

\bibitem{dietterich2000ensemble}
Thomas Dietterich.
\newblock Ensemble methods in machine learning.
\newblock In {\em MCS}, 2000.

\bibitem{brown2005between}
Gavin Brown, Jeremy Wyatt, and Ping Sun.
\newblock Between two extremes: Examining decompositions of the ensemble
  objective function.
\newblock In {\em MCS}, 2005.

\bibitem{ruan2022optimal}
Yangjun Ruan, Yann Dubois, and Chris~J. Maddison.
\newblock Optimal representations for covariate shift.
\newblock In {\em ICLR}, 2022.

\bibitem{Jacot2018}
Arthur Jacot, Franck Gabriel, and Clement Hongler.
\newblock {Neural Tangent Kernel}: Convergence and generalization in neural
  networks.
\newblock In {\em NeurIPS}, 2018.

\bibitem{daniely2017sgd}
Amit Daniely.
\newblock Sgd learns the conjugate kernel class of the network.
\newblock In {\em NeurIPS}, 2017.

\bibitem{Lee2018DeepNN}
Jaehoon Lee, Yasaman Bahri, Roman Novak, Samuel~S Schoenholz, Jeffrey
  Pennington, and Jascha Sohl-Dickstein.
\newblock Deep neural networks as gaussian processes.
\newblock In {\em ICLR}, 2017.

\bibitem{ah2010normalized}
Julien Ah-Pine.
\newblock Normalized kernels as similarity indices.
\newblock In {\em PAKDD}, 2010.

\bibitem{ghojogh2021reproducing}
Benyamin Ghojogh, Ali Ghodsi, Fakhri Karray, and Mark Crowley.
\newblock Reproducing kernel hilbert space, mercer's theorem, eigenfunctions,
  nystrom method, and use of kernels in machine learning: Tutorial and survey.
\newblock {\em arXiv preprint}, 2021.

\bibitem{rennie2005}
Jason Rennie.
\newblock How to normalize a kernel matrix.
\newblock {\em MIT Computer Science - Artificial Intelligence Lab Tech Rep},
  2005.

\bibitem{He2020The}
Hangfeng He and Weijie Su.
\newblock The local elasticity of neural networks.
\newblock In {\em ICLR}, 2020.

\bibitem{seleznova2022neural}
Mariia Seleznova and Gitta Kutyniok.
\newblock Neural tangent kernel beyond the infinite-width limit: Effects of
  depth and initialization.
\newblock {\em ICML}, 2022.

\bibitem{kuncheva2003measures}
Ludmila~I Kuncheva and Christopher~J Whitaker.
\newblock Measures of diversity in classifier ensembles and their relationship
  with the ensemble accuracy.
\newblock {\em Machine learning}, 2003.

\bibitem{aksela2003comparison}
Matti Aksela.
\newblock Comparison of classifier selection methods for improving committee
  performance.
\newblock In {\em MCS}, 2003.

\bibitem{DBLP:journals/corr/abs-1905-00414}
Simon Kornblith, Mohammad Norouzi, Honglak Lee, and Geoffrey~E. Hinton.
\newblock Similarity of neural network representations revisited.
\newblock In {\em ICML}, 2019.

\bibitem{krizhevsky2012imagenet}
Alex Krizhevsky, Ilya Sutskever, and Geoffrey~E Hinton.
\newblock Imagenet classification with deep convolutional neural networks.
\newblock In {\em NeurIPS}, 2012.

\bibitem{kumar2022finetuning}
Ananya Kumar, Aditi Raghunathan, Robbie~Matthew Jones, Tengyu Ma, and Percy
  Liang.
\newblock Fine-tuning can distort pretrained features and underperform
  out-of-distribution.
\newblock In {\em ICLR}, 2022.

\bibitem{venkateswara2017deep}
Hemanth Venkateswara, Jose Eusebio, Shayok Chakraborty, and Sethuraman
  Panchanathan.
\newblock Deep hashing network for unsupervised domain adaptation.
\newblock In {\em CVPR}, 2017.

\bibitem{li2017deeper}
Da~Li, Yongxin Yang, Yi-Zhe Song, and Timothy~M Hospedales.
\newblock Deeper, broader and artier domain generalization.
\newblock In {\em ICCV}, 2017.

\bibitem{he51deep}
Kaiming He, Xiangyu Zhang, Shaoqing Ren, and Jian Sun.
\newblock Deep residual learning for image recognition.
\newblock In {\em CVPR}, 2016.

\bibitem{fang2013unbiased}
Chen Fang, Ye~Xu, and Daniel~N Rockmore.
\newblock Unbiased metric learning: On the utilization of multiple datasets and
  web images for softening bias.
\newblock In {\em ICCV}, 2013.

\bibitem{beery2018recognition}
Sara Beery, Grant Van~Horn, and Pietro Perona.
\newblock Recognition in {Terra Incognita}.
\newblock In {\em ECCV}, 2018.

\bibitem{peng2019moment}
Xingchao Peng, Qinxun Bai, Xide Xia, Zijun Huang, Kate Saenko, and Bo~Wang.
\newblock Moment matching for multi-source domain adaptation.
\newblock In {\em ICCV}, 2019.

\bibitem{yan2020improve}
Shen Yan, Huan Song, Nanxiang Li, Lincan Zou, and Liu Ren.
\newblock Improve unsupervised domain adaptation with mixup training.
\newblock {\em arXiv preprint}, 2020.

\bibitem{Sagawa2020Distributionally}
Shiori Sagawa, Pang~Wei Koh, Tatsunori~B. Hashimoto, and Percy Liang.
\newblock Distributionally robust neural networks.
\newblock In {\em ICLR}, 2020.

\bibitem{ganin2016domain}
Yaroslav Ganin, Evgeniya Ustinova, Hana Ajakan, Pascal Germain, Hugo
  Larochelle, Fran{\c{c}}ois Laviolette, Mario Marchand, and Victor Lempitsky.
\newblock Domain-adversarial training of neural networks.
\newblock {\em JMLR}, 2016.

\bibitem{pmlr-v139-koh21a}
Pang~Wei Koh, Shiori Sagawa, Henrik Marklund, Sang~Michael Xie, Marvin Zhang,
  Akshay Balsubramani, Weihua Hu, Michihiro Yasunaga, Richard~Lanas Phillips,
  Irena Gao, Tony Lee, Etienne David, Ian Stavness, Wei Guo, Berton Earnshaw,
  Imran Haque, Sara~M Beery, Jure Leskovec, Anshul Kundaje, Emma Pierson,
  Sergey Levine, Chelsea Finn, and Percy Liang.
\newblock {WILDS}: A benchmark of in-the-wild distribution shifts.
\newblock In {\em ICML}, 2021.

\bibitem{hansen1990neural}
Lars~Kai Hansen and Peter Salamon.
\newblock Neural network ensembles.
\newblock {\em TPAMI}, 1990.

\bibitem{krogh1995neural}
Anders Krogh and Jesper Vedelsby.
\newblock Neural network ensembles, cross validation, and active learning.
\newblock In {\em NeurIPS}, 1995.

\bibitem{thopalli2021multidomain}
Kowshik Thopalli, Sameeksha Katoch, Jayaraman~J. Thiagarajan, Pavan~K. Turaga,
  and Andreas Spanias.
\newblock Multi-domain ensembles for domain generalization.
\newblock In {\em NeurIPS Workshop}, 2021.

\bibitem{Mesbah2022}
Yusuf Mesbah, Youssef~Youssry Ibrahim, and Adil~Mehood Khan.
\newblock Domain generalization using ensemble learning.
\newblock In {\em ISWA}, 2022.

\bibitem{li2022domain}
Ziyue Li, Kan Ren, Xinyang Jiang, Bo~Li, Haipeng Zhang, and Dongsheng Li.
\newblock Domain generalization using pretrained models without fine-tuning.
\newblock {\em arXiv preprint}, 2022.

\bibitem{lee2022diversify}
Yoonho Lee, Huaxiu Yao, and Chelsea Finn.
\newblock Diversify and disambiguate: Learning from underspecified data.
\newblock {\em arXiv preprint}, 2022.

\bibitem{pagliardini2022agree}
Matteo Pagliardini, Martin Jaggi, Fran{\c{c}}ois Fleuret, and Sai~Praneeth
  Karimireddy.
\newblock Agree to disagree: Diversity through disagreement for better
  transferability.
\newblock {\em arXiv preprint}, 2022.

\bibitem{wenzel2020hyperparameter}
Florian Wenzel, Jasper Snoek, Dustin Tran, and Rodolphe Jenatton.
\newblock Hyperparameter ensembles for robustness and uncertainty
  quantification.
\newblock In {\em NeurIPS}, 2020.

\bibitem{breiman1996bagging}
Leo Breiman.
\newblock Bagging predictors.
\newblock {\em Machine learning}, 1996.

\bibitem{nixon2020why}
Jeremy Nixon, Balaji Lakshminarayanan, and Dustin Tran.
\newblock Why are bootstrapped deep ensembles not better?
\newblock In {\em NeurIPS Workshop}, 2020.

\bibitem{Yeo2021}
Teresa Yeo, Oguzhan~Fatih Kar, and Amir~Roshan Zamir.
\newblock Robustness via cross-domain ensembles.
\newblock In {\em ICCV}, 2021.

\bibitem{wen2021combining}
Yeming Wen, Ghassen Jerfel, Rafael Muller, Michael~W Dusenberry, Jasper Snoek,
  Balaji Lakshminarayanan, and Dustin Tran.
\newblock Combining ensembles and data augmentation can harm your calibration.
\newblock In {\em ICLR}, 2021.

\bibitem{rame2021ixmo}
Alexandre Rame, Remy Sun, and Matthieu Cord.
\newblock {MixMo}: Mixing multiple inputs for multiple outputs via deep
  subnetworks.
\newblock In {\em ICCV}, 2021.

\bibitem{rame2021dice}
Alexandre Ramé and Matthieu Cord.
\newblock {DICE}: Diversity in deep ensembles via conditional redundancy
  adversarial estimation.
\newblock In {\em ICLR}, 2021.

\bibitem{teneydiv}
Damien Teney, Ehsan Abbasnejad, Simon Lucey, and Anton van~den Hengel.
\newblock Evading the simplicity bias: Training a diverse set of models
  discovers solutions with superior ood generalization.
\newblock {\em arXiv preprint}, 2021.

\bibitem{Draxler2018}
Felix Draxler, Kambis Veschgini, Manfred Salmhofer, and Fred Hamprecht.
\newblock Essentially no barriers in neural network energy landscape.
\newblock In {\em ICML}, 2018.

\bibitem{Guo2022}
Hao Guo, Jiyong Jin, and Bin Liu.
\newblock Stochastic weight averaging revisited.
\newblock {\em arXiv preprint}, 2022.

\bibitem{zhang2019lookahead}
Michael Zhang, James Lucas, Jimmy Ba, and Geoffrey~E Hinton.
\newblock Lookahead optimizer: k steps forward, 1 step back.
\newblock {\em NeurIPS}, 32, 2019.

\bibitem{nagarajan2019uniform}
Vaishnavh Nagarajan and J~Zico Kolter.
\newblock Uniform convergence may be unable to explain generalization in deep
  learning.
\newblock {\em NeurIPS}, 2019.

\bibitem{Benton2021}
Gregory Benton, Wesley Maddox, Sanae Lotfi, and Andrew Gordon~Gordon Wilson.
\newblock Loss surface simplexes for mode connecting volumes and fast
  ensembling.
\newblock In {\em ICML}, 2021.

\bibitem{Gupta2020Stochastic}
Vipul Gupta, Santiago~Akle Serrano, and Dennis DeCoste.
\newblock Stochastic weight averaging in parallel: Large-batch training that
  generalizes well.
\newblock In {\em ICLR}, 2020.

\bibitem{choshen2022fusing}
Leshem Choshen, Elad Venezian, Noam Slonim, and Yoav Katz.
\newblock Fusing finetuned models for better pretraining.
\newblock {\em arXiv preprint}, 2022.

\bibitem{wortsman2021learning}
Mitchell Wortsman, Maxwell Horton, Carlos Guestrin, Ali Farhadi, and Mohammad
  Rastegari.
\newblock Learning neural network subspaces.
\newblock {\em ICML}, 2021.

\bibitem{maddox2019simple}
Wesley~J Maddox, Pavel Izmailov, Timur Garipov, Dmitry~P Vetrov, and
  Andrew~Gordon Wilson.
\newblock A simple baseline for bayesian uncertainty in deep learning.
\newblock In {\em NeurIPS}, 2019.

\bibitem{izmailov_subspace_2019}
Pavel Izmailov, Wesley Maddox, Polina Kirichenko, Timur Garipov, Dmitry Vetrov,
  and Andrew~Gordon Wilson.
\newblock Subspace inference for bayesian deep learning.
\newblock In {\em UAI}, 2019.

\bibitem{kirichenko2022last}
Polina Kirichenko, Pavel Izmailov, and Andrew~Gordon Wilson.
\newblock Last layer re-training is sufficient for robustness to spurious
  correlations.
\newblock In {\em ICLR}, 2023.

\bibitem{blodgett2016demographic}
Su~Lin Blodgett, Lisa Green, and Brendan O'Connor.
\newblock Demographic dialectal variation in social media: A case study of
  african-american english.
\newblock In {\em EMNLP}, 2016.

\bibitem{barocas2016big}
Solon Barocas and Andrew~D Selbst.
\newblock Big data's disparate impact.
\newblock {\em Calif. L. Rev.}, 2016.

\bibitem{pmlr-v70-dinh17b}
Laurent Dinh, Razvan Pascanu, Samy Bengio, and Yoshua Bengio.
\newblock Sharp minima can generalize for deep nets.
\newblock In {\em ICML}, 2017.

\bibitem{petzka2021relative}
Henning Petzka, Michael Kamp, Linara Adilova, Cristian Sminchisescu, and Mario
  Boley.
\newblock Relative flatness and generalization.
\newblock In {\em NeurIPS}, 2021.

\bibitem{yao2020pyhessian}
Zhewei Yao, Amir Gholami, Kurt Keutzer, and Michael~W Mahoney.
\newblock Pyhessian: Neural networks through the lens of the hessian.
\newblock In {\em Big Data}, 2020.

\bibitem{dallev2}
Aditya Ramesh, Prafulla Dhariwal, Alex Nichol, Casey Chu, and Mark Chen.
\newblock Hierarchical text-conditional image generation with clip latents.
\newblock {\em arXiv preprint}, 2022.

\bibitem{RasmussenW06}
Carl~Edward Rasmussen.
\newblock Gaussian processes in machine learning.
\newblock In {\em Summer school on machine learning}, 2003.

\bibitem{perez2013gaussian}
Fernando P{\'e}rez-Cruz, Steven Van~Vaerenbergh, Juan~Jos{\'e} Murillo-Fuentes,
  Miguel L{\'a}zaro-Gredilla, and Ignacio Santamaria.
\newblock Gaussian processes for nonlinear signal processing: An overview of
  recent advances.
\newblock {\em EEE Signal Process. Mag.}, 2013.

\bibitem{yang2019}
Greg Yang and Hadi Salman.
\newblock A fine-grained spectral perspective on neural networks.
\newblock {\em arXiv preprint}, 2019.

\bibitem{brain1999effect}
Damien Brain and Geoffrey~I Webb.
\newblock On the effect of data set size on bias and variance in classification
  learning.
\newblock In {\em AKAW}, 1999.

\bibitem{Gretton2012}
Arthur Gretton, Karsten~M. Borgwardt, Malte~J. Rasch, Bernhard Sch{{\"o}}lkopf,
  and Alexander Smola.
\newblock A kernel two-sample test.
\newblock {\em Journal of Machine Learning Research}, 13(25):723--773, 2012.

\bibitem{magnus2019matrix}
Jan~R Magnus and Heinz Neudecker.
\newblock {\em Matrix differential calculus with applications in statistics and
  econometrics}.
\newblock John Wiley \& Sons, 2019.

\bibitem{radford2021learning}
Alec Radford, Jong~Wook Kim, Chris Hallacy, Aditya Ramesh, Gabriel Goh,
  Sandhini Agarwal, Girish Sastry, Amanda Askell, Pamela Mishkin, Jack Clark,
  et~al.
\newblock Learning transferable visual models from natural language
  supervision.
\newblock In {\em ICML}, 2021.

\bibitem{singh2016swapout}
Saurabh Singh, Derek Hoiem, and David Forsyth.
\newblock Swapout: Learning an ensemble of deep architectures.
\newblock In {\em NeurIPS}, 2016.

\bibitem{efron1992bootstrap}
Bradley Efron.
\newblock Bootstrap methods: another look at the jackknife.
\newblock In {\em Breakthroughs in statistics}. 1992.

\bibitem{kingma2014adam}
Diederik~P. Kingma and Jimmy Ba.
\newblock Adam: A method for stochastic optimization.
\newblock In {\em ICLR}, 2015.

\bibitem{lecun2010mnist}
Yann LeCun, Corinna Cortes, and Chris Burges.
\newblock Mnist handwritten digit database, 2010.

\bibitem{rosenfeld2022domain}
Elan Rosenfeld, Pradeep Ravikumar, and Andrej Risteski.
\newblock Domain-adjusted regression or: Erm may already learn features
  sufficient for out-of-distribution generalization.
\newblock {\em arXiv preprint}, 2022.

\end{thebibliography}
% \clearpage
\clearpage
\section*{Checklist}
%%% BEGIN INSTRUCTIONS %%%
%The checklist follows the references.  Please read the checklist guidelines carefully for information on how to answer these questions.  For each question, change the default \answerTODO{} to \answerYes{}, \answerNo{}, or \answerNA{}.  You are strongly encouraged to include a {\bf justification to your answer}, either by referencing the appropriate section of your paper or providing a brief inline description.  For example:
% \begin{itemize}
%   \item Did you include the license to the code and datasets? \answerYes{See Section~\ref{}.}
%   \item Did you include the license to the code and datasets? \answerNo{The code and the data are proprietary.}
%   \item Did you include the license to the code and datasets? \answerNA{}
% \end{itemize}
% Please do not modify the questions and only use the provided macros for your
% answers.  Note that the Checklist section does not count towards the page
% limit.  In your paper, please delete this instructions block and only keep the
% Checklist section heading above along with the questions/answers below.
%%% END INSTRUCTIONS %%%
\begin{enumerate}
\item For all authors...
\begin{enumerate}
  \item Do the main claims made in the abstract and introduction accurately reflect the paper's contributions and scope?
    \answerYes{}
  \item Did you describe the limitations of your work?
    \answerYes{In \Cref{subsec:limitations}.}
  \item Did you discuss any potential negative societal impacts of your work?
    \answerYes{In \Cref{app:broader_impact}}
  \item Have you read the ethics review guidelines and ensured that your paper conforms to them?
    \answerYes{}
\end{enumerate}

\item If you are including theoretical results...
\begin{enumerate}
    \item Did you state the full set of assumptions of all theoretical results?
    \answerYes{\Cref{ass:no_bias_iid} discussed in \Cref{app:bias_correlation_ass_noiidbias} and \Cref{ass:infinite_width,ass:inter_sample} discussed in \Cref{app:discussion_inter_sample}.}
    \item Did you include complete proofs of all theoretical results?
    \answerYes{In \Cref{app:proof}}
\end{enumerate}

\item If you ran experiments...
\begin{enumerate}
    \item Did you include the code, data, and instructions needed to reproduce the main experimental results (either in the supplemental material or as a URL)? \answerYes{Our code is available at \url{https://github.com/alexrame/diwa}.}
    % \answerYes{Code is open sourced and included in the Domainbed benchmark.}
    \item Did you specify all the training details (e.g., data splits, hyperparameters, how they were chosen)? \answerYes{See \Cref{sec:domainbed} and \Cref{app:domainbed_description}}
    \item Did you report error bars (e.g., with respect to the random seed after running experiments multiple times)?
    \answerYes{Defined by different data splits when possible.}
    \item Did you include the total amount of compute and the type of resources used (e.g., type of GPUs, internal cluster, or cloud provider)?
    \answerYes{Approximately $20000$ hours of GPUs (Nvidia V100) on an internal cluster, mostly for the $2640$ runs needed in \Cref{table:db_all_training}.}
\end{enumerate}

\item If you are using existing assets (e.g., code, data, models) or curating/releasing new assets...
\begin{enumerate}
  \item If your work uses existing assets, did you cite the creators?
    \answerYes{DomainBed benchmark \cite{gulrajani2021in} and its datasets.}
  \item Did you mention the license of the assets?
    \answerYes{DomainBed is under \enquote{The MIT License}.}
  \item Did you include any new assets either in the supplemental material or as a URL?
    \answerNo{}
  \item Did you discuss whether and how consent was obtained from people whose data you're using/curating?
    \answerNA{}
  \item Did you discuss whether the data you are using/curating contains personally identifiable information or offensive content?
    \answerNA{}
\end{enumerate}

\item If you used crowdsourcing or conducted research with human subjects...
\begin{enumerate}
  \item Did you include the full text of instructions given to participants and screenshots, if applicable?
    \answerNA{}
  \item Did you describe any potential participant risks, with links to Institutional Review Board (IRB) approvals, if applicable?
    \answerNA{}
  \item Did you include the estimated hourly wage paid to participants and the total amount spent on participant compensation?
    \answerNA{}
\end{enumerate}

\end{enumerate}

\newpage
\begin{appendices}
    \appendix
    % \clearpage
% \appendix
%\setcounter{page}{1}
% \title{Diverse Weight Averaging for Out-of-Distribution Generalization\\ \normalfont{Supplementary Material}}
% \maketitle

% further details the theoretical and empirical aspects of our work
This supplementary material complements the main paper.
% The Sections follow a similar order than their related Sections in the main paper.
% We also present some additional results.
It is organized as follows:
\begin{enumerate}
    \item \Cref{app:broader_impact} describes the broader impact of our work.
    \item \Cref{app:limit_proof_swad} points out the limitations of existing flatness-based analysis of WA and shows how our analysis solves these limitations.
    \item \Cref{app:proof} details all the proofs of the propositions and lemmas found in our work.
        \begin{itemize}
            \item \Cref{app:wa_loss,app:proof_bvc} derive the bias-variance-covariance-locality decomposition for WA (\Cref{prop:b_var_cov}).
            \item \Cref{app:bias_correlation} establishes the link between bias and correlation shift (\Cref{prop:bias}).
            \item \Cref{app:var_diversity} establishes the link between variance and diversity shift (\Cref{prop:var}).
            \item \Cref{app:proof_wa_ind} compares WA with one of its member (\Cref{lemma:wa_ind}).
        \end{itemize}
    \item \Cref{app:wa_vs_ens_analysis} empirically compares WA to functional ensembling ENS.
    \item \Cref{app:additional_diversity_analysis} presents some additional diversity results on OfficeHome and PACS.
    \item \reb{\Cref{app:valueofm} ablates the importance of the number of training runs}.
    \item \Cref{app:domainbeddetails} describes our experiments on DomainBed and our per-domain results.
    \item \Cref{app:failure_corr_shift} empirically confirms a limitation of WA approaches expected from our theoretical analysis: they do not tackle correlation shift on ColoredMNIST.
    \item \reb{\Cref{app:llr} suggests DiWA's potential when some target data is available for training \cite{kirichenko2022last}}.
\end{enumerate}

\section{Broader impact statement}
\label{app:broader_impact}

We believe our paper can have several positive impacts.
\textit{First}, our theoretical analysis enables practitioners to know when averaging strategies succeed (under diversity shift, where variance dominates) or break down (under correlation shift, where bias dominates).
This is key to understand when several models can be combined into a production system, or if the focus should be put on the training objective and/or the data.
\textit{Second}, it sets a new state of the art for \ood generalization under diversity shift without relying on a specific objective, architecture or task prior. It could be useful in medicine \cite{Zech2018,degrave2021ai} or to tackle fairness issues related to under-representation \cite{Sagawa2020Distributionally,blodgett2016demographic,barocas2016big}.
\textit{Finally}, DIWA has no additional inference cost; in contrast, functional ensembling needs one forward per member.
Thus, DiWA removes the carbon footprint overhead of ensembling strategies at test-time.

Yet, our paper may also have some negative impacts.
\textit{First}, it requires independent training of several models. It may motivate practitioners to learn even more networks and average them afterwards.
Note that in \Cref{sec:domainbed}, we restricted ourselves to combining only the runs obtained from the standard ERM grid search from DomainBed \cite{gulrajani2021in}.
\textit{Second}, our model is fully deep learning based with the corresponding risks, \eg adversarial attacks and lack of interpretability.
\textit{Finally}, we do not control its possible use to surveillance or weapon systems.

\section{Limitations of the flatness-based analysis in \ood}%
\label{app:limit_proof_swad}%
\begin{theorem}[Equation 21 from \cite{cha2021wad}, simplified version of their Theorem 1]
    Consider a set of $N$ covers $\left\{\Theta_{k}\right\}_{k=1}^{N}$ \sut the parameter space $\Theta \subset \cup_{k}^{N} \Theta_{k}$ where $\operatorname{diam}(\Theta)\triangleq\sup _{\theta, \theta^{\prime} \in \Theta}\left\|\theta-\theta^{\prime}\right\|_{2}, N\triangleq\left\lceil(\operatorname{diam}(\Theta) / \gamma)^{d} \right\rceil$ and $d$ is the dimension of $\Theta$. Then, $\forall\theta \in \Theta$ with probability at least $1-\delta$:
    \begin{equation} \label{eq:bound_swad}
        \begin{aligned}
            \mathcal{E}_{T}(\theta) & \leq \frac{1}{2} \operatorname{Div}(p_S, p_T) + \mathcal{E}_{S}(\theta)                                                                                                                    \\
                                   & \leq \frac{1}{2} \operatorname{Div}(p_S, p_T) + \mathcal{E}_{d_S}^{\gamma}(\theta)+\max _{k} \sqrt{\frac{\left(v_{k}\left[\ln \left(n_S / v_{k}\right)+1\right]+\ln (N / \delta)\right)}{2 n_S}},%
        \end{aligned}%
    \end{equation}%
    where:
    \begin{itemize}%
        \vspace{-0.5em}%
        \item $\mathcal{E}_{T}(\theta) \triangleq \mathbb{E}_{(x,y)\sim p_T(X,Y)}[\ell\left(f_\theta(x);y\right)]$ is the expected risk on the target domain,
        \item $\operatorname{Div}(p_S, p_T) \triangleq 2 \sup _{A}\left|p_{S}(A)-p_{T}(A)\right|$ is a divergence between the source and target marginal distributions $p_S$ and $p_T$: it measures diversity shift.%
        \item $\mathcal{E}_{S}(\theta) \triangleq \mathbb{E}_{(x,y)\sim p_S(X,Y)}[\ell\left(f_\theta(x);y\right)]$ is the expected risk on the source domain,
        \item $\mathcal{E}_{d_S}^{\gamma}(\theta) \triangleq \max _{\|\Delta\| \leq \gamma} \mathcal{E}_{d_S}(\theta+\Delta)$
        (where $\mathcal{E}_{d_S}(\theta+\Delta) \triangleq \mathbb{E}_{(x,y)\in d_S}[\ell\left(f_{\theta+\Delta}(x);y\right)]$)
        is the robust empirical loss on source training dataset $d_S$ from $S$ of size $n_S$,
        \item $v_{k}$ is a VC dimension of each $\Theta_{k}$.%
    \end{itemize}%
    \label{theorem:swad}%
    \vspace{-0.5em}%
\end{theorem}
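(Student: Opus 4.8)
The statement is a composition of two bounds, and the plan is to prove each separately and then chain them. The first inequality is a classical domain-adaptation step that trades the target risk for the source risk at the price of a distributional divergence; the second is a covering-plus-VC robust generalization bound that trades the population source risk for the robust empirical risk $\mathcal{E}_{d_S}^{\gamma}$ plus a complexity term. I would assume throughout that the loss is bounded, say $\ell\in[0,1]$, which both the total-variation argument and the VC tail bound require.

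For the first inequality, I would write the gap as an integral against the signed measure $p_T-p_S$:
\begin{equation*}
    \mathcal{E}_{T}(\theta) - \mathcal{E}_{S}(\theta) = \int \ell(f_\theta(x),y)\,(p_T - p_S)(dx,dy).
\end{equation*}
Since $0\le\ell\le 1$, this integral is maximized by charging all positive mass to the event $\{p_T>p_S\}$, so that $\mathcal{E}_{T}(\theta)-\mathcal{E}_{S}(\theta) \le \sup_A (p_T(A)-p_S(A)) \le \sup_A |p_S(A)-p_T(A)| = \tfrac{1}{2}\operatorname{Div}(p_S,p_T)$, which is exactly the first line of \Cref{eq:bound_swad}. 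The only point of attention is whether $A$ ranges over joint or marginal events: the marginal reading that lets one call $\operatorname{Div}$ a diversity-shift term is exact when $f_S=f_T$, and otherwise one keeps the joint reading.

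For the second inequality, I would fix a minimal $\gamma$-cover $\{\Theta_k\}_{k=1}^{N}$ of $\Theta$, which yields $N=\lceil(\operatorname{diam}(\Theta)/\gamma)^d\rceil$ pieces as stated, each a ball of radius (a constant multiple of) $\gamma$. On the piece $\Theta_k$ the hypothesis class has VC dimension $v_k$, so Sauer's lemma bounds its growth function by $(en_S/v_k)^{v_k}$, and the classical VC uniform-convergence inequality gives, with probability at least $1-\delta/N$,
\begin{equation*}
    \sup_{\theta'\in\Theta_k}\big(\mathcal{E}_{S}(\theta') - \mathcal{E}_{d_S}(\theta')\big) \le \sqrt{\frac{v_k[\ln(n_S/v_k)+1] + \ln(N/\delta)}{2n_S}} \eqdef \epsilon_k,
\end{equation*}
where the $v_k[\ln(n_S/v_k)+1]$ term is the logarithm of the growth function and $\ln(N/\delta)$ absorbs the per-piece confidence budget $\delta/N$. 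A union bound over $k=1,\dots,N$ makes all $N$ events hold simultaneously with probability at least $1-\delta$. Because each $\Theta_k$ lies inside the $\gamma$-ball around any of its points, the supremum of the empirical loss over $\Theta_k$ is dominated by the robust empirical loss, so for an arbitrary $\theta\in\Theta_k$,
\begin{equation*}
    \mathcal{E}_{S}(\theta) \le \sup_{\theta'\in\Theta_k}\mathcal{E}_{S}(\theta') \le \sup_{\theta'\in\Theta_k}\mathcal{E}_{d_S}(\theta') + \epsilon_k \le \mathcal{E}_{d_S}^{\gamma}(\theta) + \max_k \epsilon_k.
\end{equation*}
Composing this with the first inequality yields the theorem.

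The main obstacle I anticipate is the bookkeeping that couples the two roles of $\gamma$ in the second inequality: it sets the cover granularity (hence $N$ and the $\ln(N/\delta)$ term) while also being the radius of the robustness neighbourhood that lets the supremum of the empirical loss over each cover piece be replaced by $\mathcal{E}_{d_S}^{\gamma}$. Matching these exactly requires care with the cover radius (a constant factor is hidden in the simplified statement), and one must verify that each $\Theta_k$ is genuinely a VC class of dimension $v_k$ so that Sauer's lemma applies piecewise and the maximum over $k$, rather than a sum, appears. By contrast, the first inequality is routine once $\ell$ is assumed bounded.
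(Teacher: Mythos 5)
There is nothing in the paper to compare your proof against: \Cref{theorem:swad} is imported verbatim (as a ``simplified version'') from Cha \textit{et al.} \cite{cha2021wad}, and the DiWA paper never proves it --- it only quotes it in \Cref{app:limit_proof_swad} in order to argue that this flatness-based bound does not explain WA's \ood behaviour. That said, your reconstruction is essentially the argument behind the original result. The first inequality is the standard Ben-David-style total-variation step, and your derivation is correct for a loss in $[0,1]$ provided $\operatorname{Div}$ is read as a divergence between the \emph{joint} distributions; you rightly flag that the paper's gloss of $\operatorname{Div}$ as a divergence between marginals only makes the inequality exact when the labeling functions agree, and this is a genuine imprecision of the quoted statement rather than of your proof. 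The second inequality is the standard cover-then-VC argument: per-piece uniform convergence at confidence $\delta/N$, a union bound contributing the $\ln(N/\delta)$ term, and containment of the cover piece in the $\gamma$-ball around $\theta$ so that $\sup_{\theta'\in\Theta_k}\mathcal{E}_{d_S}(\theta')\leq\mathcal{E}_{d_S}^{\gamma}(\theta)$; your chain of inequalities is sound, and the two caveats you raise (the dimension-dependent constant relating the cover granularity $N=\lceil(\operatorname{diam}(\Theta)/\gamma)^d\rceil$ to an actual diameter-$\gamma$ cover, and the assumption that each $\Theta_k$ restricted class has VC dimension $v_k$) are exactly the places where the ``simplified'' statement hides constants or hypotheses. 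I see no gap beyond those acknowledged loose ends, which would need the original reference to pin down.
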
%
Previous understanding of WA's success in \ood relied on this upper-bound, where $\mathcal{E}_{d_S}^{\gamma}(\theta)$ involves the solution's flatness.
This is usually empirically analyzed by the trace of the Hessian \cite{pmlr-v70-dinh17b,petzka2021relative,yao2020pyhessian}: indeed, with a second-order Taylor approximation around the local minima $\theta$ and $h$ the Hessian's maximum eigenvalue, $\mathcal{E}_{d_S}^{\gamma}(\theta) \approx \mathcal{E}_{d_S}(\theta) + h \times \gamma^2$.

In the following subsections, we show that this inequality does not fully explain the exceptional performance of WA on DomainBed \cite{gulrajani2021in}. Moreover, we illustrate that our bias-variance-covariance-locality addresses these limitations.
\subsection{Flatness does not act on distribution shifts}
\label{app:flatness_distribution_shifts}
% does not predict under which circumstances WA would work
The flatness-based analysis is not specific to \ood.
Indeed, the upper-bound in \Cref{eq:bound_swad} sums up two noninteracting terms: a domain divergence $\mathrm{Div}(p_S,p_T)$ that grows in \ood and $\mathcal{E}_{d_S}^{\gamma}(\theta)$ that measures the \iid flatness.
The flatness term can indeed be reduced empirically with WA: yet, it does not tackle the domain gap.
In fact, \Cref{eq:bound_swad} states that additional flatness reduces the upper bound of the error similarly no matter the strength of the distribution shift, thus as well \ood than \iid.
In contrast, our analysis shows that variance (which grows with diversity shift, see \Cref{subsec:expression_ood_var}) is tackled for large $M$: our error is controlled even under large diversity shift. This is consistent with our experiments in \Cref{table:db_all_training}.
Our analysis also explains why WA cannot tackle correlation shift (where bias dominates, see \Cref{app:failure_corr_shift}), a limitation \cite{cha2021wad} does not illustrate.

\subsection{SAM leads to flatter minimas but worse \ood performance}
\label{app:mav_better_than_sam}
The flatness-based analysis does not explain why WA outperforms other flatness-based methods in \ood.
We consider Sharpness-Aware Minimizer (SAM) \cite{foret2021}, another popular method to find flat minima based on minimax optimization: it minimizes the maximum loss around a neighborhood of the current weights $\theta$.
In \Cref{fig:soup:boxplot_masam}, we compare the flatness (\ie the Hessian trace computed with the package in \cite{yao2020pyhessian}) and accuracy of ERM, MA \cite{arpit2021ensemble} (a WA strategy) and SAM \cite{foret2021} when trained on the \enquote{Clipart}, \enquote{Product} and \enquote{Photo} domains from OfficeHome \cite{venkateswara2017deep}: they are tested \ood on the fourth domain \enquote{Art}.
Analyzing the second and the third rows of \Cref{fig:home0:boxplot_masam_train_hess,fig:home0:boxplot_masam_ood_hess}, we observe that SAM indeed finds flat minimas (at least comparable to MA), both in training (\iid) and test (\ood). However, this is not reflected in the  \ood accuracies in \Cref{fig:soup:boxplot_masam_ood_acc}, where MA outperforms SAM.
As reported in \Cref{table:db_swad_all}, similar experiments across more datasets lead to the same conclusions in \cite{cha2021wad}.
In conclusion, flatness is not sufficient to explain why WA works so well in \ood, because SAM has similar flatness but worse \ood results.
In contrast, we highlight in this paper that WA succeeds in \ood by reducing the impact of variance thanks to its similarity with prediction ensembling \cite{Lakshminarayanan2017} (see \Cref{lemma:wa_ensembling}), a privileged link that SAM does not benefit from.%
% \clearpage
\begin{table}[h]%
    \caption{\textbf{Accuracy ($\uparrow$) on DomainBed for SWAD}, taken from Table 4 in \cite{cha2021wad}}%
    \centering
    \adjustbox{width=0.9\textwidth}{
        \begin{tabular}{lccccc|c}
            \toprule
                       & \textbf{PACS}  & \textbf{VLCS}  & \textbf{OfficeHome} & \textbf{TerraInc} & \textbf{DomainNet} & \textbf{Avg}. ( $\Delta)$ \\
            \midrule
            ERM        & 85.5 $\pm$ 0.2 & 77.5 $\pm$ 0.4 & 66.5 $\pm$ 0.3      & 46.1 $\pm$ 1.8    & 40.9 $\pm$ 0.1     & 63.3                      \\
            SWAD \cite{cha2021wad} + ERM & \textbf{88.1} $\pm$ 0.1 & 79.1 $\pm$ 0.1 & \textbf{70.6} $\pm$ 0.2      & \textbf{50.0} $\pm$ 0.3    & \textbf{46.5} $\pm$ 0.1     & \textbf{66.9}(+3.6)                \\
            \midrule
            SAM \cite{foret2021}        & 85.8 $\pm$ 0.2 & \textbf{79.4} $\pm$ 0.1 & 69.6 $\pm$ 0.1      & 43.3 $\pm$ 0.7    & 44.3 $\pm$ 0.0     & 64.5                      \\
            SWAD \cite{cha2021wad} + SAM \cite{foret2021} & 87.1 $\pm$ 0.2 & 78.5 $\pm$ 0.2 & 69.9 $\pm$ 0.1      & 45.3 $\pm$ 0.9    & \textbf{46.5} $\pm$ 0.1     & 65.5(+1.0)                \\
            \bottomrule
        \end{tabular}
    }
    \label{table:db_swad_all}%
\end{table}%
\FloatBarrier
\clearpage
\begin{figure}[t]%
    \centering%
    \begin{subfigure}{.33\textwidth}%
        \centering%
        \includegraphics[width=\linewidth]{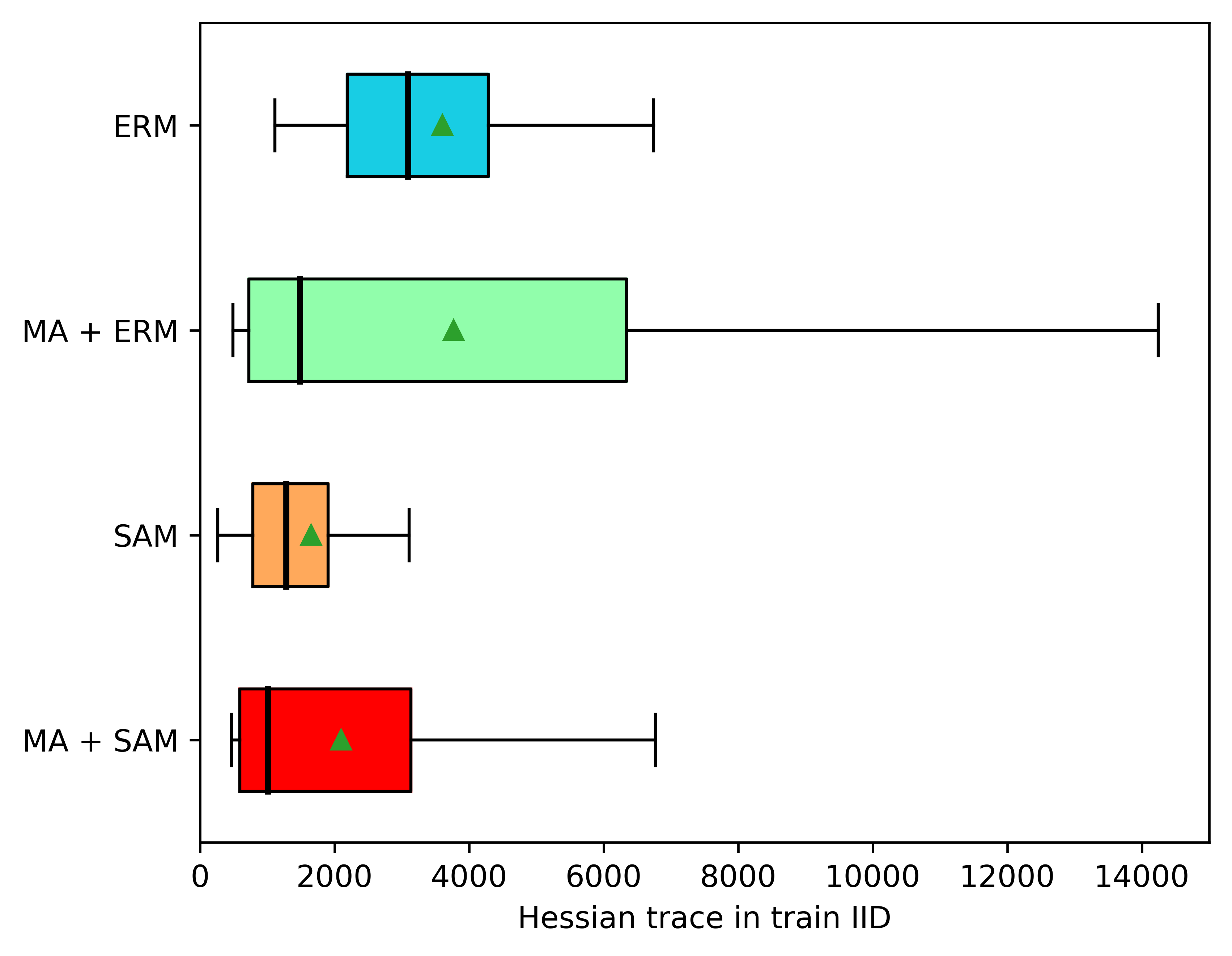}%
        \caption{Hessian trace ($\downarrow$) in train.}%
        \label{fig:home0:boxplot_masam_train_hess}%
    \end{subfigure}%
    \begin{subfigure}{.33\textwidth}%
        \centering%
        \includegraphics[width=\linewidth]{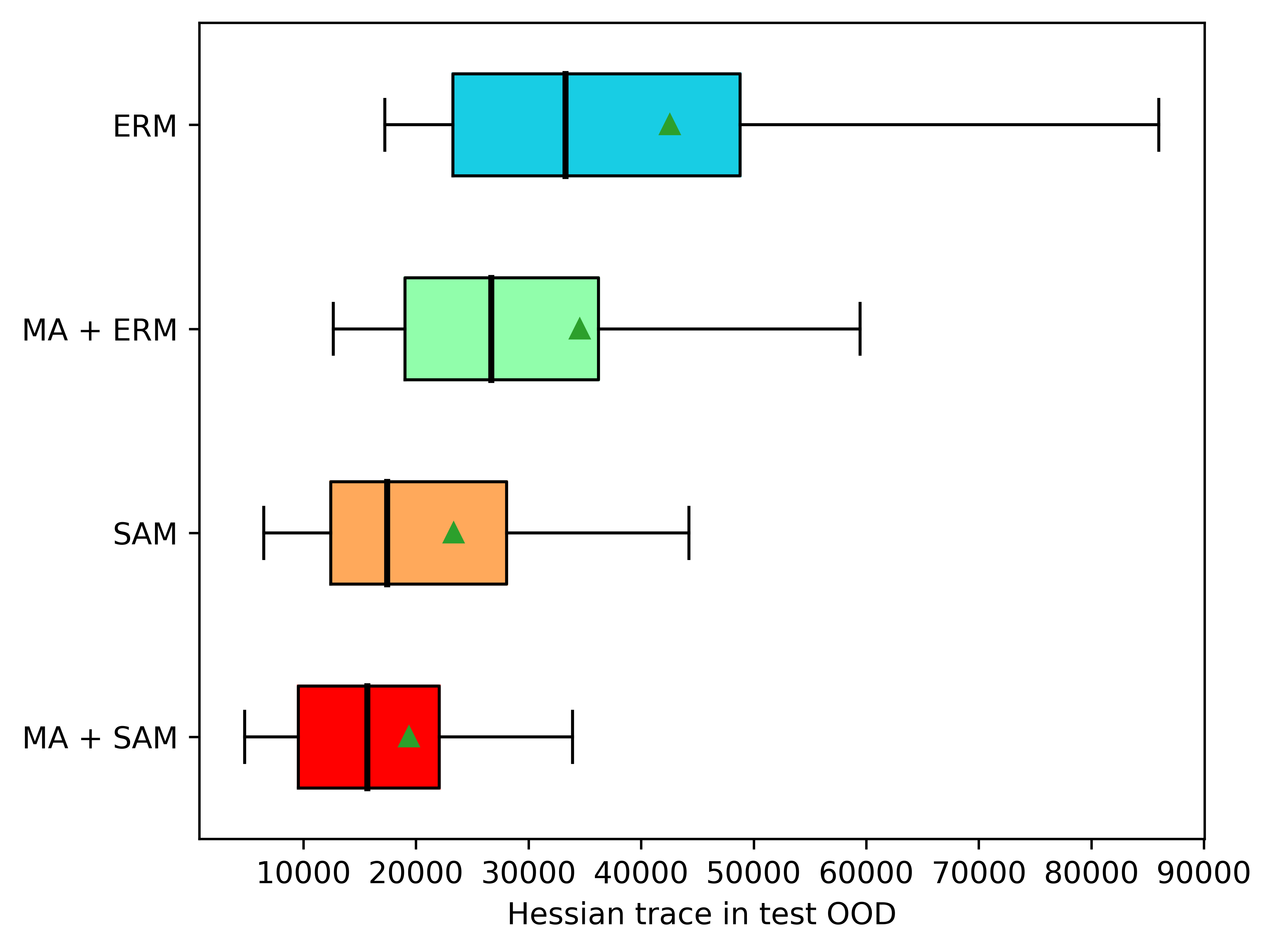}%
        \caption{Hessian trace ($\downarrow$) in test \ood.}%
        \label{fig:home0:boxplot_masam_ood_hess}%
    \end{subfigure}%
    \begin{subfigure}{.33\textwidth}%
        \centering%
        \includegraphics[width=\linewidth]{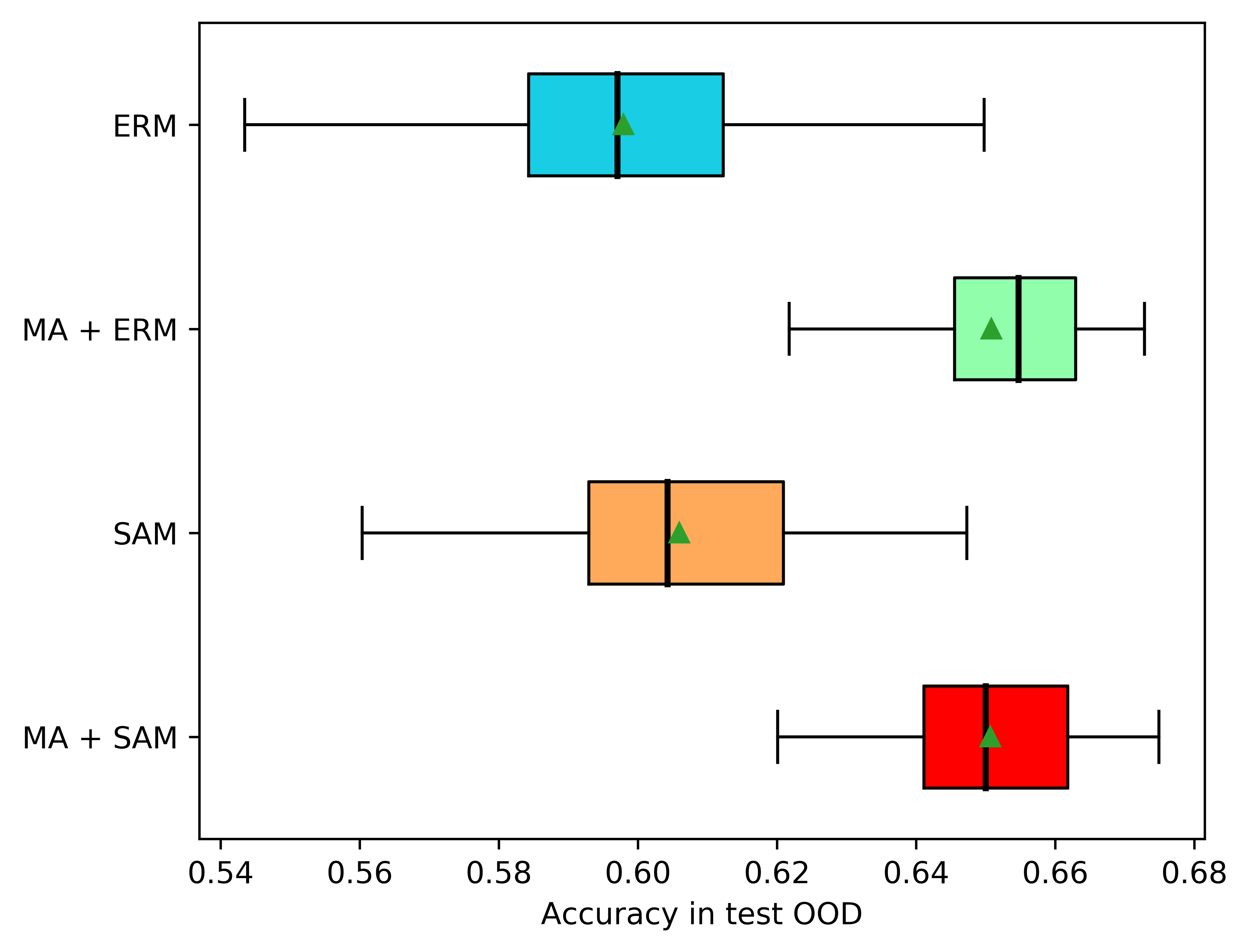}%
        \caption{Accuracy ($\uparrow$) in test \ood.}
        \label{fig:soup:boxplot_masam_ood_acc}%
    \end{subfigure}%
    \caption{MA \cite{arpit2021ensemble} (a WA strategy) and SAM \cite{foret2021} similarly improve flatness. When combined, they further improve flatness. Yet, MA outperforms SAM and beats MA + SAM in \ood accuracy on domain \enquote{Art} from OfficeHome.}%
    \label{fig:soup:boxplot_masam}%
\end{figure}%
\begin{table}[h]%
    \caption{\reb{\textbf{Accuracy ($\uparrow$) impact of including SAM} on domain \enquote{Art} from OfficeHome.}}
    \centering
    \adjustbox{width=0.7\textwidth}{
        \reb{\begin{tabular}{llll}
            \toprule
            \textbf{Algorithm}          & \textbf{Weight selection} & ERM            & SAM \cite{foret2021} \\
            \midrule
            No DiWA                         & N/A                       & 62.9 $\pm$ 1.3 & 63.5 $\pm$ 0.5       \\
            % MA \cite{arpit2021ensemble} & Uniform                   & 65.0 $\pm$ 0.2 & 65.5 $\pm$ 0.4       \\
            DiWA                        & Restricted: $M \leq 20$   & 66.7 $\pm$ 0.1 & 65.4 $\pm$ 0.1       \\
            DiWA                        & Uniform: $M = 20$         & 67.3 $\pm$ 0.3 & 66.7 $\pm$ 0.2       \\
            DiWA$^{\dagger}$            & Uniform: $M = 60$         & \textbf{67.7}           & \textbf{67.4}                 \\
            \bottomrule
        \end{tabular}}}
    \label{table:db_home0_sam}%
\end{table}

\subsection{WA and SAM are not complementary in \ood when variance dominates}%
\begin{wrapfigure}[20]{hR!}{0.34\textwidth}
    \vspace{-0.5em}%
    \centering%
    \includegraphics[width=0.34\textwidth]{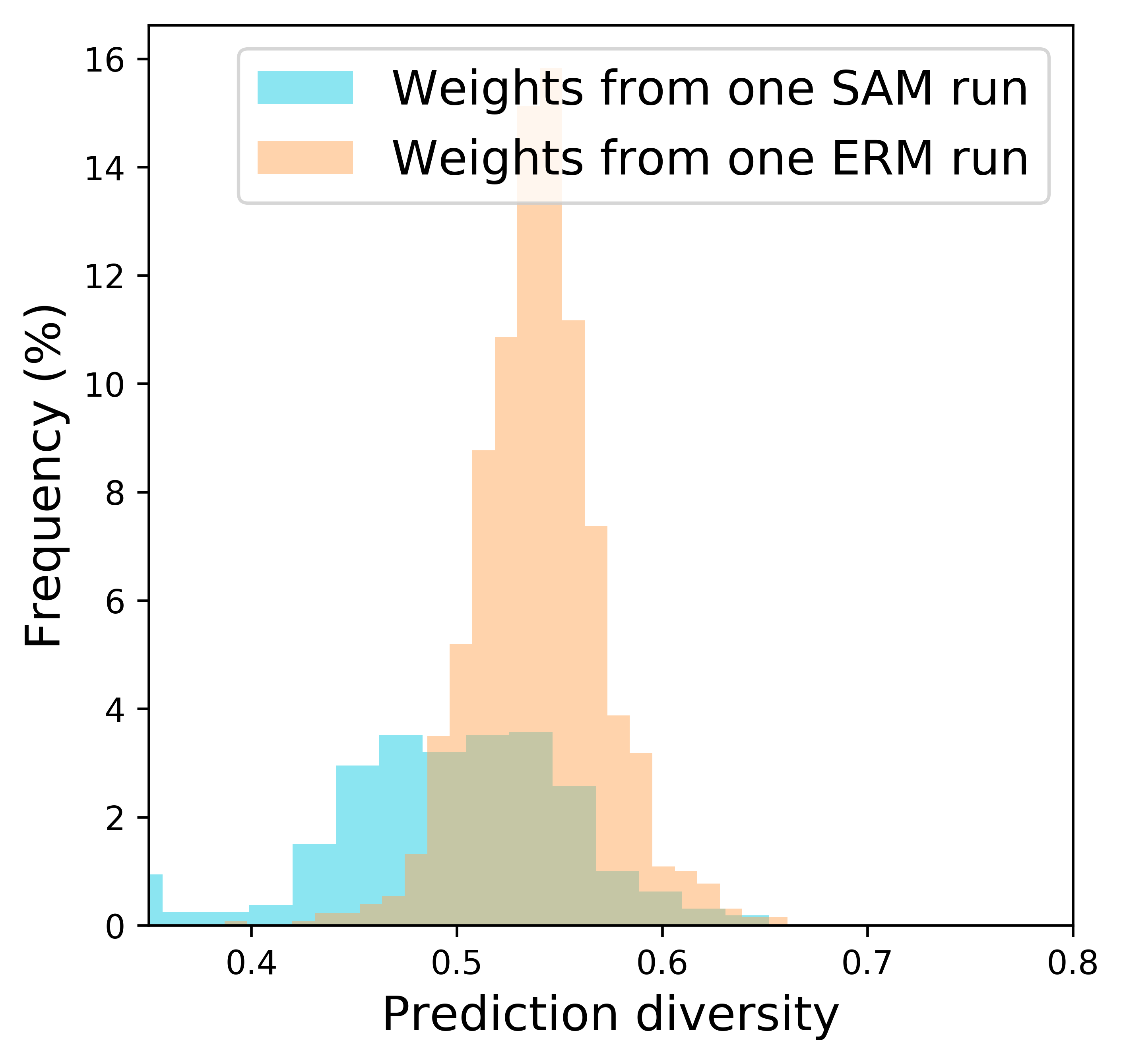}%
    \caption{Prediction diversity in ratio-error \cite{aksela2003comparison} ($\uparrow$) on domain \enquote{Art} from OfficeHome. Checkpoints along a SAM run are less diverse than along an ERM run.}%
    \label{fig:home0_hist_ermsam_dr}%
    % \vspace{-3em}
\end{wrapfigure}%
\label{app:mav_sam_failure}
We investigate a similar inconsistency when combining these two flatness-based methods.
As argued in \cite{questionsflatminima22}, we confirm in \Cref{fig:home0:boxplot_masam_train_hess,fig:home0:boxplot_masam_ood_hess} that MA + SAM leads to flatter minimas than MA alone (\ie with ERM) or SAM alone.
Yet, MA does not benefit from SAM in \Cref{fig:soup:boxplot_masam_ood_acc}.
\cite{cha2021wad} showed an even stronger result in \Cref{table:db_swad_all}: SWAD + ERM performs better than SWAD + SAM.
\reb{We recover similar findings in \Cref{table:db_home0_sam}: DiWA performs worse when SAM is applied in each training run}.

This behavior is not explained by \Cref{theorem:swad}, which states that more flatness should improve \ood generalization. Yet it is explained by our diversity-based analysis.
Indeed, we observe in \Cref{fig:home0_hist_ermsam_dr} that the diversity across two checkpoints along a SAM trajectory is much lower than along a standard ERM trajectory (with SGD).
We speculate that this is related to the recent empirical observation made in \cite{dallev2}: \enquote{the rank of the CLIP representation space is drastically reduced when training CLIP with SAM}.
Under diversity shift, variance dominates (see \Cref{eq:int_var}): in this setup, the gain in accuracy of models trained with SAM cannot compensate the decrease in diversity. This explains why WA and SAM are not complementary under diversity shift: in this case, variance is large.
% In contrast, they were shown to be complementary in \iid \cite{questionsflatminima22} where the variance is smaller \cite{neal2018modern,belkin2019reconciling,pmlr-v119-d-ascoli20a}.
% Under diversity shift, variance dominates (see \Cref{eq:int_var}) such that covariance may be high (due to Cauchy Schwartz inequality covariance is smaller to variance \cf proof in \Cref{app:proof_wa_ind}).
% Then covariance is also small such that the loss of diversity due to SAM does not have a big impact of performance.
\FloatBarrier
\clearpage

\section{Proofs}
\label{app:proof}

\subsection{WA loss derivation}
\label{app:wa_loss}
\begin{lemma*}[\ref{lemma:wa_ensembling}]
    Given $\{\theta_m\}_{m=1}^M$ with learning procedures $L_S^M\triangleq\{l_S^{(m)}\}_{m=1}^M$. Denoting $\Delta_{L_S^M}=\max_{m=1}^{M}\left\|\theta_m-\theta_{\text{WA}}\right\|_2$, $\forall (x,y) \in \X \times \Y$:%
    \begin{equation*}
        f_{\text{WA}}(x) = f_{\text{ENS}}(x) + O(\Delta^2_{L_S^M}) \text{~and~} \ell\left(f_{\text{WA}}(x), y\right) = \ell\left(f_{\text{ENS}}(x) , y\right) + O(\Delta_{L_S^M}^2).%
    \end{equation*}%
\end{lemma*}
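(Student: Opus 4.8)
The plan is to treat $f_{\text{WA}}$ and $f_{\text{ENS}}$ as two different ways of aggregating the member weights and to exploit a second-order Taylor expansion of the network $f(x,\cdot)$ around the averaged weights $\theta_{\text{WA}}$. The crucial structural observation is that the deviations $\delta_m \triangleq \theta_m - \theta_{\text{WA}}$ sum to zero: since $\theta_{\text{WA}} = \frac{1}{M}\sum_{m=1}^M \theta_m$, we have $\sum_{m=1}^M \delta_m = \sum_m \theta_m - M\theta_{\text{WA}} = 0$. This identity is exactly what makes the first-order terms cancel after averaging, leaving only a quadratic remainder controlled by $\Delta_{L_S^M}$.

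First I would fix $x \in \X$ and Taylor-expand each member prediction around $\theta_{\text{WA}}$:
\[
    f(x, \theta_m) = f(x, \theta_{\text{WA}}) + \nabla_\theta f(x, \theta_{\text{WA}})^\top \delta_m + O\big(\|\delta_m\|_2^2\big).
\]
Averaging over $m$ gives $f_{\text{ENS}}(x) = f(x,\theta_{\text{WA}}) + \nabla_\theta f(x,\theta_{\text{WA}})^\top \big(\frac{1}{M}\sum_m \delta_m\big) + \frac{1}{M}\sum_m O(\|\delta_m\|_2^2)$. The linear term vanishes by the zero-sum identity, and since each $\|\delta_m\|_2 \le \Delta_{L_S^M}$, the remainder is $O(\Delta_{L_S^M}^2)$. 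As $f_{\text{WA}}(x) = f(x,\theta_{\text{WA}})$ by definition, this yields the first claim $f_{\text{WA}}(x) = f_{\text{ENS}}(x) + O(\Delta_{L_S^M}^2)$.

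For the loss statement I would propagate this output-space error through $\ell$. Because $\ell$ is the MSE and hence smooth in its first argument, a first-order expansion of $u \mapsto \ell(u, y)$ around $u = f_{\text{WA}}(x)$, combined with $f_{\text{ENS}}(x) - f_{\text{WA}}(x) = O(\Delta_{L_S^M}^2)$, gives $\ell(f_{\text{ENS}}(x), y) = \ell(f_{\text{WA}}(x), y) + O(\Delta_{L_S^M}^2)$, provided the relevant multiplicative factor (here $f_{\text{WA}}(x)+f_{\text{ENS}}(x)-2y$ for the squared loss) stays bounded over the compact region of interest; rearranging delivers the second claim.

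The main obstacle is justifying the Taylor expansion itself: the architectures considered use ReLU and batch-normalization layers, so $f(x,\cdot)$ is only piecewise-smooth rather than globally $C^2$. I would handle this by restricting to the (generic) regime where $\theta_{\text{WA}}$ and all $\theta_m$ lie in a common region on which $f(x,\cdot)$ is twice differentiable --- consistent with the linear-mode-connectivity and locality assumptions invoked elsewhere in the paper --- and by noting that the hidden constants in the $O(\cdot)$ terms are controlled by a uniform bound on $\nabla_\theta^2 f(x,\cdot)$ over that region. Under this locality both estimates are legitimate and hold with the stated $O(\Delta_{L_S^M}^2)$ order.
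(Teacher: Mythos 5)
Your proposal is correct and follows essentially the same route as the paper's proof: a first-order Taylor expansion of the member predictions around $\theta_{\text{WA}}$, cancellation of the linear term via $\sum_m (\theta_m - \theta_{\text{WA}}) = 0$, and a propagation of the resulting $O(\Delta_{L_S^M}^2)$ output discrepancy through the loss by a local expansion/Lipschitz argument. Your added remarks on the piecewise-smoothness of ReLU/batch-norm architectures and the boundedness of the multiplicative factor in the loss expansion are reasonable refinements that the paper leaves implicit.
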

\begin{proof}
This proof has two components:
\begin{itemize}
    \item to establish the functional approximation, as \cite{izmailov2018}, it performs Taylor expansion of the models' predictions at the first order.
    \item to establish the loss approximation, as \cite{Wortsman2022ModelSA}, it performs Taylor expansion of the loss at the first order.
\end{itemize}
% \footnote{at the first order instead of the second order in \cite{Wortsman2022ModelSA}.}
\paragraph{Functional approximation}
With a Taylor expansion at the first order of the models' predictions \wrt parameters $\theta$:
\begin{align*}
    f_{\theta_m}&=f_{\text{WA}}+\nabla f_{\text{WA}}^\intercal \Delta_m+ O\left(\left\|\Delta_m\right\|_2^2\right)\\
    f_{\text{ENS}}-f_{\text{WA}}&=\frac{1}{M} \sum_{m=1}^{M}\left(\nabla f_{\text{WA}}^\intercal \Delta_m+O\left(\left\|\Delta_m\right\|_2^{2}\right)\right)
\end{align*}
Therefore, because $\sum_{m=1}^{M} \Delta_m=0$,
\begin{equation} \label{eq:diff_fens_fwa}
    f_{\text{ENS}}-f_{\text{WA}} = O\left(\Delta^2\right) \text{~where~}\Delta=\max_{m=1}^{M}\left\|\Delta_m\right\|_2.
\end{equation}
\paragraph{Loss approximation}
With a Taylor expansion at the zeroth order of the loss \wrt its first input and injecting \Cref{eq:diff_fens_fwa}:
\begin{align*}
    \ell\left(f_{\text{ENS}}(x) ; y\right)&=\ell\left(f_{\text{WA}}(x); y\right)+O\left(\left\|f_{\text{ENS}}(x)-f_{\text{WA}}(x)\right\|_2\right) \\
     \ell\left(f_{\text{ENS}}(x) ; y\right) &= \ell\left(f_{\text{WA}}(x); y\right) + O\left(\Delta^2\right).
\end{align*}
\end{proof}
\subsection{Bias-variance-covariance-locality decomposition}

\begin{remark} \label{remark:identical_assumption}
    Our result in \Cref{prop:b_var_cov} is simplified by leveraging the fact that the learning procedures $L_S^M=\{l_S^{(m)}\}_{m=1}^M$ are identically distributed (i.d.). This assumption naturally holds for DiWA which selects weights from different runs with i.i.d. hyperparameters.
    It may be less obvious why it applies to MA \cite{arpit2021ensemble} and SWAD \cite{cha2021wad}.
    It is even false if the weights $\{\theta(l_S^{(m)})\}_{m=1}^M$ are defined as being taken sequentially along a training trajectory, \ie when $0\leq i < j \leq M$ implies that $l_S^{(i)}$ has fewer training steps than $l_S^{(j)}$.
    We propose an alternative indexing strategy to respect the i.d. assumption.
    Given $M$ weights selected by the weight selection procedure, we draw without replacement the $M$ weights, \ie $\theta(l_S^{(i)})$ refers to the $i^{th}$ sampled weights.
    With this procedure, all weights are i.d. as they are uniformly sampled.
    Critically, their WA are unchanged for the two definitions.%
\end{remark}
\label{app:proof_bvc}
\begin{proposition*}[\ref{prop:b_var_cov}]
    Denoting $\bar{f}_S\left(x\right) = \mathbb{E}_{l_S} \left[f\left(x,\theta\left(l_S\right)\right)\right]$, under identically distributed learning procedures $L_S^M\triangleq\{l_S^{(m)}\}_{m=1}^M$, the expected generalization error on domain $T$ of $\theta_{\text{WA}}(L_S^M)\triangleq\frac{1}{M} \sum\nolimits_{m=1}^{M} \theta_m$ over the joint distribution of $L_S^M$ is:%
    \begin{equation*}\tag{\ref{eq:b_var_cov}}
        \begin{aligned}
             \mathbb{E}_{L_S^M}\mathcal{E}_T(\theta_{\text{WA}}(L_S^M)) &= \mathbb{E}_{(x,y)\sim p_T}\Big[\biasb^2(x, y)+\frac{1}{M} \varb(x)+\frac{M-1}{M} \covb(x)\Big] + O(\Deltab^2), \\
             \text{where~}\biasb(x,y)&=y-\bar{f}_S\left(x\right), \\
             \text{and~}\varb(x)&=\mathbb{E}_{l_S}\left[\left(f(x, \theta(l_S)) - \bar{f}_S\left(x\right)\right)^{2}\right], \\
             \text{and~}\covb(x)&= \mathbb{E}_{l_S,l_S'}\left[\left(f(x,\theta(l_S))-\bar{f}_S\left(x\right)\right)\left(f(x,\theta(l_S')))-\bar{f}_S\left(x\right)\right)\right], \\
             \text{and~}\Deltab^2&=\mathbb{E}_{L_S^M}\Delta_{L_S^M}^2 \text{~with~} \Delta_{L_S^M}=\max_{m=1}^{M}\left\|\theta_m-\theta_{\text{WA}}\right\|_2.%
        \end{aligned}%
    \end{equation*}%
    $\covb$ is the prediction covariance between two member models whose weights are averaged.
    The locality term $\Deltab^2$ is the expected squared maximum distance between weights and their average.
\end{proposition*}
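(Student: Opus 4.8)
The plan is to transfer the known bias--variance--covariance decomposition for functional ensembling onto WA, paying for the transfer with the locality term. I would proceed in three stages.

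First, I would reduce the WA error to the ENS error. Invoking \Cref{lemma:wa_ensembling} gives the pointwise estimate $\ell(f_{\text{WA}}(x), y) = \ell(f_{\text{ENS}}(x), y) + O(\Delta^2_{L_S^M})$ for every $(x,y) \in \X \times \Y$. Integrating over $(x,y) \sim p_T$ turns this into $\mathcal{E}_T(\theta_{\text{WA}}(L_S^M)) = \mathbb{E}_{(x,y)\sim p_T}[\ell(f_{\text{ENS}}(x), y)] + O(\Delta^2_{L_S^M})$; then taking $\mathbb{E}_{L_S^M}$ and pushing the residual through the expectation produces exactly the locality term $O(\Deltab^2)$ with $\Deltab^2 = \mathbb{E}_{L_S^M}\Delta^2_{L_S^M}$. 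It then remains only to decompose the error of $f_{\text{ENS}}$, whose loss is genuine MSE.

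Second, for fixed $(x,y)$ I would split $f_{\text{ENS}}(x) - y = (f_{\text{ENS}}(x) - \bar{f}_S(x)) + (\bar{f}_S(x) - y)$ and expand the square. Under $\mathbb{E}_{L_S^M}$ the cross term vanishes since $\mathbb{E}_{L_S^M}[f_{\text{ENS}}(x)] = \bar{f}_S(x)$ (each member being identically distributed), the last term is exactly $\biasb^2(x,y)$, and the first reduces to $\mathbb{E}_{L_S^M}[(f_{\text{ENS}}(x) - \bar{f}_S(x))^2]$. Writing $f_{\text{ENS}}(x) - \bar{f}_S(x) = \frac{1}{M}\sum_{m=1}^M (f(x, \theta_m) - \bar{f}_S(x))$ and expanding the square yields $M$ diagonal and $M(M-1)$ off-diagonal terms. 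By the identically-distributed assumption (extended to pairs as in \Cref{remark:identical_assumption}), every diagonal term equals $\varb(x)$ and every off-diagonal term equals $\covb(x)$, so the first term collapses to $\frac{1}{M}\varb(x) + \frac{M-1}{M}\covb(x)$. Re-integrating over $p_T$ and restoring the locality residual then gives \Cref{eq:b_var_cov}.

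The main obstacle will not be this algebra but the two points where the residual and the sampling distribution interact. The first is ensuring every off-diagonal covariance genuinely equals the single quantity $\covb(x)$: this requires the pairs $(\theta_m, \theta_{m'})$ with $m \neq m'$ to share a common joint law, which is precisely why the re-indexing of \Cref{remark:identical_assumption} is needed for trajectory-based methods (MA, SWAD) but automatic for DiWA's independent runs. The second, and genuinely delicate, point is propagating the pointwise $O(\Delta^2_{L_S^M})$ of \Cref{lemma:wa_ensembling} through both $\mathbb{E}_{(x,y)\sim p_T}$ and $\mathbb{E}_{L_S^M}$; this tacitly requires the hidden constant to be controlled uniformly (bounded curvature over the relevant region of weight space and bounded loss under $p_T$) so that $\mathbb{E}_{L_S^M}[O(\Delta^2_{L_S^M})] = O(\mathbb{E}_{L_S^M}\Delta^2_{L_S^M}) = O(\Deltab^2)$, which is what legitimizes using the mean squared deviation $\Deltab^2$ as the locality measure.
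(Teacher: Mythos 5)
Your proposal is correct and follows essentially the same route as the paper's proof: it combines Lemma~\ref{lemma:wa_ensembling} (to pay for the WA--ENS transfer with the $O(\Deltab^2)$ locality term) with the bias--variance--covariance decomposition of $f_{\text{ENS}}$ under identically distributed learning procedures, merely performing the two stages in the opposite order and expanding the ensemble variance directly rather than citing the decomposition of \cite{kohavi1996bias}. Your two flagged subtleties --- that all off-diagonal pairs must share a common joint law (handled by \Cref{remark:identical_assumption}) and that the pointwise $O(\Delta^2_{L_S^M})$ must be pushed through both expectations --- are exactly the points the paper's proof relies on.
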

\begin{proof}
    This proof has two components:
    \begin{itemize}
        \item it follows the bias-variance-covariance decomposition from \cite{ueda1996generalization,brown2005between} for functional ensembling. It is tailored to WA by assuming that learning procedures are identically distributed.
        \item it injects the obtained equation into \Cref{lemma:wa_ensembling} to obtain the \Cref{prop:b_var_cov} for WA.
    \end{itemize}
    \paragraph{BVC for ensembling with identically distributed learning procedures}
    With $\bar{f}_S\left(x\right) = \mathbb{E}_{l_S} \left[f\left(x,\theta\left(l_S\right)\right)\right]$, we recall the bias-variance decomposition \cite{kohavi1996bias} (\Cref{eq:b_v}):
    \begin{align*}
        \mathbb{E}_{l_S}\mathcal{E}_T(\theta(l_S)) & = \mathbb{E}_{(x,y)\sim p_T}\Big[\biasb(x,y)^{2}+\varb(x)\Big],\\
        \text{where~}\biasb(x,y) & =  \operatorname{Bias}\{f|(x,y)\} = y-\bar{f}_S\left(x\right), \\
        \text{and~}\varb(x) &= \operatorname{Var}\{f|x\} = \mathbb{E}_{l_S}\left[\left(f(x, \theta(l_S)) - \bar{f}_S\left(x\right)\right)^{2}\right]. \\
    \end{align*}
    Using $f_{\text{ENS}}\triangleq f_{\text{ENS}}(\cdot, \{\theta(l_S^{(m)})\}_{m=1}^M) \triangleq \frac{1}{M} \sum_{m=1}^{M} f(\cdot, \theta(l_S^{(m)}))$ in this decomposition yields,
    \begin{equation} \label{eq:b_v_ens}
        \mathbb{E}_{L_S^M}\mathcal{E}_T(\{\theta(l_S^{(m)})\}_{m=1}^M)=\mathbb{E}_{x\sim p_T}\left[\operatorname{Bias}\left\{f_{\text{ENS}} \mid (x,y)\right\}^{2}+\operatorname{Var}\left\{f_{\text{ENS}} \mid x\right\}\right].
    \end{equation}
    As $f_{\text{ENS}}$ depends on $L_S^M$, we extend the bias into:
    \begin{align*}
        \operatorname{Bias}\left\{f_{\text{ENS}} \mid (x,y)\right\}&=y-\mathbb{E}_{L_S^M}\left[\frac{1}{M} \sum_{m=1}^{M} f(x, \theta(l_S^{(m)}))\right]=y-\frac{1}{M} \sum_{m=1}^{M} \mathbb{E}_{l_S^{(m)}}\left[f(x, \theta(l_S^{(m)}))\right]
    \end{align*}
    Under identically distributed $L_S^M\triangleq\{l_S^{(m)}\}_{m=1}^M$,
    $$
        \frac{1}{M} \sum_{m=1}^{M} \mathbb{E}_{l_S^{(m)}}\left[y-f(x, \theta(l_S^{(m)}))\right]=\mathbb{E}_{l_S}\left[y-f(x, \theta(l_S))\right]=\operatorname{Bias}\{f|(x,y)\}.
    $$
    Thus the bias of ENS is the same as for a single member of the WA.
    
    Regarding the variance:
    \begin{align*}
        &\operatorname{Var}\left\{f_{\text{ENS}} \mid x\right\}=\mathbb{E}_{L_S^M}\left[\left(\frac{1}{M} \sum_{m=1}^{M} f(x, \theta(l_S^{(m)}))-\mathbb{E}_{L_S^M}\left[\frac{1}{M} \sum_{m=1}^{M} f(x, \theta(l_S^{(m)}))\right]\right)^{2}\right].\\
        % &=\frac{1}{M^{2}} \sum_{m=1}^{M} \mathbb{E}_{l_S^{(m)}}\left[\left(f(x, \theta(l_S^{(m)}))-\mathbb{E}_{l_S^{(m)}}\left[f(x, \theta(l_S^{(m)}))\right]\right)^{2}\right]+ \\
        % &\quad\quad \frac{1}{M^{2}} \sum_{m} \sum_{m^{\prime} \neq m} \mathbb{E}_{l_S^{(m)}, l_S^{(m^{\prime})}}[\left(f(x, \theta(l_S^{(m)}))-\mathbb{E}_{l_S^{(m)}}\left[f(x, \theta(l_S^{(m)}))\right]\right) \\
        % & \left(f(x, \theta(l_S^{(m^\prime)}))-\mathbb{E}_{l_S^{(m^{\prime})}}\left[f(x, \theta(l_S^{(m^\prime)}))\right]\right)].
    \end{align*}
    Under identically distributed $L_S^M\triangleq\{l_S^{(m)}\}_{m=1}^M$,
    \begin{align*}
        &\operatorname{Var}\left\{f_{\text{ENS}} \mid x\right\} = \frac{1}{M^{2}} \sum_{m=1}^{M} \mathbb{E}_{l_S}\left[\left(f(x, \theta(l_S))-\mathbb{E}_{l_S}\left[f(x, \theta(l_S))\right]\right)^{2}\right]+ \\
        &\quad\quad\quad\quad \frac{1}{M^{2}} \sum_{m} \sum_{m^{\prime} \neq m} \mathbb{E}_{l_S, l_S^\prime}\left[\left(f(x, \theta(l_S))-\mathbb{E}_{l_S}\left[f(x, \theta(l_S))\right]\right)\left(f(x, \theta(l_S^\prime))-\mathbb{E}_{l_S^\prime}\left[f(x, \theta(l_S^\prime))\right]\right)\right] \\
        &= \frac{1}{M} \mathbb{E}_{l_S}\left[\left(f(x, \theta(l_S))-\mathbb{E}_{l_S}\left[f(x, \theta(l_S))\right]\right)^{2}\right] + \\
        &\quad\quad\quad\quad \frac{M-1}{M}\mathbb{E}_{l_S, l_S^\prime}\left[\left(f(x, \theta(l_S))-\mathbb{E}_{l_S}\left[f(x, \theta(l_S))\right]\right)\left(f(x, \theta(l_S^\prime))-\mathbb{E}_{l_S^\prime}\left[f(x, \theta(l_S^\prime))\right]\right)\right] \\
        &=\frac{1}{M} \varb\left(x\right)+\left(1-\frac{1}{M}\right) \covb\left(x\right).
    \end{align*}
    The variance is split into the variance of a single member (divided by $M$) and a covariance term.
    \paragraph{Combination with \Cref{lemma:wa_ensembling}}
    We recall that per \Cref{lemma:wa_ensembling},
    \begin{align*}
        \ell\left(f_{\text{WA}}(x), y\right) = \ell\left(f_{\text{ENS}}(x) , y\right) + O(\Delta_{L_S^M}^2).
    \end{align*}
    Then we have:
    \begin{align*}
        \mathcal{E}_T(\theta_{\text{WA}}(L_S^M))&=\mathbb{E}_{(x,y) \sim p_T}[\ell(f_{\text{WA}}(x),y)]\\
        &=\mathbb{E}_{(x,y) \sim p_T}[\ell(f_{\text{ENS}}(x),y)] + O(\Delta_{L_S^M}^2) = \mathcal{E}_T(\{\theta(l_S^{(m)})\}_{m=1}^M) + O(\Delta_{L_S^M}^2), \\
        \mathbb{E}_{L_S^M} \mathcal{E}_T(\theta_{\text{WA}}(L_S^M))&=\mathbb{E}_{L_S^M}\mathcal{E}_T(\{\theta(l_S^{(m)})\}_{m=1}^M) + O(\mathbb{E}_{L_S^M}[\Delta_{L_S^M}^2]).
    \end{align*}
    We eventually obtain the result:
    \begin{align*}
        % \mathcal{E}_T(\theta_{\text{WA}}(L_S^M)) =
        \mathbb{E}_{L_S^M}\mathcal{E}_T(\theta_{\text{WA}}(L_S^M)) = \mathbb{E}_{(x,y)\sim p_T}\Big[\biasb(x,y)^{2}+\frac{1}{M} \varb(x)+\frac{M-1}{M} \covb(x)\Big] + O(\Deltab^2).
    \end{align*}
\end{proof}%
\subsection{Bias, correlation shift and support mismatch}
\label{app:bias_correlation}
We first present in \Cref{app:app_bias_full} a decomposition of the \ood bias without any assumptions.
We then justify in \Cref{app:bias_correlation_ass_noiidbias} the simplifying \Cref{ass:no_bias_iid} from \Cref{subsec:expression_ood_bias}.
\subsubsection{\ood bias}
\label{app:app_bias_full}
\begin{proposition}[\ood bias]
    \label{prop:app_bias_full}
    Denoting $\bar{f}_{S}\left(x\right) = \mathbb{E}_{l_S}[f\left(x,\theta\left(l_S\right)\right)]$, the bias is:
    \begin{align*}
        &\mathbb{E}_{(x,y)\sim p_T} [\biasb^2(x, y)] = \int_{\X_T \cap \X_S} \left(f_T\left(x\right)-f_S\left(x\right)\right)^{2} p_T(x) dx                                          & (\text{Correlation shift})                                               \\
                         & + \int_{\X_T \cap \X_S} \left(f_{S}(x)-\bar{f}_{S}\left(x\right)\right)^{2} p_T(x) dx                                                            & (\text{Weighted \iid bias})                                               \\
                         & + \int_{\X_T \cap \X_S} 2 \left(f_T\left(x\right) - f_S\left(x\right)\right) \left(f_S\left(x\right) - \bar{f}_{S}\left(x\right)\right) p_T(x)dx & (\text{Interaction \iid bias and corr. shift})                     \\
                         & + \int_{\X_T \setminus \X_S} \left(f_T\left(x\right)-\bar{f}_{S}\left(x\right)\right)^{2} p_T(x) dx.                                   & (\text{Support mismatch}) %
    \end{align*}%
\end{proposition}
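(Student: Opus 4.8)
The plan is to start from the definition $\biasb(x,y) = y - \bar{f}_S(x)$ and immediately invoke the no-noise assumption from \Cref{subsec:notations}: for any draw $(x,y)\sim p_T$ we have $y = f_T(x)$ deterministically, so the joint expectation reduces to an expectation over $x\sim p_T(x)$ alone. This turns the squared bias into the deterministic integral
$$\mathbb{E}_{(x,y)\sim p_T}[\biasb^2(x,y)] = \int_{\X_T} \left(f_T(x) - \bar{f}_S(x)\right)^2 p_T(x)\, dx,$$
where the integration effectively ranges over $\X_T$ since $p_T$ vanishes outside its support by definition of $\X_T$.

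The central step is then to split this integral over the partition $\X_T = (\X_T \cap \X_S) \sqcup (\X_T \setminus \X_S)$. On the mismatch region $\X_T \setminus \X_S$ the source labeling function $f_S$ is simply undefined, so this piece is left untouched and yields the \emph{support mismatch} term verbatim. On the intersection $\X_T \cap \X_S$, where $f_S$ is well-defined, I insert it through the elementary add-and-subtract $f_T(x) - \bar{f}_S(x) = (f_T(x) - f_S(x)) + (f_S(x) - \bar{f}_S(x))$ and expand the square. This produces exactly three contributions integrated against $p_T$ over $\X_T \cap \X_S$: the squared posterior gap $(f_T - f_S)^2$ (the \emph{correlation shift}), the squared \iid discrepancy $(f_S - \bar{f}_S)^2$ (the \emph{weighted \iid bias}), and the cross term $2(f_T - f_S)(f_S - \bar{f}_S)$ (the \emph{interaction} term).

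Collecting the support-mismatch piece together with these three pieces gives the four-line identity claimed in \Cref{prop:app_bias_full}, by linearity of the integral. I expect no genuine obstacle, since the argument is purely algebraic expansion followed by domain splitting; the only points requiring care are (i) using the no-noise hypothesis to replace $y$ by $f_T(x)$ before expanding, and (ii) introducing $f_S$ only on $\X_T \cap \X_S$ — which is precisely why the support-mismatch term cannot be decomposed further and remains irreducible, consistent with the discussion following \Cref{prop:bias}.
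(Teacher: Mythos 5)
Your proposal is correct and follows essentially the same route as the paper's proof: replace $y$ by $f_T(x)$ using the no-noise assumption, split the integral over $\X_T \cap \X_S$ and $\X_T \setminus \X_S$, then add-and-subtract $f_S$ on the intersection and expand the square. No gaps.
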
%

\begin{proof}
    This proof is original and based on splitting the \ood bias in and out of $\X_S$:
    \begin{align*}
        &\mathbb{E}_{(x,y)\sim p_T}[\biasb^2(x, y)] = \mathbb{E}_{(x,y)\sim p_T} \left(y -\bar{f}_{S}\left(x\right)\right)^{2}                          \\
                                         & = \int_{\X_T} \left(f_T\left(x\right)-\bar{f}_{S}\left(x\right)\right)^{2} p_T(x) dx                \\
                                         & =\int_{\X_T \cap \X_S} \left(f_T\left(x\right)-\bar{f}_{S}\left(x\right)\right)^{2} p_T(x) dx +\int_{\X_T \setminus \X_S} \left(f_T\left(x\right)-\bar{f}_{S}\left(x\right)\right)^{2} p_T(x) dx.
    \end{align*}
    To decompose the first term, we write $\forall x \in \X_S$, $ - \bar{f}_{S}\left(x\right) = - f_S\left(x\right) + \left(f_{S}(x) - \bar{f}_{S}\left(x\right)\right)$.
    \begin{align*}
        &\int_{\X_T \cap \X_S} \left(f_T\left(x\right)-\bar{f}_{S}\left(x\right)\right)^{2} p_T(x) dx=\int_{\X_T \cap \X_S} \left((f_T\left(x\right) - f_S\left(x\right)) + \left(f_{S}(x) - \bar{f}_{S}\left(x\right)\right) \right)^{2} p_T(x) dx \\
        & =\int_{\X_T \cap \X_S} \left(f_T\left(x\right) - f_S\left(x\right)\right)^{2} p_T(x)dx + \int_{\X_T \cap \X_S} \left(f_{S}(x) - \bar{f}_{S}\left(x\right)\right)^{2} p_T(x)dx \\
        & + \int_{\X_T \cap \X_S} 2 \left(f_T\left(x\right) - f_S\left(x\right)\right) \left(f_S\left(x\right) - \bar{f}_{S}\left(x\right)\right) p_T(x)dx.
    \end{align*}%
    % Finally because $\X:=\{x\in \X / p_T(x)>0\}$, we have:
    % $$\int_{\X_T \cap \X_S} \left(f_{S}(x) - \bar{f}_{S}\left(x\right)\right)^{2} p_T(x)dx = \int_{\X_T \cap \X_S} \left(f_{S}(x) - \bar{f}_{S}\left(x\right)\right)^{2} p_T(x)dx.$$
\end{proof}
The four terms can be qualitatively analyzed:
\begin{itemize}
    \item The first term measures differences between train and test labelling function. By rewriting $\forall x \in \X_T \cap \X_S$, $f_T(x) \triangleq \mathbb{E}_{p_T}[Y|X=x]$ and $f_S(x) \triangleq \mathbb{E}_{p_S}[Y|X=x]$, this term measures whether conditional distributions differ.
    This recovers a similar expression to the correlation shift formula from \cite{ye2021odbench}.
    \item The second term is exactly the \iid bias, but weighted by the marginal distribution $p_T(X)$.
    \item The third term  $\int_{\X_T \cap \X_S} 2 \left(f_T\left(x\right) - f_S\left(x\right)\right) \left(f_S\left(x\right) - \bar{f}_{S}\left(x\right)\right) p_T(x)dx$ measures to what extent the \iid bias compensates the correlation shift. It can be negative if (by chance) the \iid bias goes in opposite direction to the correlation shift.
    \item The last term measures support mismatch between test and train marginal distributions. It lead to the \enquote{No free lunch for learning representations for DG} in \cite{ruan2022optimal}. The error is irreducible because \enquote{outside of the source domain, the label distribution is unconstrained}: \enquote{for any domain which gives some probability mass on an example that has not been seen during training, then all \textelp{} labels for that example} are possible.
\end{itemize}

\subsubsection{Discussion of the small \iid bias \Cref{ass:no_bias_iid}}
\label{app:bias_correlation_ass_noiidbias}
\Cref{ass:no_bias_iid} states that $\exists \epsilon > 0 \text{~small~s.t.~} \forall x\in \X_S, |f_{S}\left(x\right)-\bar{f}_{S}\left(x\right)|\leq \epsilon$ where $\bar{f}_{S}\left(x\right) = \mathbb{E}_{l_S} \left[f\left(x,\theta\left(l_S\right)\right)\right]$.
$\bar{f}_{S}$ is the expectation over the possible learning procedures $l_S=\{d_S, c\}$. Thus \Cref{ass:no_bias_iid} involves:
\begin{itemize}
    \item the network architecture $f$ which should be able to fit a given dataset $d_S$. This is realistic when the network is sufficiently parameterized, \ie when the number of weights $|\theta|$ is large.
    \item the expected datasets $d_S$ which should be representative enough of the underlying domain $S$; in particular the dataset size $n_S$ should be large.
    \item the sampled configurations $c$ which should be well chosen: the network should be trained for enough steps, with an adequate learning rate ...
\end{itemize}
For DiWA, this is realistic as it selects the weights with the highest training validation accuracy from each run. For SWAD \cite{cha2021wad}, this is also realistic thanks to their overfit-aware weight selection strategy. In contrast, this assumption may not perfectlty hold for MA \cite{arpit2021ensemble}, which averages weights starting from batch $100$ until the end of training: indeed, $100$ batches are not enough to fit the training dataset.%
\subsubsection{\ood bias when small \iid bias}
\label{app:simplified_bias}
We now develop our equality under \Cref{ass:no_bias_iid}.
\begin{proposition*}[\ref{prop:bias}. \ood bias when small \iid bias]
    With a bounded difference between the labeling functions $f_T-f_S$ on $\X_T \cap \X_S$, under \Cref{ass:no_bias_iid}, the bias on domain $T$ is:
    \begin{equation*} \tag{\ref{eq:bias}}
        \begin{aligned}
            &\mathbb{E}_{(x,y)\sim p_T}[\biasb^2(x, y)] = \text{Correlation shift} + \text{Support mismatch} + O(\epsilon), \\
            &\text{where~} \text{Correlation shift} = \int_{\X_T \cap \X_S} \left(f_T(x)-f_S(x)\right)^{2} p_T(x) dx, \\
            &\text{and~} \text{Support mismatch} = \int_{\X_T \setminus \X_S} \left(f_T(x)-\bar{f}_{S}\left(x\right)\right)^{2} p_T(x) dx.%
        \end{aligned}%
    \end{equation*}%
\end{proposition*}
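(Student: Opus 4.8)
The plan is to start from the unconditional four-term decomposition of \Cref{prop:app_bias_full}, which splits $\mathbb{E}_{(x,y)\sim p_T}[\biasb^2(x,y)]$ into the Correlation shift, a Weighted \iid bias term, an Interaction term, and the Support mismatch. Since the Correlation shift and Support mismatch already appear verbatim in the target statement, it suffices to show that the two remaining terms---the Weighted \iid bias and the Interaction---are each $O(\epsilon)$ under \Cref{ass:no_bias_iid}, and then absorb them into the remainder.

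First I would bound the Weighted \iid bias. On the domain of integration $\X_T \cap \X_S$ every point lies in $\X_S$, so \Cref{ass:no_bias_iid} gives $|f_S(x)-\bar{f}_S(x)|\leq \epsilon$ pointwise. Squaring yields $(f_S(x)-\bar{f}_S(x))^2 \leq \epsilon^2$, and integrating against the probability density $p_T$ (whose mass over any measurable subset is at most $1$) bounds this term by $\epsilon^2 = O(\epsilon)$ for small $\epsilon$.

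Next I would bound the Interaction term $\int_{\X_T \cap \X_S} 2\,(f_T(x)-f_S(x))(f_S(x)-\bar{f}_S(x))\,p_T(x)\,dx$. Here I would invoke the hypothesis that $f_T-f_S$ is bounded on $\X_T \cap \X_S$, say $|f_T(x)-f_S(x)|\leq C$, together with $|f_S(x)-\bar{f}_S(x)|\leq \epsilon$ from \Cref{ass:no_bias_iid}. The integrand is then dominated in absolute value by $2C\epsilon\, p_T(x)$, so the term is at most $2C\epsilon$ in magnitude, i.e.\ $O(\epsilon)$. Summing the two bounds and collecting them into the $O(\epsilon)$ remainder leaves exactly the Correlation shift plus the Support mismatch, which is the claimed identity.

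There is no genuine obstacle: the result follows directly from \Cref{prop:app_bias_full} combined with these two pointwise estimates. The only point demanding care is the Interaction term, where the boundedness of $f_T-f_S$ is essential---without it an $\epsilon$-small factor could be multiplied by an arbitrarily large labeling-function gap, so the hypothesis that $f_T-f_S$ is bounded on $\X_T \cap \X_S$ cannot be dropped. One should also note that merging the $\epsilon^2$ and $2C\epsilon$ contributions into a single $O(\epsilon)$ is legitimate precisely because $\epsilon$ is assumed small.
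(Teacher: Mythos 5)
Your argument is exactly the paper's proof: it starts from \Cref{prop:app_bias_full}, bounds the Weighted \iid bias by $\epsilon^2$ via \Cref{ass:no_bias_iid}, and bounds the Interaction term by a constant times $\epsilon$ using the boundedness of $f_T-f_S$ on $\X_T\cap\X_S$, absorbing both into $O(\epsilon)$. The proposal is correct and matches the paper's route step for step.
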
%
\begin{proof}
    We simplify the second and third terms from \Cref{prop:app_bias_full} under \Cref{ass:no_bias_iid}.

    \textbf{The second term} is $\int_{\X_T \cap \X_S} \left(f_{S}\left(x\right)-\bar{f}_{S}\left(x\right)\right)^{2} p_T(x) dx$. Under \Cref{ass:no_bias_iid}, $|f_{S}\left(x\right)-\bar{f}_{S}\left(x\right)|\leq \epsilon$. Thus the second term is $O(\epsilon^2)$.
    
    \textbf{The third term} is $\int_{\X_T \cap \X_S} 2 \left(f_T\left(x\right) - f_S\left(x\right)\right) \left(f_S\left(x\right) - \bar{f}_{S}\left(x\right)\right) p_T(x)dx$.
    As $f_T - f_S$ is bounded on $\X_S \cap \X_T$, $\exists K\geq0$ such that $\forall x \in \X_S$,
    $$
        |\left(f_T(x) - f_S(x)\right)\left(f_{S}(x) - \bar{f}_{S}\left(x\right)\right) p_T(x)| \leq K \left|f_{S}(x) - \bar{f}_{S}\left(x\right)\right|p_T(x) = O(\epsilon) p_T(x).
    $$
    Thus the third term is $O(\epsilon)$.
    
    Finally, note that we cannot say anything about $\bar{f}_{S}\left(x\right)$ when $x\in \X_T \setminus \X_S$.
\end{proof}%

To prove the previous equality, we needed a bounded difference between labeling functions $f_T-f_S$ on $\X_T \cap \X_S$.
We relax this bounded assumption to obtain an inequality in the following \Cref{prop:oodbias_smalliid_nobound}.
\begin{proposition}[\ood bias when small \iid bias without bounded difference between labeling functions]
\label{prop:oodbias_smalliid_nobound}
    Under \Cref{ass:no_bias_iid},
    \begin{equation}
        \begin{aligned}
            &\mathbb{E}_{(x,y)\sim p_T}[\biasb^2(x, y)] \leq 2 \times \text{Correlation shift} + \text{Support mismatch} + O(\epsilon^2)
            % , \\
            % &\text{where~} \text{Correlation shift} = \int_{\X_T \cap \X_S} \left(f_T(x)-f_S(x)\right)^{2} p_T(x) dx, \\
            % &\text{and~} \text{Support mismatch} = \int_{\X_T \setminus \X_S} \left(f_T(x)-\bar{f}_{S}\left(x\right)\right)^{2} p_T(x) dx. %
        \end{aligned}%
    \end{equation}%
\end{proposition}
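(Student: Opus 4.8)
The plan is to start from the unconditional decomposition already established in \Cref{prop:app_bias_full} and then control the two ``nuisance'' terms (the weighted \iid bias and the interaction term) under \Cref{ass:no_bias_iid}. The essential difference with the proof of \Cref{prop:bias} is that here I no longer assume $f_T-f_S$ to be bounded on $\X_T\cap\X_S$; consequently the interaction term can no longer be shown to be $O(\epsilon)$, and I instead absorb it into the correlation shift via Young's inequality. This is precisely what turns the equality of \Cref{prop:bias} into an inequality and introduces the factor $2$ in front of the correlation shift.

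Concretely, I keep the correlation shift and support mismatch terms of \Cref{prop:app_bias_full} untouched. The weighted \iid bias term $\int_{\X_T\cap\X_S}(f_S(x)-\bar{f}_S(x))^2\,p_T(x)\,dx$ is $O(\epsilon^2)$ by a direct application of \Cref{ass:no_bias_iid}, exactly as in the proof of \Cref{prop:bias}.

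The key step is the interaction term. Setting $a(x)=f_T(x)-f_S(x)$ and $b(x)=f_S(x)-\bar{f}_S(x)$, I apply the pointwise inequality $2\,a(x)\,b(x)\le a(x)^2+b(x)^2$ under the integral sign, giving
\begin{align*}
    \int_{\X_T\cap\X_S} 2\,a(x)\,b(x)\,p_T(x)\,dx
    &\le \int_{\X_T\cap\X_S} a(x)^2\,p_T(x)\,dx \\
    &\quad + \int_{\X_T\cap\X_S} b(x)^2\,p_T(x)\,dx .
\end{align*}
The first integral on the right is exactly the correlation shift, while the second is again $O(\epsilon^2)$ by \Cref{ass:no_bias_iid}. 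Summing the original correlation shift, this upper bound on the interaction term, the $O(\epsilon^2)$ weighted \iid bias, and the support mismatch then yields the claimed
$$\mathbb{E}_{(x,y)\sim p_T}[\biasb^2(x,y)] \le 2\times\text{Correlation shift} + \text{Support mismatch} + O(\epsilon^2).$$

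I do not expect any genuine obstacle beyond bookkeeping. The only point requiring care is the direction of the inequality: since I seek an upper bound on the total bias, replacing the signed interaction term by $a(x)^2+b(x)^2$ is legitimate regardless of its sign, but it forbids any cancellation between the \iid bias and the correlation shift, cancellation that the equality version of \Cref{prop:bias} could exploit when $f_T-f_S$ is controlled. This is the reason the correlation shift appears with coefficient $2$ rather than $1$.
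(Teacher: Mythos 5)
Your proposal is correct and is essentially the paper's own argument: the paper applies $(a+b)^2\le 2(a^2+b^2)$ to $\left(\left(f_T-f_S\right)+\left(f_S-\bar f_S\right)\right)^2$ on $\X_T\cap\X_S$, which is exactly your Young's-inequality bound $2ab\le a^2+b^2$ on the cross term of the decomposition in \Cref{prop:app_bias_full}, yielding the same factor $2$ on the correlation shift and the same $O(\epsilon^2)$ remainder. The only difference is presentational (you bound the interaction term of the existing decomposition, the paper re-expands the square directly), so there is nothing further to flag.
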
%
\begin{proof}
    % Under \Cref{ass:no_bias_iid}, $|f_{S}\left(x\right)-\bar{f}_{S}\left(x\right)|\leq \epsilon$.
    We follow the same proof as in \Cref{prop:app_bias_full}, except that we now use: $(a+b)^2\leq 2 (a^2 + b^2)$. Then,
    \begin{align*}
    &\int_{\X_T \cap \X_S} \left(f_T\left(x\right)-\bar{f}_{S}\left(x\right)\right)^{2} p_T(x) dx=\int_{\X_T \cap \X_S} \left((f_T\left(x\right) - f_S\left(x\right)) + \left(f_{S}(x) - \bar{f}_{S}\left(x\right)\right) \right)^{2} p_T(x) dx \\
    & \leq 2 \times \int_{\X_T \cap \X_S} \left(f_T\left(x\right) - f_S\left(x\right)\right)^{2} + \left(f_{S}(x) - \bar{f}_{S}\left(x\right)\right)^{2} p_T(x) dx \\
    & \leq 2 \times \int_{\X_T \cap \X_S} \left(f_T\left(x\right) - f_S\left(x\right)\right)^{2} p_T(x) dx + 2 \times \int_{\X_T \cap \X_S} \epsilon^2 p_T(x) dx \\
    & \leq 2 \times \int_{\X_T \cap \X_S} \left(f_T\left(x\right) - f_S\left(x\right)\right)^{2} p_T(x) dx + O(\epsilon^2)
    \end{align*}%
    % Thus the second and third terms from \Cref{prop:app_bias_full} are zero.
    % Then we replace $\forall x \in \X_S \cap  \X_T$: $$f_S(x)=\mathbb{E}_{p_S}[Y|X=x] \text{~~and~~} f_T(x)=\mathbb{E}_{p_T}[Y|X=x].$$
    % Finally, note that we cannot say anything about $\bar{f}_{S}\left(x\right)$ when $x\in \X_T \setminus \X_S$.
\end{proof}%

\subsection{Variance and diversity shift}
\label{app:var_diversity}
We prove the link between variance and diversity shift.
Our proof builds upon the similarity between NNs and GPs in the kernel regime, detailed in \Cref{app:nns_as_gps}.
We discuss our simplifying \Cref{ass:inter_sample} in \Cref{app:discussion_inter_sample}.
We present our final proof in \Cref{app:proof_ood_var}. We discuss the relation between variance and initialization in \Cref{remark:mmdk}.

\subsubsection{Neural networks as Gaussian processes}
\label{app:nns_as_gps}
We fix $d_S, d_T$ and denote $X_{d_S}=\{x_S\}_{(x_S,y_S)\in d_S}$, $X_{d_T}=\{x_T\}_{(x_T,y_T)\in d_T}$ their respective input supports.
We fix the initialization of the network.
$l_S$ encapsulates all other sources of randomness.
% We define $l_S=\{d_S,c\}$ as the learning procedure, where initialization is fixed \ie excluded from $c$.
% We define $l_S=\{d_S,c\}$ as the learning procedure, where initialization is fixed \ie excluded from $c$.
\begin{lemma}[Inspired from \cite{RasmussenW06}] \label{lemma:var_gp}
    Given a NN $f(\cdot, \theta(l_S))$ under \Cref{ass:infinite_width}, we denote $K$ its neural tangent kernel and $K(X_{d_S}, X_{d_S})\triangleq(K\left(x_S, x_S^\prime\right))_{x_S, x_S^\prime \in X_{d_S}^2} \in \mathbb{R}^{n_S\times n_S}$.
    Given $x\in\X$, we denote $K(x, X_{d_S})\triangleq[K\left(x, x_S\right)]_{x_S \in X_{d_S}} \in \mathbb{R}^{n_S}$. Then:
    \begin{equation}
        \begin{aligned}
            &\varb(x) = K(x, x)-K(x, X_{d_S})K(X_{d_S}, X_{d_S})^{-1} K(x, X_{d_S})^\intercal.
        \end{aligned}\label{eq:var}
    \end{equation}
\end{lemma}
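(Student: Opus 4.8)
The plan is to recognize $\varb(x)$ as the posterior predictive variance of a Gaussian process and to read the formula off the standard Gaussian conditioning identity; the only genuinely new ingredient is the reduction of \enquote{variance over learning procedures} to a GP posterior variance, while the remaining steps are routine multivariate-Gaussian algebra.

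First I would invoke \Cref{ass:infinite_width}. Once $d_S$, $d_T$ and the initialization scheme are fixed, the trained predictor $f(\cdot,\theta(l_S))$, viewed as a random function of the remaining randomness $l_S$, is distributed as a draw from the posterior of a Gaussian process whose prior covariance is the neural tangent kernel $K$, conditioned on the source observations. Concretely, the NTK linearization at convergence gives $f(x,\theta(l_S)) = f(x,\theta_0) + K(x,X_{d_S})K(X_{d_S},X_{d_S})^{-1}\bigl(y_S - f(X_{d_S},\theta_0)\bigr)$, an affine functional of the Gaussian initial function $f(\cdot,\theta_0)$ with covariance $K$ (this is the correspondence set up in \Cref{app:nns_as_gps}). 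Because the data contain no noise (the problem setup enforces $f_S(x_S)=y_S$ exactly on $\X_S$), the conditioning is exact, i.e.\ we condition on $f(X_{d_S})=y_S$ with no observation-noise regularizer $\sigma^2 I$.

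Next I would apply Gaussian conditioning. The vector $(f(x),f(X_{d_S}))$ is jointly Gaussian under the NTK prior with block covariance built from $K(x,x)$, $K(x,X_{d_S})$ and $K(X_{d_S},X_{d_S})$; assuming distinct inputs so that the Gram matrix $K(X_{d_S},X_{d_S})$ is positive definite and hence invertible, the Schur-complement formula for the conditional covariance of a multivariate Gaussian (\cite{RasmussenW06}) yields $\mathrm{Var}\bigl[f(x)\mid f(X_{d_S})\bigr] = K(x,x)-K(x,X_{d_S})K(X_{d_S},X_{d_S})^{-1}K(x,X_{d_S})^\intercal$. Equivalently, one reaches the same expression by directly computing the variance of the affine functional above, noting that the additive data term $K(x,X_{d_S})K(X_{d_S},X_{d_S})^{-1}y_S$ is deterministic and the cross-covariance and prior terms combine to the Schur complement.

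Finally I would identify this conditional variance with $\varb(x)$. The crucial observation is that the conditional covariance above is \emph{independent of the realized labels} $y_S$: it depends only on the inputs. Hence the GP posterior variance coincides with the variance of the trained predictor over $l_S$, namely $\varb(x)=\mathbb{E}_{l_S}\bigl[(f(x,\theta(l_S))-\bar{f}_S(x))^2\bigr]$ with $\bar{f}_S(x)$ the posterior mean, establishing \Cref{eq:var}. The main obstacle is precisely this first step, i.e.\ justifying that variance across learning procedures is captured by the NTK GP posterior variance; the algebra in the conditioning step is standard.
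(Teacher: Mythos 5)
Your proof is correct and follows essentially the same route as the paper: both identify $\varb(x)$ with the noise-free GP posterior predictive variance of the NTK Gaussian process conditioned on $d_S$ and read off the Schur-complement formula from \cite{RasmussenW06} (Eq.~2.26), which the paper cites directly while you spell out the conditioning algebra and the explicit NTK linearization. Your added observation that the posterior covariance is label-independent, so it equals the variance over $l_S$, is exactly the (implicit) justification the paper relies on.
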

%(eq (2.26) p17 \url{http://gaussianprocess.org/gpml/chapters/RW.pdf})
\begin{proof}
    Under \Cref{ass:infinite_width}, NNs are equivalent to GPs.
    $\varb(x)$ is the formula of the variance of the GP posterior given by Eq. (2.26) in \cite{RasmussenW06}, when conditioned on $d_S$.
    This formula thus also applies to the variance $f(\cdot,\theta(l_S))$ when $l_S$ varies (at fixed $d_S$ and initialization).
\end{proof}
% \begin{wrapfigure}[10]{hR!}{0.36\textwidth}
%     \vspace{-1.em}
%     \includegraphics[width=\linewidth]{pictures/files/samediffruns_m_soup.png}%
%     \caption{\textbf{Out-of-distribution accuracies on OfficeHome}. Averaging weights from multiple runs works better.}
%     \label{fig:soup:samediffruns_m_soup}%
% \end{wrapfigure}%
\begin{figure}[H]
    \vspace{-1.em}
    \centering
    \includegraphics[width=0.4\textwidth]{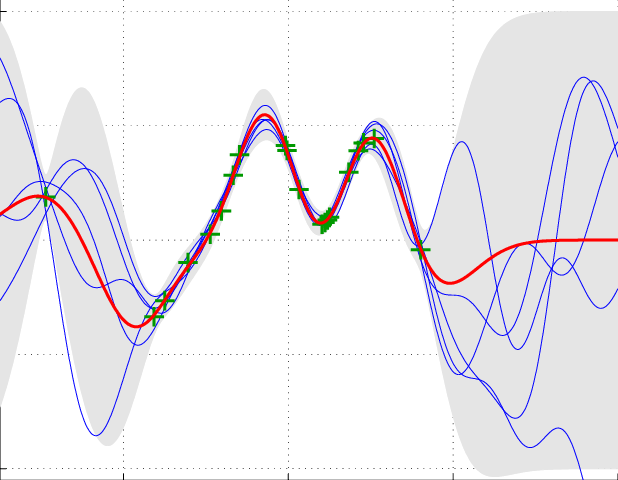}%
    \caption{\textbf{Mean and variance of a Gaussian process's prediction}. Image from \cite{perez2013gaussian}. Intuitively, variance grows when samples are distant from training samples.}
    \label{fig:gp}%
\end{figure}%

\subsubsection{Discussion of the same norm and low similarity \Cref{ass:inter_sample} on source dataset}
\label{app:discussion_inter_sample}
\Cref{lemma:var_gp} shows that the variance only depends on the input distributions $p(X)$ without involving the label distributions $p(Y|X)$.
This formula highlights that the variance is related to shifts in input similarities (measured by $K$) between $X_{d_S}$ and $X_{d_T}$.
Yet, a more refined analysis of the variance requires additional assumptions, in particular to obtain a closed-form expression of $K(X_{d_S}, X_{d_S})^{-1}$.
\Cref{ass:inter_sample} is useful because then $K(X_{d_S}, X_{d_S})$ is diagonally dominant and can be approximately inverted (see \Cref{app:proof_ood_var}).

The first part of \Cref{ass:inter_sample} assumes that $\exists \lambda_S$ such that all training inputs $x_S\in X_{d_S}$ verify $K(x_S, x_S)= \lambda_S$.
Note that this equality is standard in some kernel machine algorithms \cite{ah2010normalized,ghojogh2021reproducing,rennie2005} and is usually achieved by replacing $K(x,x^\prime)$ by $\lambda_S \frac{K(x,x^\prime)}{\sqrt{K(x,x)}\sqrt{K(x^\prime,x^\prime)}}, \forall(x,x^\prime)\in {(X_{d_S}\cup X_{d_T})}^2$. 
In the NTK literature, this equality is achieved without changing the kernel by normalizing the samples of $X_{d_S}$ such that they lie on the hypersphere; this input preprocessing was used in \cite{Lee2018DeepNN}.
This is theoretically based: for example, the NTK $K(x, x^\prime)$ for an architecture with an initial fully connected layer only depends on $\|x\|,\|x^\prime\|,\langle x, x^\prime\rangle$ \cite{yang2019}. Thus in the case where all samples from $X_{d_S}$ are preprocessed to have the same norm, the value of $K(x_S,x_S)$ does not depend on $x_S\in X_{d_S}$; we denote $\lambda_S$ the corresponding value.

The second part of \Cref{ass:inter_sample} states that $\exists 0 \leq \epsilon \ll \lambda_S, \text{s.t.~} \forall x_S, x_S^\prime \in X_{d_S}^2, x_S\neq x_S^\prime \Rightarrow |K(x_S,x_S^\prime)|\leq\epsilon$, \ie that training samples are dissimilar and do not interact.
This diagonal structure of the NTK \cite{Jacot2018}, with diagonal values larger than non-diagonal ones, is consistent with empirical observations from \cite{seleznova2022neural} at initialization.
% at initialization, when the kernel regime is relaxed.
Theoretically, this is reasonable if $K$ is close to the RBF kernel $K_{h}\left(x, x^{\prime}\right)=\exp (-\left\|x-x^{\prime}\right\|_{2}^{2}/h)$ where $h$ would be the bandwidth: in this case, \Cref{ass:inter_sample} is satisfied when training inputs are distant in pixel space.

We now provide an analysis of the variance where the diagonal assumption is relaxed.
Specifically, we provide the sketch for proving an upper-bound of the variance when the NTK has a block-diagonal structure.
This is indeed closer to the empirical observations in \cite{seleznova2022neural} at the end of training, consistently with the local elasticity property of NNs \cite{He2020The}.
We then consider the dataset $d_{S^\prime} \subset d_S$ made of one sample per block, to which \Cref{ass:inter_sample} applies.
As decreasing the size of a training dataset empirically reduces variance \cite{brain1999effect}, the variance of $f$ trained on $d_S$ is upper-bounded by the variance of $f$ trained on $d_{S^\prime}$; the latter is given by applying \Cref{prop:var} to $d_{S^\prime}$.
We believe that the proper formulation of this idea is beyond the scope of this article and best left for future theoretical work.

\subsubsection{Expression of \ood variance}
\label{app:proof_ood_var}

\begin{proposition*}[\ref{prop:var}]
    Given $f$ trained on source dataset $d_S$ (of size $n_S$) with NTK $K$, under \Cref{ass:infinite_width,ass:inter_sample}, the variance on dataset $d_T$ is:%
    \begin{equation*} \tag{\ref{eq:int_var}}
        \mathbb{E}_{x_T\in X_{d_T}}[\varb(x_T)] = \frac{n_S}{2\lambda_S}\text{MMD}^{2}(X_{d_S}, X_{d_T}) + \lambda_T - \frac{n_S}{2\lambda_S}\beta_T + O(\epsilon),%
    \end{equation*}
    with $\text{MMD}$ the empirical Maximum Mean Discrepancy in the RKHS of $K^2(x,y)=(K(x,y))^2$;$\lambda_T \triangleq \mathbb{E}_{x_T \in X_{d_T}} K\left(x_T, x_T\right)$ and $\beta_T \triangleq \mathbb{E}_{(x_T,x_T^\prime)\in X_{d_T}^2, x_T \neq x_T^\prime} K^2\left(x_T, x_T^{\prime}\right)$ the empirical mean similarities resp. measured between identical (\wrt $K$) and different (\wrt $K^2$) samples averaged over $X_{d_T}$.%
\end{proposition*}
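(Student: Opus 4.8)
The plan is to start from the closed-form Gaussian-process variance of \Cref{lemma:var_gp}, namely $\varb(x) = K(x,x) - K(x, X_{d_S}) K(X_{d_S}, X_{d_S})^{-1} K(x, X_{d_S})^\intercal$, and to exploit \Cref{ass:inter_sample} to replace the Gram-matrix inverse by an explicit near-diagonal approximation. Concretely, I would write $K(X_{d_S}, X_{d_S}) = \lambda_S (I_{n_S} + E/\lambda_S)$, where $I_{n_S}$ is the identity and $E$ collects the off-diagonal entries, each bounded by $\epsilon$ in absolute value. Since $\epsilon \ll \lambda_S$, the matrix $I_{n_S} + E/\lambda_S$ is diagonally dominant, and its inverse admits a Neumann expansion $\frac{1}{\lambda_S}(I_{n_S} - E/\lambda_S + \dots)$, so that $K(X_{d_S}, X_{d_S})^{-1} = \frac{1}{\lambda_S} I_{n_S} + O(\epsilon)$ entrywise.

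Substituting this into the quadratic form and using $K(x, X_{d_S})K(x, X_{d_S})^\intercal = \sum_{x_S \in X_{d_S}} K(x, x_S)^2 = \sum_{x_S} K^2(x, x_S)$ gives, for any input $x$, the pointwise identity $\varb(x) = K(x,x) - \frac{1}{\lambda_S}\sum_{x_S \in X_{d_S}} K^2(x, x_S) + O(\epsilon)$. Averaging over $x_T \in X_{d_T}$ turns the first term into $\lambda_T$ by definition, and leaves the cross term $-\frac{1}{\lambda_S n_T}\sum_{x_T, x_S} K^2(x_T, x_S)$, i.e. $-\frac{n_S}{\lambda_S}$ times the empirical cross-mean of $K^2$ between source and target inputs.

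The final step is to recognize this cross-mean inside the (biased) empirical MMD in the RKHS of $K^2$: from $\text{MMD}^2(X_{d_S}, X_{d_T}) = \frac{1}{n_S^2}\sum_{x_S, x_S'} K^2 + \frac{1}{n_T^2}\sum_{x_T, x_T'} K^2 - \frac{2}{n_S n_T}\sum_{x_S, x_T} K^2$, I would solve for the cross-sum and substitute. The within-target double sum contributes the $\beta_T$ term once its diagonal (of relative size $O(1/n_T)$) is separated off; the within-source double sum, by \Cref{ass:inter_sample}, has diagonal equal to $n_S\lambda_S^2$ and off-diagonal entries that are $O(\epsilon^2)$, so it reduces to $\lambda_S^2/n_S$ up to $O(\epsilon^2)$. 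Collecting everything produces the shift-dependent term $\frac{n_S}{2\lambda_S}\text{MMD}^2$ together with the dataset-internal constants $\lambda_T$ and $-\frac{n_S}{2\lambda_S}\beta_T$.

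I expect the main obstacle to be twofold bookkeeping. First, the inverse expansion must be controlled rigorously: the Neumann series converges only when the total off-diagonal mass is small relative to $\lambda_S$ (roughly $n_S\epsilon \ll \lambda_S$), so the $O(\epsilon)$ remainder secretly carries a dependence on $n_S$ that one must track for a uniform statement — which is precisely the regime \Cref{ass:inter_sample} (together with \Cref{ass:infinite_width}) enforces, and whose relaxation is deferred to \Cref{app:discussion_inter_sample}. Second, and more delicate, is accounting for the self-interaction sums: the residual source diagonal contributes a constant of order $\lambda_S$ after the $\frac{n_S}{2\lambda_S}$ prefactor, and the target diagonal contributes an $O(1/n_T)$ correction, so care is needed to argue that these genuinely dataset-internal pieces (independent of the source–target gap) are either grouped with $\lambda_T,\beta_T$ or absorbed into the stated error, leaving the MMD as the only term that reflects the diversity shift.
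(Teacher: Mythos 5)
Your overall route is the same as the paper's: start from the GP posterior variance of \Cref{lemma:var_gp}, use \Cref{ass:inter_sample} to invert the near-diagonal Gram matrix up to $O(\epsilon)$ (the paper does this via a first-order Taylor expansion of the matrix-inversion map rather than a Neumann series, but the content is the same), average over $X_{d_T}$, and identify the cross term $\mathbb{E}_{x_S\in X_{d_S},\,x_T\in X_{d_T}}[K^2(x_T,x_S)]$ inside an empirical MMD for the kernel $K^2$. Your first worry --- that the $O(\epsilon)$ remainder of the inverse secretly carries an $n_S$-dependence --- is shared by the paper itself, which bounds $\|H\|\leq n_S\epsilon=O(\epsilon)$ with $n_S$ treated as fixed, so it is not a point of divergence.

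The one step that does not close as written is the MMD identification. You substitute the biased (V-statistic) estimator $\frac{1}{n_S^2}\sum_{x_S,x_S'}K^2+\frac{1}{n_T^2}\sum_{x_T,x_T'}K^2-\frac{2}{n_Sn_T}\sum_{x_S,x_T}K^2$, whose within-source and within-target sums include the diagonal pairs. Solving for the cross sum then leaves, after multiplication by $\frac{n_S}{2\lambda_S}$, an additive $-\frac{n_S}{2\lambda_S}\cdot\frac{\lambda_S^2}{n_S}=-\frac{\lambda_S}{2}$ from the source diagonal and a $-\frac{n_S}{2\lambda_S n_T}\mathbb{E}_{x_T\in X_{d_T}}[K(x_T,x_T)^2]$ from the target diagonal. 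Neither of these is $O(\epsilon)$ (the assumption is precisely $\epsilon\ll\lambda_S$), neither equals $\lambda_T$ or $-\frac{n_S}{2\lambda_S}\beta_T$, and you explicitly leave open how they are to be \enquote{grouped or absorbed} --- they cannot be under your convention. The paper's $\text{MMD}$ is the unbiased (U-statistic) estimator of \cite{Gretton2012}, with expectations restricted to $x_S\neq x_S'$ and $x_T\neq x_T'$: with that convention the within-target term is exactly $\beta_T$, the within-source term is $O(\epsilon^2)$ by \Cref{ass:inter_sample}, and the diagonal bookkeeping you flag as the main obstacle never arises. Switching to the diagonal-free estimator (which is also what the statement's definition of $\beta_T$ as an expectation over $x_T\neq x_T'$ signals) repairs the final step and makes your derivation coincide with the paper's.
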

\begin{proof}
Our proof is original and is based on the posterior form of GPs in \Cref{lemma:var_gp}.
Given $d_S$, we recall \Cref{eq:var} that states $\forall x\in\X$:
\begin{align*}
    %\bar{\mu}(x) &=K(x, X_{d_S})\left(K_{X_{d_S}, X_{d_S}}+\sigma_{\epsilon}^{2} \mathbb{I}_{n}\right)^{-1} \boldsymbol{y} \\
    \varb(x) =K(x, x)-K(x, X_{d_S})K(X_{d_S}, X_{d_S})^{-1} K(x, X_{d_S})^\intercal.
\end{align*}
Denoting $B=K(X_{d_S},X_{d_S})^{-1}$ with symmetric coefficients $b_{i,j}=b_{j,i}$, then
\begin{equation} \label{eq:inversion}
    \varb(x) = K(x, x)-\sum_{\substack{1 \leq i \leq n_S \\ 1 \leq j \leq n_S}} b_{i, j} K(x, x_S^i) K(x, x_S^j).
\end{equation}

\Cref{ass:inter_sample} states that
$K(X_{d_S}, X_{d_S})=A+H$ where $A=\lambda_S\mathbb{I}_{n_S}$ and $H=(h_{ij})_{\substack{1 \leq i \leq n_S \\ 1 \leq j \leq n_S}}$ with $h_{i,i}=0$ and $\max_{i,j}|h_{i,j}|\leq\epsilon$.

We fix $x_T \in X_{d_T}$ and determine the form of $B^{-1}$ in two cases: $\epsilon=0$ and $\epsilon\neq 0$.

\paragraph{Case when $\epsilon=0$}
We first derive a simplified result, when $\epsilon=0$.

Then, $b_{i,i}=\frac{1}{\lambda_S}$ and $b_{i,j}=0$ \sut
$$
    \varb(x_T)=K(x_T, x_T)-\sum_{x_S\in X_{d_S}} \frac{K(x_T, x_S)^2}{\lambda_S}=K(x,x) - \frac{n_S}{\lambda_S} \mathbb{E}_{x_S\in X_{d_S}}[K^2(x, x_S)]
$$
We can then write:
\begin{align*}
    &\mathbb{E}_{x_T\in X_{d_T}} [\varb(x_T)] = \mathbb{E}_{x_T\in X_{d_T}}[K(x_T, x_T)] - \frac{n_S}{\lambda_S} \mathbb{E}_{x_T\in X_{d_T}}[\mathbb{E}_{x_S\in X_{d_S}}[K^2(x_T, x_S)]] \\
    &\mathbb{E}_{x_T\in X_{d_T}} [\varb(x_T)] = \lambda_T - \frac{n_S}{\lambda_S} \mathbb{E}_{x_S\in X_{d_S}, x_T\in X_{d_T}}[K^2(x_T, x_S)]. \\
\end{align*}
We now relate the second term on the r.h.s. to a MMD distance.
As $K$ is a kernel, $K^2$ is a kernel and its MMD between $X_{d_S}$ and $X_{d_T}$ is per \cite{Gretton2012}:
\begin{align*}
    %\text{MMD}^{2}(X_{d_S}, X_{d_T})=& \frac{1}{n_S(n_S-1)} \sum_{i=1}^{n_S} \sum_{j \neq i}^{n_S} K^2\left({x_S}_{i}, {x_S}_{j}\right)+\frac{1}{n_T(n_T-1)} \sum_{i=1}^{n_T} \sum_{j \neq i}^{n_T} K^2\left({x_T}_{i}, {x_T}_{j}\right) \\ &-\frac{2}{n_S n_T} \sum_{i=1}^{n_S} \sum_{j=1}^{n_T} K^2\left({x_S}_{i}, {x_T}_{j}\right) \\
    \text{MMD}^{2}(X_{d_S}, X_{d_T})= & \mathbb{E}_{x_S \neq x_S^\prime \in X_{d_S}^2}[K^2(x_S, x_S^\prime)]+\mathbb{E}_{x_T \neq x_T^\prime \in X_{d_T}^2}[K^2(x_T, x_T^\prime)]\\
    & \quad -2\mathbb{E}_{x_S\in X_{d_S}, x_T\in X_{d_T}}[K^2(x_T, x_S)].
\end{align*}
Finally, because $\epsilon=0$, $\mathbb{E}_{x_S \neq x_S^\prime \in X_{d_S}^2} K^2\left(x_S, x_S^{\prime}\right)=0$ \sut
\begin{align*}
    \mathbb{E}_{x_T\in X_{d_T}}[\varb(x_T)] &=\frac{n_S}{2\lambda_S}\text{MMD}^{2}(X_{d_S}, X_{d_T}) + \lambda_T\\
    & \quad\quad\quad -\frac{n_S}{2\lambda_S}\Big(\mathbb{E}_{x_T \neq x_T^\prime \in X_{d_T}^2} K^2\left(x_T, x_T^{\prime}\right)+\mathbb{E}_{x_S \neq x_S^\prime \in X_{d_S}^2} K^2\left(x_S, x_S^{\prime}\right)\Big) \\
    &=\frac{n_S}{2\lambda_S}\text{MMD}^{2}(X_{d_S}, X_{d_T}) +\lambda_T-\frac{n_S}{2\lambda_S}\mathbb{E}_{x_T \neq x_T^\prime \in X_{d_T}^2} K^2\left(x_T, x_T^{\prime}\right)\\
    &= \frac{n_S}{2\lambda_S}\text{MMD}^{2}(X_{d_S}, X_{d_T}) + \lambda_T - \frac{n_S}{2\lambda_S}\beta_T.
\end{align*}

We recover the same expression with a $O(\epsilon)$ in the general setting where $\epsilon\neq 0$.

\paragraph{Case when $\epsilon\neq 0$}
We denote
$I:\left\{\begin{array}{cl}\mathrm{GL}_{n_S}(\mathbb{R}) & \rightarrow \mathrm{GL}_{n_S}(\mathbb{R}) \\ A & \mapsto A^{-1}\end{array}\right.$
the inversion function defined on $\mathrm{GL}_{n_S}(\mathbb{R})$, the set of invertible matrices of $\mathcal{M}_{n_S}(\mathbb{R})$.

The function $I$ is differentiable \cite{magnus2019matrix} in all $A\in \mathrm{GL}_{n_S}(\mathbb{R})$ with its differentiate given by the linear application
$dI_A:\left\{\begin{array}{cl}\mathcal{M}_{n_S}(\mathbb{R})&\rightarrow \mathcal{M}_{n_S}(\mathbb{R}) \\ H &\mapsto - A^{-1}HA^{-1}\end{array}\right.$. Therefore, we can perform a Taylor expansion of $I$ at the first order at $A$:
\begin{align*}
    I(A+H)&=I(A)+dI_A(H)+o(\|H\|), \\
    (A+H)^{-1}&=A^{-1}-A^{-1}HA^{-1}+o(\|H\|).
\end{align*}
where $\|H\|\leq n_S\epsilon=O(\epsilon)$. Thus,
\begin{align*}
    (\lambda_S\mathbb{I}_{n_S}+H)^{-1}&=(\lambda_S\mathbb{I}_{n_S})^{-1}-(\lambda_S\mathbb{I}_{n_S})^{-1}H(\lambda_S\mathbb{I}_{n_S})^{-1}+O(\epsilon) =\frac{1}{\lambda_S}\mathbb{I}_{n_S}-\frac{1}{\lambda_S^2}H+O(\epsilon), \\
    \forall i \in \llbracket1, n_S\rrbracket, b_{ii}&=\frac{1}{\lambda_S}-\frac{1}{\lambda_S^2}h_{i,i}+o(\epsilon)=\frac{1}{\lambda_S}+O(\epsilon), \\
    \forall i \neq j \in \llbracket1, n_S\rrbracket, b_{ij}&=-\frac{1}{\lambda_S^2}h_{i,j}+o(\epsilon)=O(\epsilon).
\end{align*}

Therefore, when $\epsilon$ is small, \Cref{eq:inversion} can be developed into:
\begin{align*}
    \varb(x_T)&=K(x_T, x_T)-\sum_{x_S\in X_{d_S}} (\frac{1}{\lambda_S}+O(\epsilon)) K(x_T, x_S)^2 + O(\epsilon) \\
    &= K(x_T,x_T)- \frac{n_S}{\lambda_S} \mathbb{E}_{x_S\in X_{d_S}}[K(x_T, x_S)^2] + O(\epsilon)
\end{align*}
Following the derivation for the case $\epsilon=0$, and remarking that under \Cref{ass:inter_sample} we have $\mathbb{E}_{x_S \neq x_S^\prime \in X_{d_S}^2} K^2\left(x_S, x_S^{\prime}\right)=O(\epsilon^2)$, yields:
\begin{equation*}
    \mathbb{E}_{x_T\in X_{d_T}}[\varb(x_T)] = \frac{n_S}{2\lambda_S}\text{MMD}^{2}(X_{d_S}, X_{d_T}) + \lambda_T - \frac{n_S}{2\lambda_S}\beta_T+ O(\epsilon).
\end{equation*}

% When $\lambda=\frac{n_S}{2}$,
% $$
%     \mathbb{E}_{x\in d_T}[\varb(x)] = \text{MMD}^{2}(X_{d_S}, X_{d_T}) + (\frac{n_S}{4} - \beta_T)+ O(\epsilon)
% $$

% Finally, $\beta_T = \frac{\lambda^2 n_T + \alpha_T}{n_T^2}$ where $\alpha_T=\sum_{x_T \neq x_T^\prime} K^2(x_T, x_T^\prime)$ \sut
% $$
%  \frac{n_S}{4} - \beta_T = \frac{n_S}{4} - \frac{\frac{n_S^2}{4} n_T}{n_T^2} - \frac{\alpha_T}{n_T^2}= \frac{n_S}{4} - \frac{n_S^2}{4 n_T} - \frac{\alpha_T}{n_T^2} = \frac{n_S}{4}(1-\frac{n_S}{n_T})-\frac{\alpha_T}{n_T^2}
% $$
% denoting $\kappa_T=\frac{\alpha_T}{n_T^2}\geq 0$
% $$
%     \mathbb{E}_{x\in d_T}[\varb(x)] = \text{MMD}^{2}(X_{d_S}, X_{d_T}) - \frac{n_S}{4}(\frac{n_S}{n_T}-1) - \kappa_T + O(\epsilon)
% $$
\end{proof}

\subsubsection{Variance and initialization}
\label{remark:mmdk}

The MMD depends on the kernel $K$, \ie only on the initialization of $f$ in the kernel regime per \cite{Jacot2018}. Thus, to reduce variance, we could act on the initialization to match $p_S(X)$ and $p_T(X)$ in the RKHS of $K^2$.
This is consistent with \Cref{subsec:expression_ood_bias} that motivated matching the train and test in features.
In our paper, we used the standard pretraining from ImageNet \cite{krizhevsky2012imagenet}, as commonly done on DomainBed \cite{gulrajani2021in}.
The Linear Probing \cite{kumar2022finetuning} initialization of the classifier was shown in \cite{kumar2022finetuning} to prevent the distortion of the features along the training.
This could be improved by pretraining the encoder on a task with fewer domain-specific information, \eg CLIP \cite{radford2021learning} image-to-text translation as in \cite{ruan2022optimal}.
\subsection{WA \versus its members}
\label{app:proof_wa_ind}
We validate that WA's expected error is smaller than its members' error under the locality constraint.
\begin{lemma}[WA \versus its members.]\label{lemma:wa_ind}
    \begin{equation}
        \mathbb{E}_{L_S^M}\mathcal{E}_T(\theta_{\text{WA}}(L_S^M)) - \mathbb{E}_{l_S} \mathcal{E}_T \left(\theta \left( l_S \right) \right)= \frac{M-1}{M} \mathbb{E}_{x\sim p_T}[\covb(x)-\varb(x)] + O(\Deltab^2) \leq O(\bar{\Delta}^2).%
    \end{equation}%
\end{lemma}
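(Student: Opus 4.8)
The plan is to derive the equality by directly subtracting the single-member decomposition \Cref{eq:b_v} from the weight-averaging decomposition \Cref{eq:b_var_cov}, and then to obtain the inequality by bounding the covariance term by the variance via Cauchy--Schwarz.

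First I would place the two decompositions side by side. \Cref{eq:b_var_cov} gives
\[
    \mathbb{E}_{L_S^M}\mathcal{E}_T(\theta_{\text{WA}}(L_S^M)) = \mathbb{E}_{(x,y)\sim p_T}\Big[\biasb^2(x, y)+\tfrac{1}{M}\varb(x)+\tfrac{M-1}{M}\covb(x)\Big] + O(\Deltab^2),
\]
whereas \Cref{eq:b_v} gives the expected error of a single i.d.\ member as $\mathbb{E}_{l_S}\mathcal{E}_T(\theta(l_S)) = \mathbb{E}_{(x,y)\sim p_T}[\biasb^2(x, y)+\varb(x)]$. Subtracting, the bias terms cancel identically: this is legitimate because, as established in the proof of \Cref{prop:b_var_cov}, under the identically distributed assumption the bias of $f_{\text{WA}}$ coincides with that of each member (both equal $y-\bar{f}_S(x)$). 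The variance contributions then combine through $\tfrac{1}{M}\varb(x)-\varb(x) = -\tfrac{M-1}{M}\varb(x)$, and gathering with the covariance term yields
\[
    \mathbb{E}_{L_S^M}\mathcal{E}_T(\theta_{\text{WA}}(L_S^M)) - \mathbb{E}_{l_S}\mathcal{E}_T(\theta(l_S)) = \frac{M-1}{M}\mathbb{E}_{x\sim p_T}[\covb(x)-\varb(x)] + O(\Deltab^2),
\]
which is the claimed equality.

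For the inequality I would show the pointwise bound $\covb(x)\le\varb(x)$. Setting $U = f(x,\theta(l_S))-\bar{f}_S(x)$ and $V = f(x,\theta(l_S'))-\bar{f}_S(x)$ on the joint distribution of $(l_S,l_S')$, we have $\covb(x)=\mathbb{E}_{l_S,l_S'}[UV]$ and $\varb(x)=\mathbb{E}_{l_S}[U^2]=\mathbb{E}_{l_S'}[V^2]$. Cauchy--Schwarz on this joint space gives $|\mathbb{E}[UV]|\le\sqrt{\mathbb{E}[U^2]\,\mathbb{E}[V^2]}=\varb(x)$, so $\covb(x)\le|\covb(x)|\le\varb(x)$ and hence $\covb(x)-\varb(x)\le 0$. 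Since $\tfrac{M-1}{M}\ge 0$ for $M\ge 1$, the first term on the right-hand side above is nonpositive, and the whole expression is therefore $\le O(\Deltab^2)$, completing the argument.

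The step I expect to require the most care is the Cauchy--Schwarz bound: because $l_S$ and $l_S'$ are only \emph{identically distributed} (and, for members sampled along a single run, possibly correlated rather than independent), one must apply the inequality on the correct joint probability space while using the marginal identity $\mathbb{E}[U^2]=\mathbb{E}[V^2]=\varb(x)$. Once $\covb(x)\le\varb(x)$ is secured, the remainder is routine algebra, since both decompositions are built from the same $\bar{f}_S$ and share an identical bias term.
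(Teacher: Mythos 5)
Your proposal is correct and follows essentially the same route as the paper's proof: subtract the single-member bias-variance decomposition from the BVCL decomposition (with the shared bias cancelling), then bound $\covb(x)\le\varb(x)$ via Cauchy--Schwarz to conclude. Your extra care in identifying the joint space for $(l_S,l_S')$ and using the i.d.\ marginals is a welcome clarification but does not change the argument.
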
%
\begin{proof}
    The proof builds upon \Cref{eq:b_var_cov}:
    \begin{equation*}
        \mathbb{E}_{L_S^M}\mathcal{E}_T(\theta_{\text{WA}}) = \mathbb{E}_{(x,y)\sim p_T}\Big[\biasb(x,y)^{2}+\frac{1}{M} \varb(x)+\frac{M-1}{M} \covb(x)\Big] + O(\Deltab^2),
    \end{equation*}
    and the expression of the standard bias-variance decomposition in \Cref{eq:b_v} from \cite{kohavi1996bias},
    \begin{equation*}
        \mathbb{E}_{l_S}\mathcal{E}_T(\theta) = \mathbb{E}_{(x,y)\sim p_T}\Big[\biasb(x,y)^{2}+\varb(x)\Big].
    \end{equation*}
    The difference between the two provides:
    \begin{equation*}
        \mathbb{E}_{L_S^M}\mathcal{E}_T(\theta_{\text{WA}}) - \mathbb{E}_{l_S}\mathcal{E}_T(\theta) = \frac{M-1}{M} \mathbb{E}_{(x,y)\sim p_T} \Big[\covb(x)-\varb(x)\Big] + O(\Deltab^2).
    \end{equation*}
    Cauchy Schwartz inequality states $|\mathrm{cov}(Y, Y')| \leq \sqrt{\mathrm{var}(Y) \mathrm{var}(Y')}$, thus $\covb(x)\leq\varb(x)$. Then:
    \begin{equation*}
        \mathbb{E}_{L_S^M}\mathcal{E}_T(\theta_{\text{WA}}) - \mathbb{E}_{l_S}\mathcal{E}_T(\theta) \leq O(\Deltab^2).
    \end{equation*}
\end{proof}

\clearpage
\section{Weight averaging versus functional ensembling}%
\label{app:wa_vs_ens_analysis}%
\begin{wrapfigure}[12]{hR!}{0.34\textwidth}
    \vspace{-3em}%
    \centering%
    \includegraphics[width=0.34\textwidth]{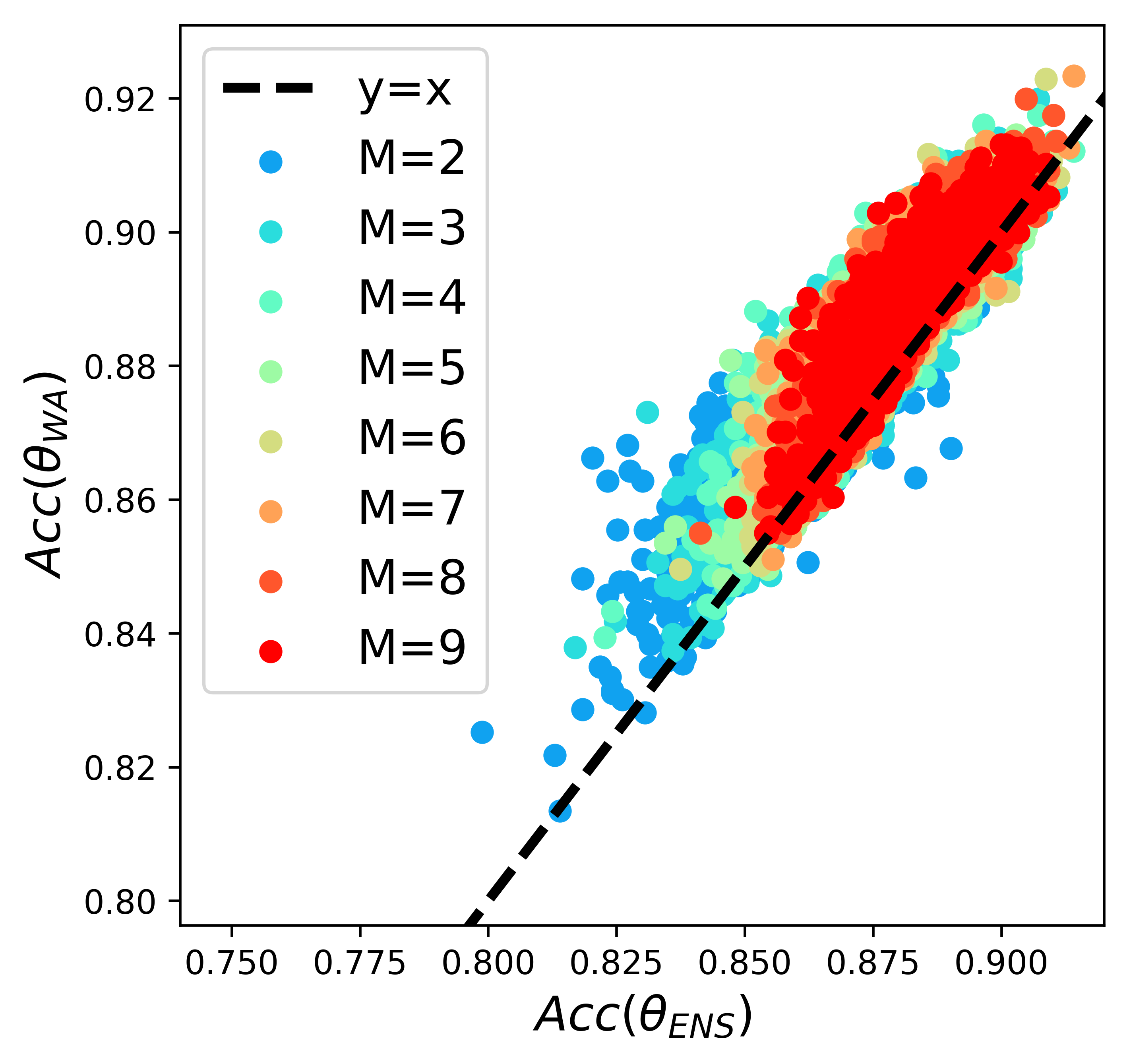}%
    \caption{$f_{\text{WA}}$ performs similarly or better than $f_{\text{ENS}}$ on domain \enquote{Art} on PACS.}%
    \label{fig:pacs_samediffruns_ens210_soup}%
\end{wrapfigure}%
We further compare the following two methods to combine $M$ weights $\{\theta(l_S^{(m)})\}_{m=1}^M$: $f_{\text{WA}}$ that averages the weights and $f_{\text{ENS}}$ \cite{Lakshminarayanan2017} that averages the predictions.
We showed in \Cref{lemma:wa_ensembling} that $f_{\text{WA}} \approx f_{\text{ENS}}$ when $\max_{m=1}^{M}\| \theta(l_S^{(m)}) - \theta_{\text{WA}}\|_2$ is small.

In particular, when $\{l_S^{(m)}\}_{m=1}^M$ share the same initialization and the hyperparameters are sampled from mild ranges, we empirically validate our approximation on OfficeHome in \Cref{fig:home0_samediffruns_net_soup}. This is confirmed on PACS dataset in \Cref{fig:pacs_samediffruns_ens210_soup}.
For both datasets, we even observe that $f_{\text{WA}}$ performs slightly but consistently better than $f_{\text{ENS}}$. The observed improvement is non-trivial; we refer to Equation 1 in \cite{Wortsman2022ModelSA} for some initial explanations based on the value of \ood Hessian and the confidence of $f_{\text{WA}}$. The complete analysis of this second-order difference is left for future work.

Yet, we do not claim that $f_{\text{WA}}$ is systematically better than $f_{\text{ENS}}$.
In \Cref{table:db_home0_ensvsswa}, we show that this is no longer the case when we relax our two constraints, consistently with \Cref{fig:home0_locality_requirement}.
\textit{First}, when the classifiers' initializations vary, ENS improves thanks to this additional diversity; in contrast, DiWA degrades because weights are no longer averageable.
% As a side note, DiWA surprisingly remains better than ERM.
\textit{Second}, when the hyperparameters are sampled from extreme ranges (defined in \Cref{tab:hyperparam}), performance drops significantly for DiWA, but much less for ENS.
As a side note, the downward trend in this second setup (even for ENS) is due to inadequate hyperparameters that degrade the expected individual performances.

This highlights a limitation of DiWA, which requires weights that satisfy the locality requirement or are at least linearly connectable.
In contrast, Deep Ensembles \cite{Lakshminarayanan2017} are computationally expensive (and even impractical for large $M$), but can leverage additional sources of diversity.
An interesting extension of DiWA for future work would be to consider the functional ensembling of several DiWAs trained from different initializations or even with different network architectures \cite{singh2016swapout}.
Thus the Ensemble of Averages (EoA) strategy introduced in  \cite{arpit2021ensemble} is complementary to DiWA and could be extended into an Ensemble of Diverse Averages.

\begin{table}[h]%
    \caption{\textbf{DiWA's vs. ENS's accuracy ($\%, \uparrow$)} on domain \enquote{Art} from OfficeHome when varying initialization and hyperparameter ranges. Best on each setting is in \textbf{bold}.}
    \centering
    \adjustbox{width=1.0\textwidth}{
        \centering
        \begin{tabular}{ccc ccccc}
            \toprule
            \multicolumn{3}{c}{Configuration} & \multicolumn{2}{c}{\textbf{$M=20$}} && \multicolumn{2}{c}{\textbf{$M=60$}} \\
            \cmidrule(lr){0-2} \cmidrule(lr){4-5} \cmidrule(lr){7-8} Shared classifier init && Mild hyperparameter ranges & DiWA & ENS && DiWA & ENS \\
            \midrule
              \cmark          && \cmark    & \textbf{67.3} $\pm$ 0.2  & 66.1 $\pm$ 0.1 & & \textbf{67.7}    &   66.5      \\
            \xmark          && \cmark    & 65.0 $\pm$ 0.5  & \textbf{67.5} $\pm$ 0.3 & & 65.9     &  \textbf{68.5}      \\
            \cmark          && \xmark    & 56.6 $\pm$ 0.9  & \textbf{64.3} $\pm$ 0.4 & & 59.5     &  \textbf{64.7}      \\   

            % \midrule
            % \multirow{3}{*}{ENS} & \xmark           & \xmark    & 66.1 $\pm$ 0.1  & 66.5            \\
            %                                             & \cmark           & \xmark    & 67.5 $\pm$ 0.3  & \textbf{68.5}            \\
            %                                             & \xmark          & \cmark     & 64.3 $\pm$ 0.4  & \textbf{64.7}            \\
            \bottomrule
        \end{tabular}}
    \label{table:db_home0_ensvsswa}%
\end{table}

\section{Additional diversity analysis}%

\label{app:additional_diversity_analysis}
\subsection{On OfficeHome}

\subsubsection{Feature diversity}
\label{app:analysis_office_features}
In \Cref{sect:analysis}, our diversity-based theoretical findings were empirically validated using the ratio-error \cite{aksela2003comparison}, a common diversity measure notably used in \cite{rame2021dice,rame2021ixmo}.
In \Cref{fig:home0_df}, we recover similar conclusions with another diversity measure: the Centered Kernel Alignment Complement (CKAC) \cite{DBLP:journals/corr/abs-1905-00414}, also used in \cite{Neyshabur2020,Wortsman2022robust}.
CKAC operates in the feature space and measures to what extent the pairwise similarity matrices (computed on domain $T$) are aligned --- where similarity is the dot product between penultimate representations extracted from two different networks.
\begin{figure}[h]%
    \centering%
    \begin{subfigure}{.33\textwidth}%
        \centering%
        \includegraphics[width=\linewidth]{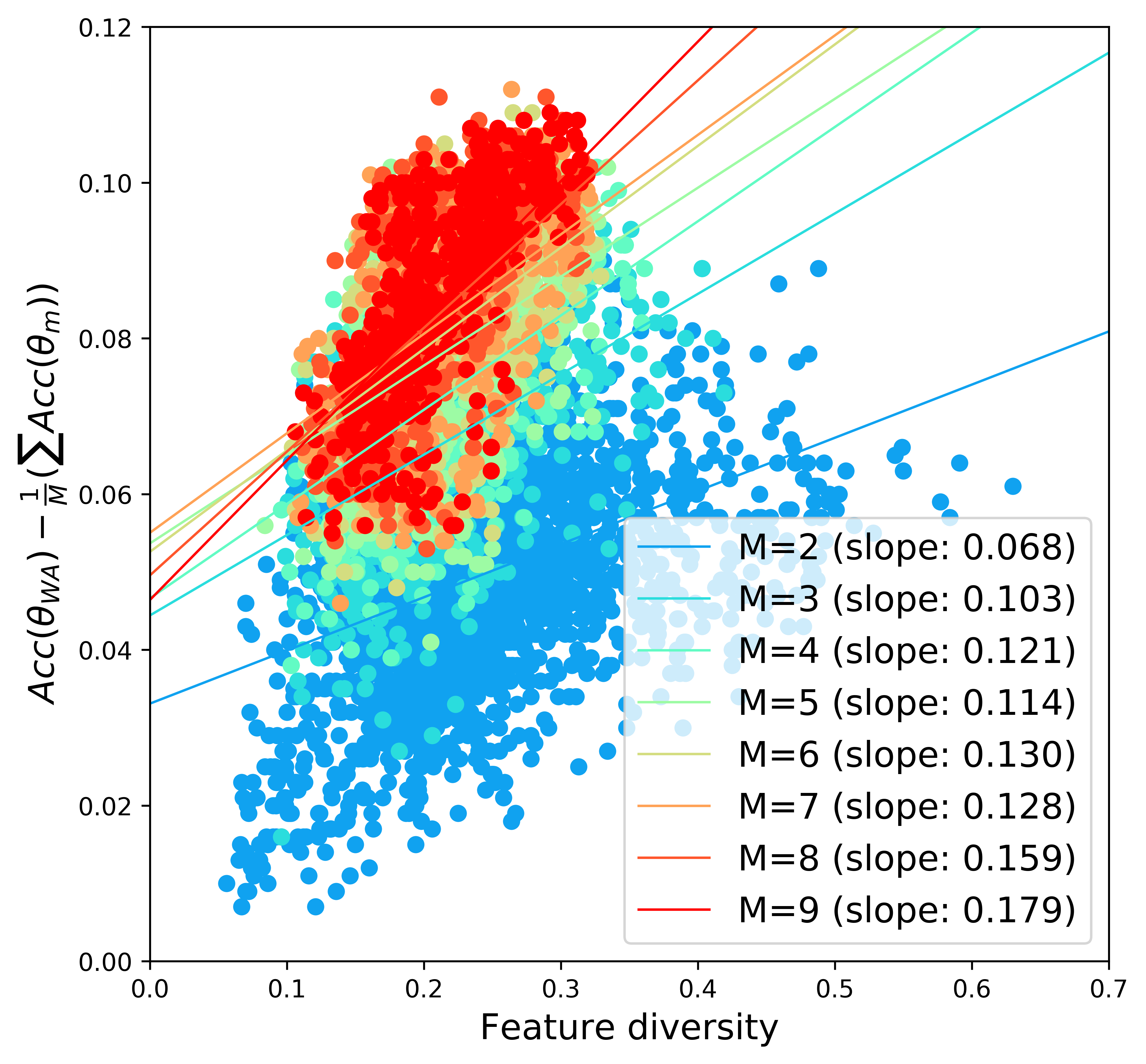}%
        \caption{Same as \Cref{fig:home0_dr_soup-netm}.}%
        \label{fig:home0_df_soup-netm}%
    \end{subfigure}%
    \begin{subfigure}{.33\textwidth}%
        \centering%
        \includegraphics[width=\textwidth]{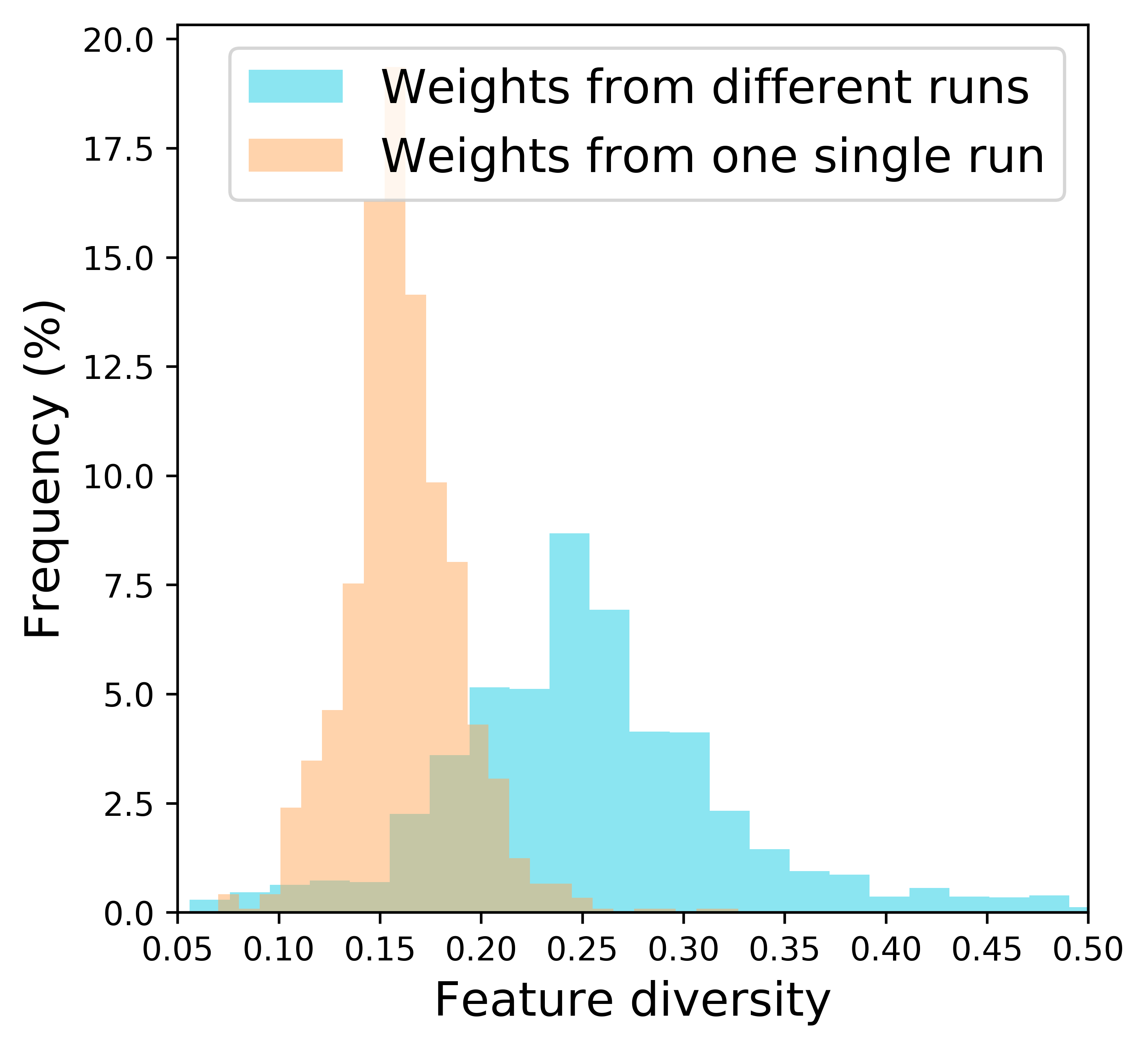}%
        \caption{Same as \Cref{fig:home0_dr_frequency}.}%
        \label{fig:home0_df_frequency}%
    \end{subfigure}%
    \begin{subfigure}{.33\textwidth}%
        \centering%
        \includegraphics[width=\linewidth]{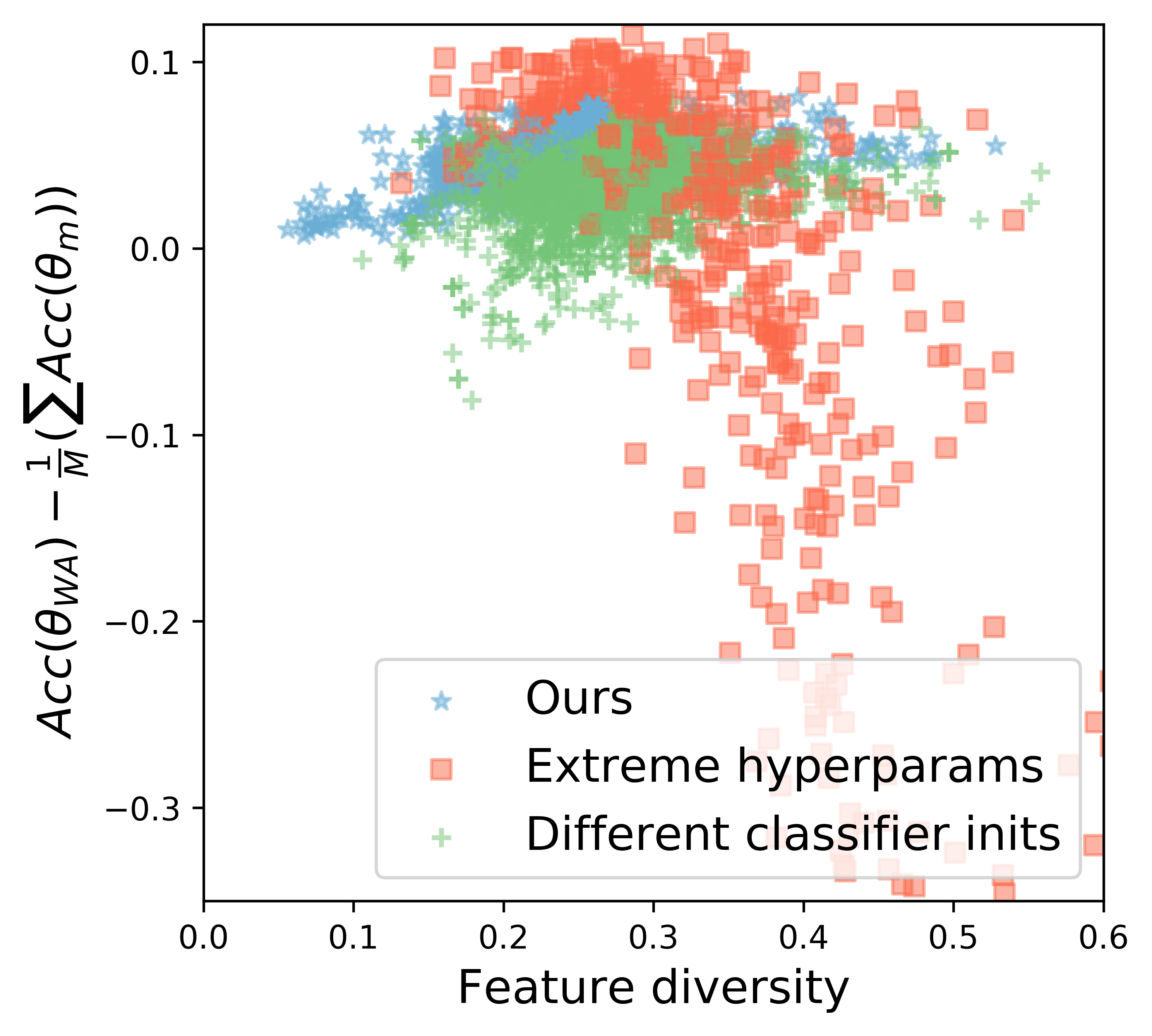}%
        \caption{Same as \Cref{fig:home0_locality_requirement}.}%
        \label{fig:home0_locality_requirement_df}%
    \end{subfigure}%
    \caption{Same analysis as \Cref{sect:analysis}, where diversity is measured with CKAC \cite{DBLP:journals/corr/abs-1905-00414} in features rather than with ratio-error \cite{aksela2003comparison} in predictions.}%
    \label{fig:home0_df}%
\end{figure}%%
\subsubsection{Accuracy gain per unit of diversity}
\label{app:soup:m_slope}
In \Cref{fig:home0_dr_soup-netm,fig:home0_df_soup-netm}, we indicated the slope of the linear regressions relating diversity to accuracy gain at fixed $M$ (between $2$ and $9$).
For example, when $M=9$ weights are averaged, the accuracy gain increases by $0.297$ per unit of additional diversity in prediction  \cite{aksela2003comparison} (see \Cref{fig:home0_dr_soup-netm}) and by $0.179$ per unit of additional diversity in features \cite{DBLP:journals/corr/abs-1905-00414} (see \Cref{fig:home0_df_soup-netm}).
Most importantly, we note that the slope increases with $M$.
To make this more visible, we plot slopes \wrt $M$ in \Cref{fig:soup:m_slope}.
Our observations are consistent with the $(M-1)/M$ factor in front of $\covb\left(x\right)$ in \Cref{eq:b_var_cov}.
This shows that diversity becomes more important for large $M$.
Yet, large $M$ is computationally impractical in standard functional ensembling, as one forward step is required per model.
In contrast, WA has a fixed inference time which allows it to consider larger $M$. Increasing $M$ from $20$ to $60$ is the main reason why DiWA$^{\dagger}$ improves DiWA.
\begin{figure}[H]
    \centering%
    \includegraphics[width=0.5\textwidth]{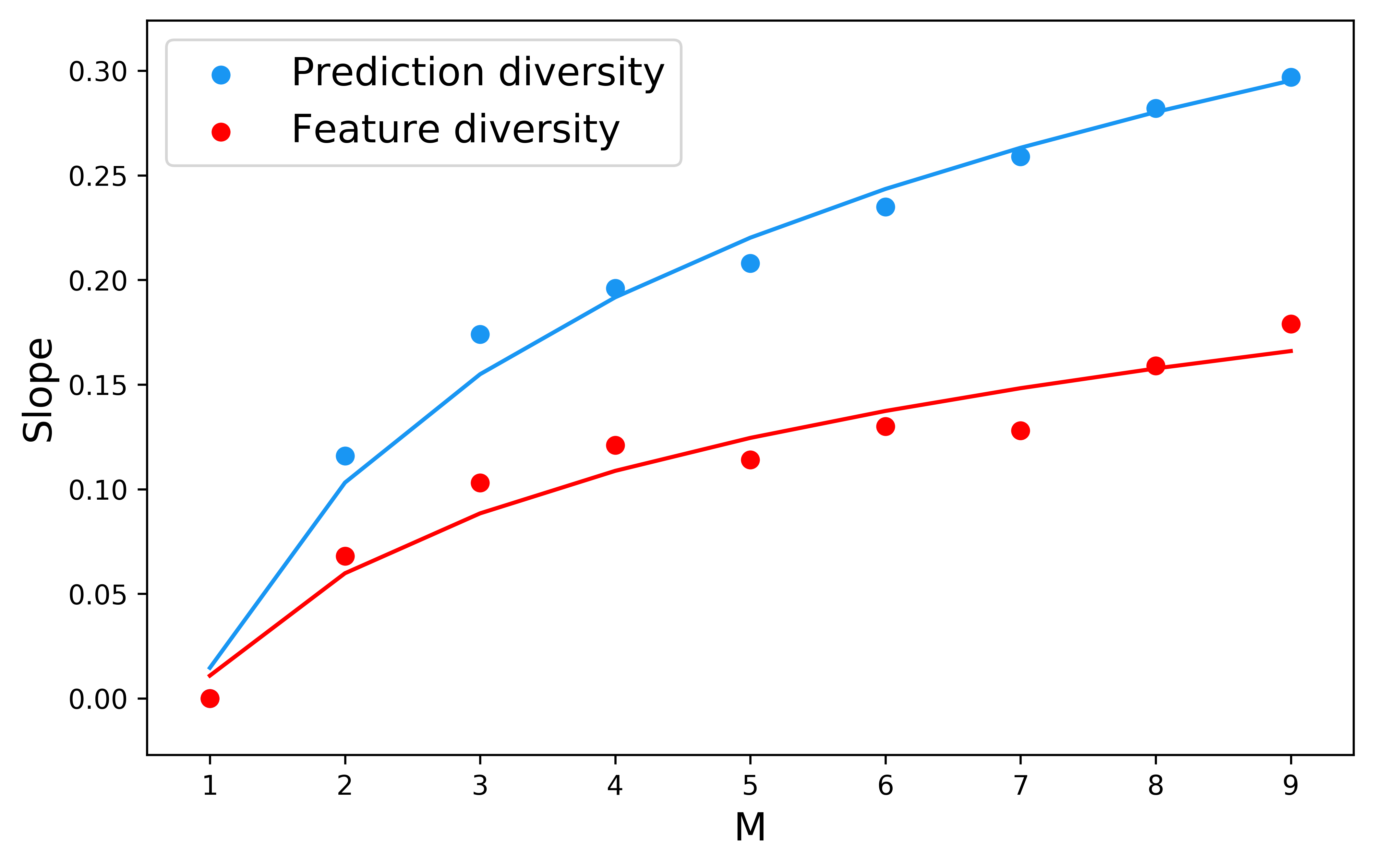}%
    \caption{The slopes of linear regression --- relating diversity to accuracy gain in \Cref{fig:home0_dr_soup-netm} and \Cref{fig:home0_df_soup-netm} --- increases with $M$.}
    \label{fig:soup:m_slope}%
\end{figure}%

% \begin{wrapfigure}[15]{hR!}{0.34\textwidth}
%     % \vspace{-3em}%
%     \centering%
%     \includegraphics[width=0.5\textwidth]{pictures/files/final/home0_m_slope.png}%
%     \caption{The slopes of linear regressions --- relating diversity to accuracy gain in \Cref{fig:home0_dr_soup-netm} and \Cref{fig:home0_df_soup-netm} --- increases with $M$.}
%     \label{fig:soup:m_slope}%
%     % \vspace{-3em}
% \end{wrapfigure}%
\subsubsection{Diversity comparison across a wide range of methods}
\label{app:more_diversity_comparison}
Inspired by \cite{gontijolopes2022no}, we further analyze in \Cref{fig:home0_boxplot} the diversity between two weights obtained from different (more or less correlated) learning procedures.
\begin{itemize}
    \item In the upper part, weights are obtained from a single run. They share the same initialization/hyperparameters/data/noise in the optimization procedure and only differ by the number of training steps (which we choose to be a multiple of $50$). They are less diverse than the weights in the middle part of \Cref{fig:home0_boxplot}, that are sampled from two ERM runs.
    \item  When sampled from different runs, the weights become even more diverse when they have more extreme hyperparameter ranges, they do not share the same classifier initialization or they are trained on different data. The first two are impractical for WA, as it breaks the locality requirement (see \Cref{fig:home0_locality_requirement,fig:home0_locality_requirement_df}).
          Luckily, the third setting \enquote{data diversity} is more convenient and is another reason for the success of DiWA$^{\dagger}$; its $60$ weights were trained on $3$ different data splits. Data diversity has provable benefits \cite{efron1992bootstrap}, \eg in bagging \cite{breiman1996bagging}.
    \item Finally, we observe that diversity is increased (notably in features) when two runs have different objectives, for example, Interdomain Mixup \cite{yan2020improve} and Coral \cite{coral216aaai}. Thus incorporating weights trained with different invariance-based objectives have two benefits that explain the strong results in \Cref{table:db_home0_training}: (1) they learn invariant features by leveraging the domain information and (2) they enrich the diversity of solutions by extracting different features. These solutions can bring their own particularity to WA.
\end{itemize}
In conclusion, our analysis confirms that \enquote{model pairs that diverge more in training methodology display categorically different generalization behavior, producing increasingly uncorrelated errors}, as stated in \cite{gontijolopes2022no}.
\begin{figure}[h]%
    \centering%
    \begin{subfigure}{.5\textwidth}%
        \centering%
        \includegraphics[width=1.0\textwidth]{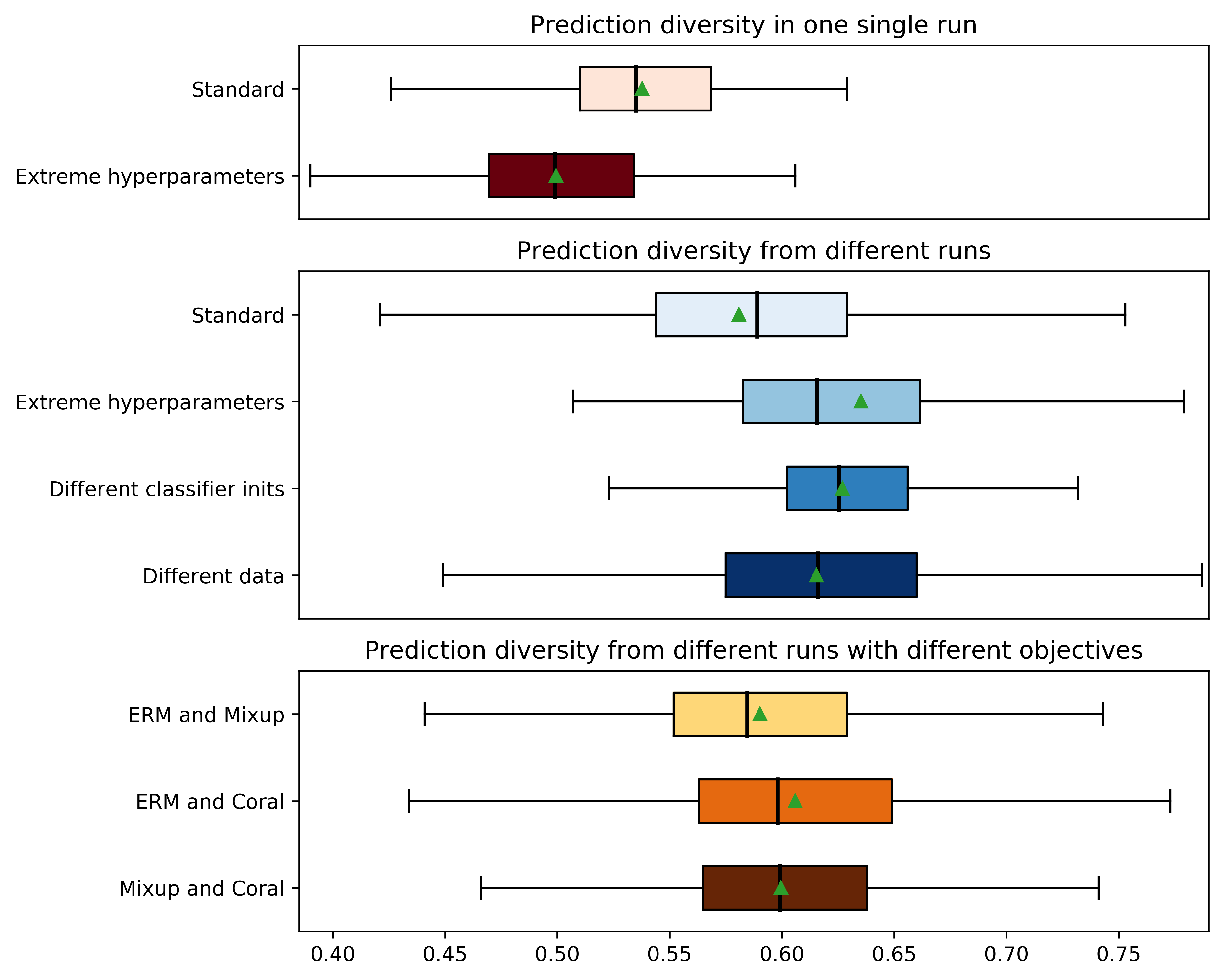}%
        \caption{Prediction diversity \cite{aksela2003comparison}.}
        \label{fig:home0_dr_boxplot}%
    \end{subfigure}%
    \begin{subfigure}{.5\textwidth}%
        \centering%
        \includegraphics[width=1.0\textwidth]{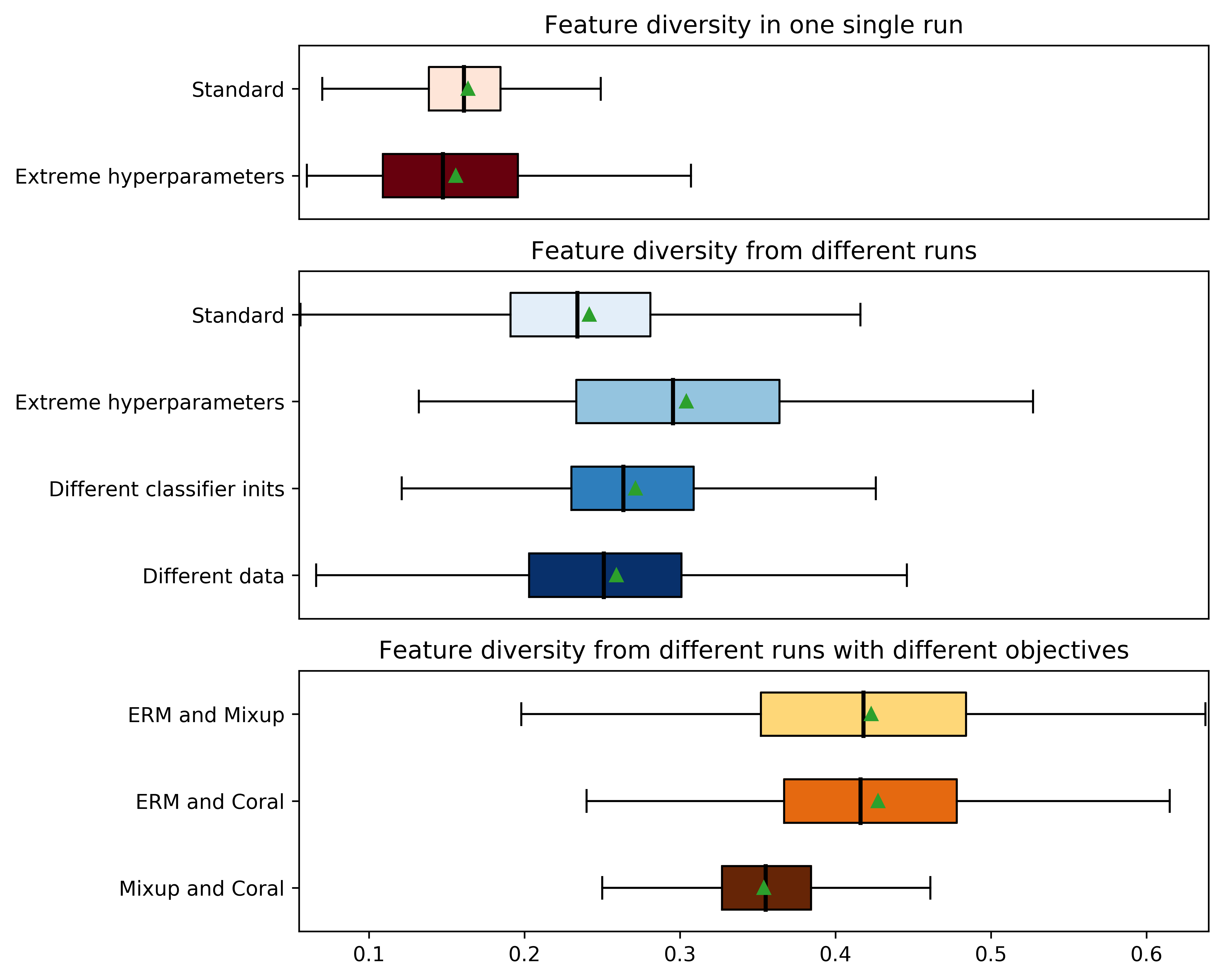}%
        \caption{Feature diversity \cite{DBLP:journals/corr/abs-1905-00414}.}
        \label{fig:home0_df_boxplot}%
    \end{subfigure}%
    \caption{\textbf{Diversity analysis} across weights, which are per default trained with ERM, with a mild hyperparameter range (see \Cref{tab:hyperparam}), with a shared random classifier initialization, on a given data split.
    \textit{First}, it confirms \Cref{fig:home0_dr_frequency,fig:home0_df_frequency}: weights obtained from two different runs are more different than those sampled from a single run (even with extreme hyperparameters).
    \textit{Second}, this shows that weights from two runs are more diverse when the two runs have different hyperparameters/data/classifier initializations/training objectives.
    Domain \enquote{Art} on OfficeHome.}
    \label{fig:home0_boxplot}%
\end{figure}%

\subsubsection{\reb{Trade-off between diversity and averageability}}

\reb{We argue in \Cref{subsec:expression_loc_lir} that our weights should ideally be diverse functionally while being averageable (despite the nonlinearities in the network).
    We know from \cite{Neyshabur2020} that models fine-tuned from a shared initialization with shared hyperparameters can be connected along a linear path where error remains low; thus, they are averageable as their WA also has a low loss.
    In \Cref{fig:home0_locality_requirement}, we confirmed that averaging models from different initializations performs poorly.
    Regarding the hyperparameters, \Cref{fig:home0_locality_requirement} shows that hyperparameters can be selected slightly different but not too distant.
    That is why we chose mild hyperparameter ranges (defined in \Cref{tab:hyperparam}) in our main experiments.}
    
\reb{A complete analysis of when the averageability holds when varying the different hyperparameters is a promising lead for future work. 
    Still, \Cref{fig:home0_lrs} is a preliminary investigation of the impact of different learning rates  (between learning procedures of each weight). First, we validate that more distant learning rates lead to more functional diversity in \Cref{fig:home0_lrs_dr}. Yet, we observe in \Cref{fig:home0_lrs_acc} that if learning rates are too different, weight averaging no longer approximates functional ensembling because the $O(\Delta_{L_S^M}^2)$ term in \Cref{lemma:wa_ensembling} can be large.}

\begin{figure}[h]%
    \centering%
    \begin{subfigure}{.5\textwidth}%
        \centering%
        \includegraphics[width=1.0\textwidth]{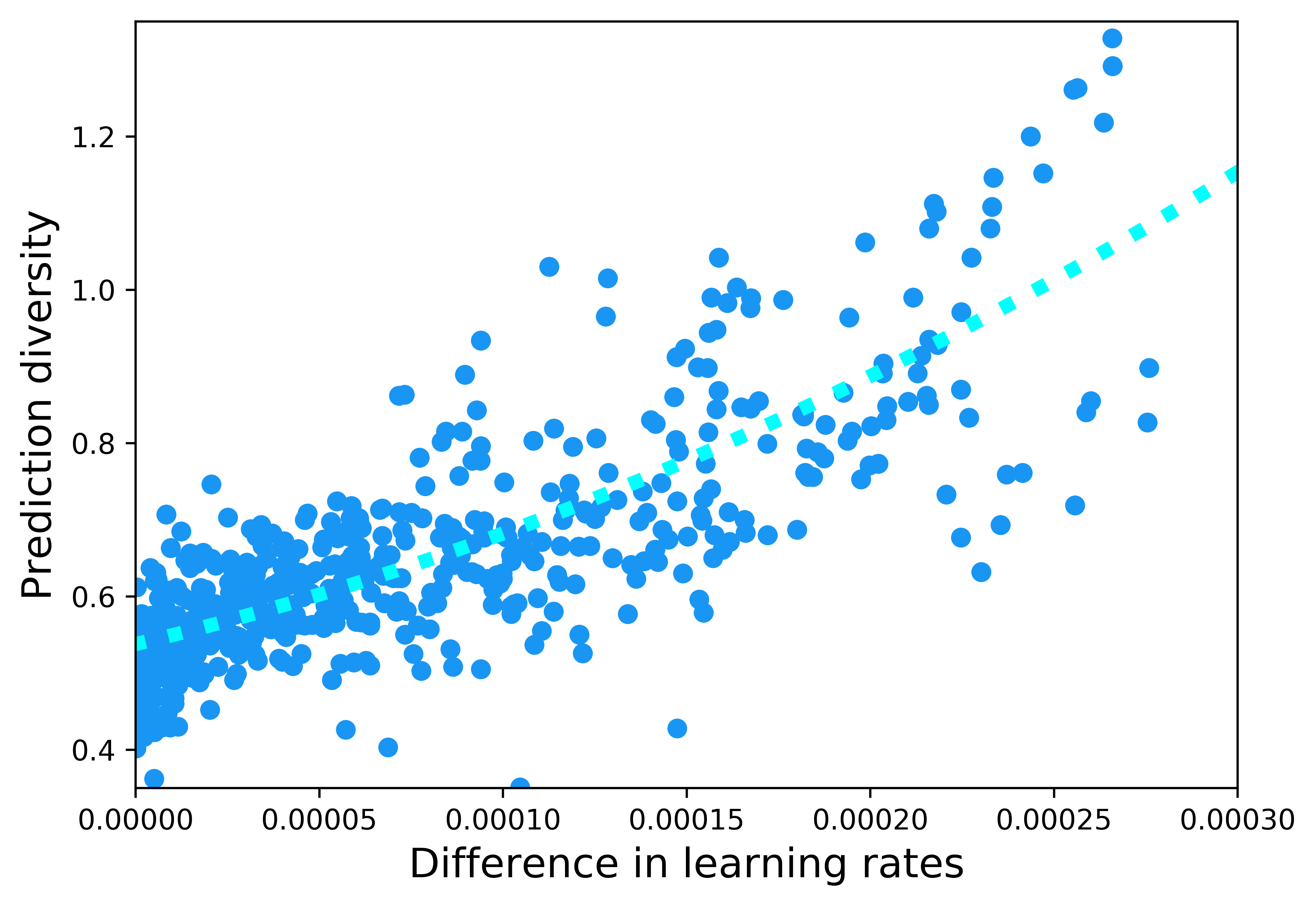}%
        \caption{\reb{Prediction diversity ($\uparrow$) \cite{aksela2003comparison} between models.}}
        \label{fig:home0_lrs_dr}%
    \end{subfigure}%
    \begin{subfigure}{.5\textwidth}%
        \centering%
        \includegraphics[width=1.0\textwidth]{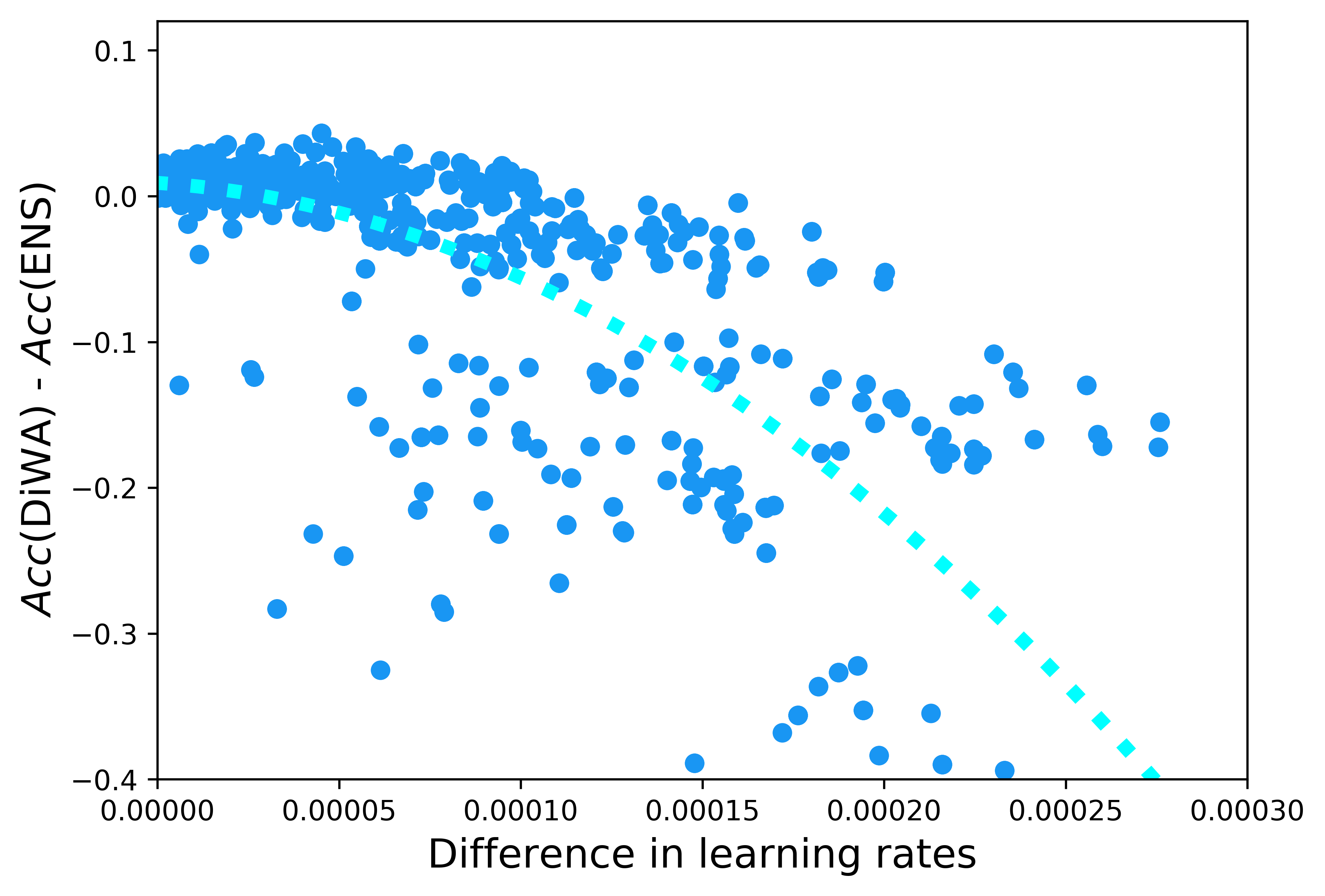}%
        \caption{\reb{Accuracy ($\uparrow$) difference between DiWA and ENS.}}
        \label{fig:home0_lrs_acc}%
    \end{subfigure}%
    \caption{\reb{\textbf{Trade-off between diversity and averageability for various differences in learning rates}. Considering $M=2$ weights obtained from two learning procedures with learning rates $\text{lr}_1$ and $\text{lr}_2$ (sampled from the extreme distribution in \Cref{tab:hyperparam}), we plot in \Cref{fig:home0_lrs_dr} the prediction diversity for these $M=2$ models \versus $|\text{lr}_1-\text{lr}_2|$. Then, in \Cref{fig:home0_lrs_acc}, we plot the accuracy differences $\operatorname{Acc}(\text{DiWA})-\operatorname{Acc}(\text{ENS})$ \versus $|\text{lr}_1-\text{lr}_2|$.}}
    \label{fig:home0_lrs}%
\end{figure}%

\FloatBarrier
\subsection{On PACS}
\label{app:analysis_pacs}
We perform in \Cref{fig:soup:pacs_div_soup-netm} on domain \enquote{Art} from PACS the same core diversity-based experiments than on OfficeHome in \Cref{sect:analysis}.
We recover the same conclusions.
\begin{figure}[h]%
    \centering%
    \begin{subfigure}{.33\textwidth}%
        \centering%
        \includegraphics[width=\linewidth]{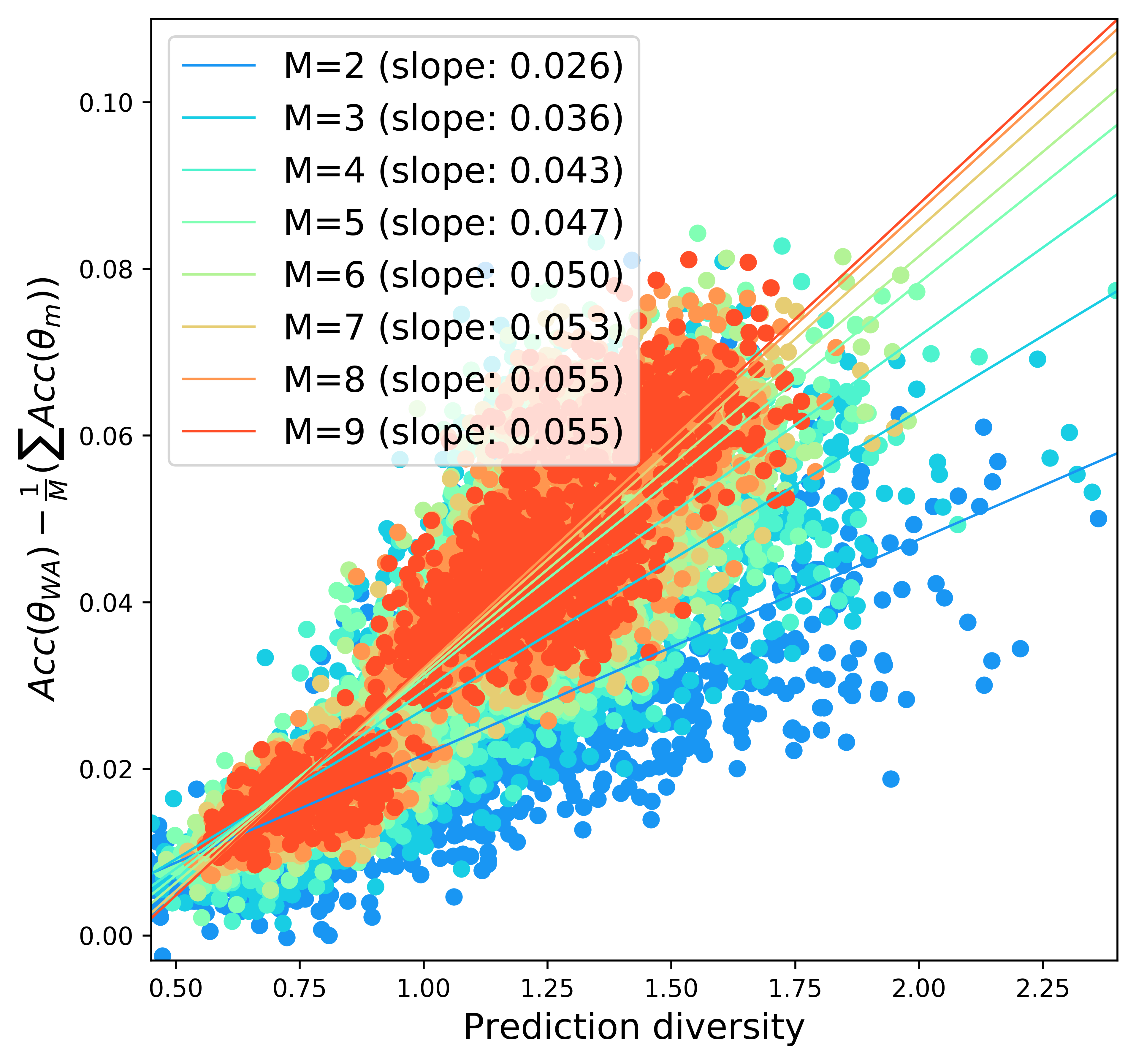}%
        \caption{Same as \Cref{fig:home0_dr_soup-netm}.}%
        \label{fig:pacs_dr_soup-netm}%
    \end{subfigure}%
    \begin{subfigure}{.33\textwidth}%
        \centering%
        \includegraphics[width=\linewidth]{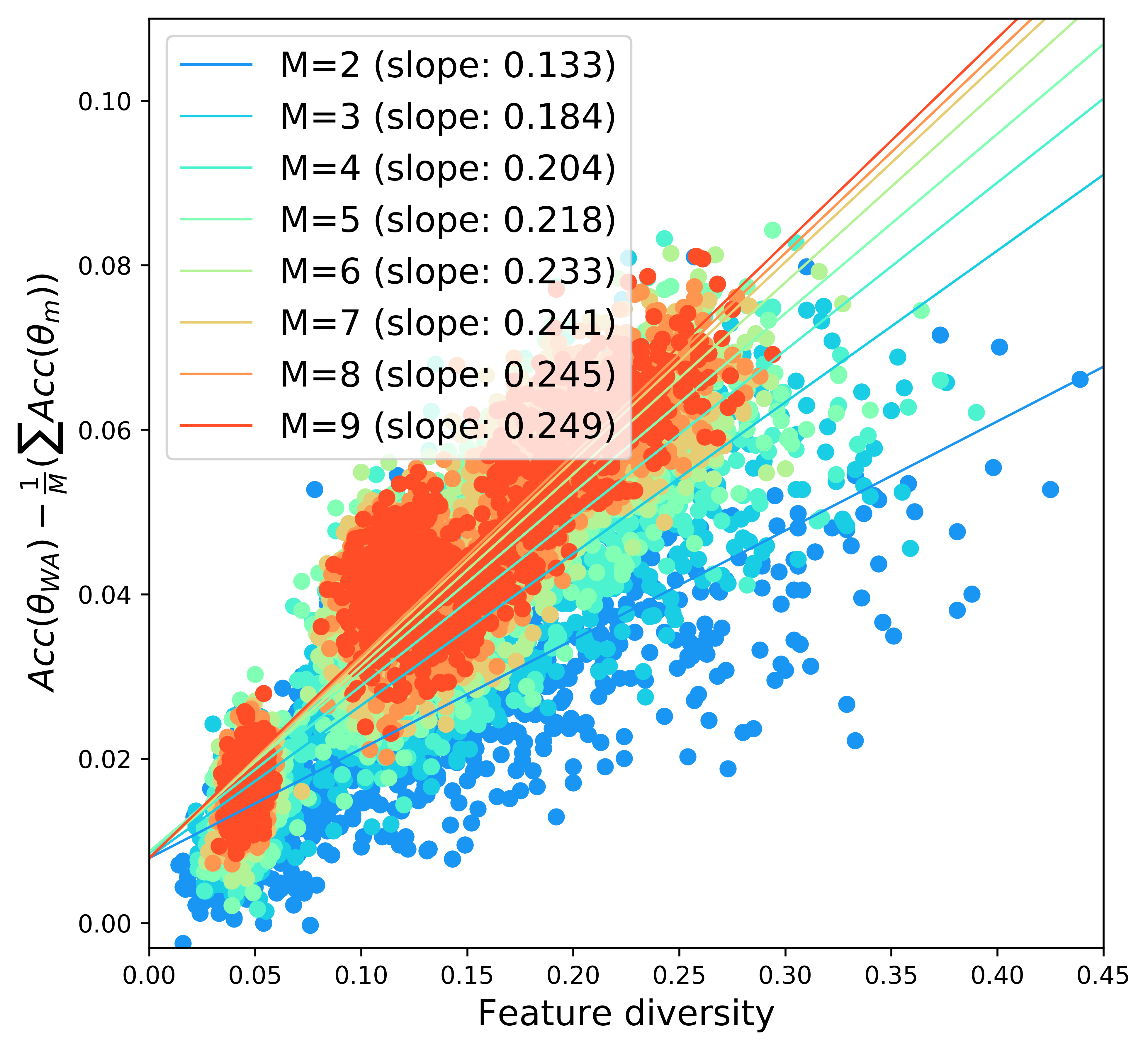}%
        \caption{Same as \Cref{fig:home0_df_soup-netm}.}%
        \label{fig:pacs_df_soup-netm}%
    \end{subfigure}%
    \begin{subfigure}{.33\textwidth}%
        \centering%
        \includegraphics[width=\textwidth]{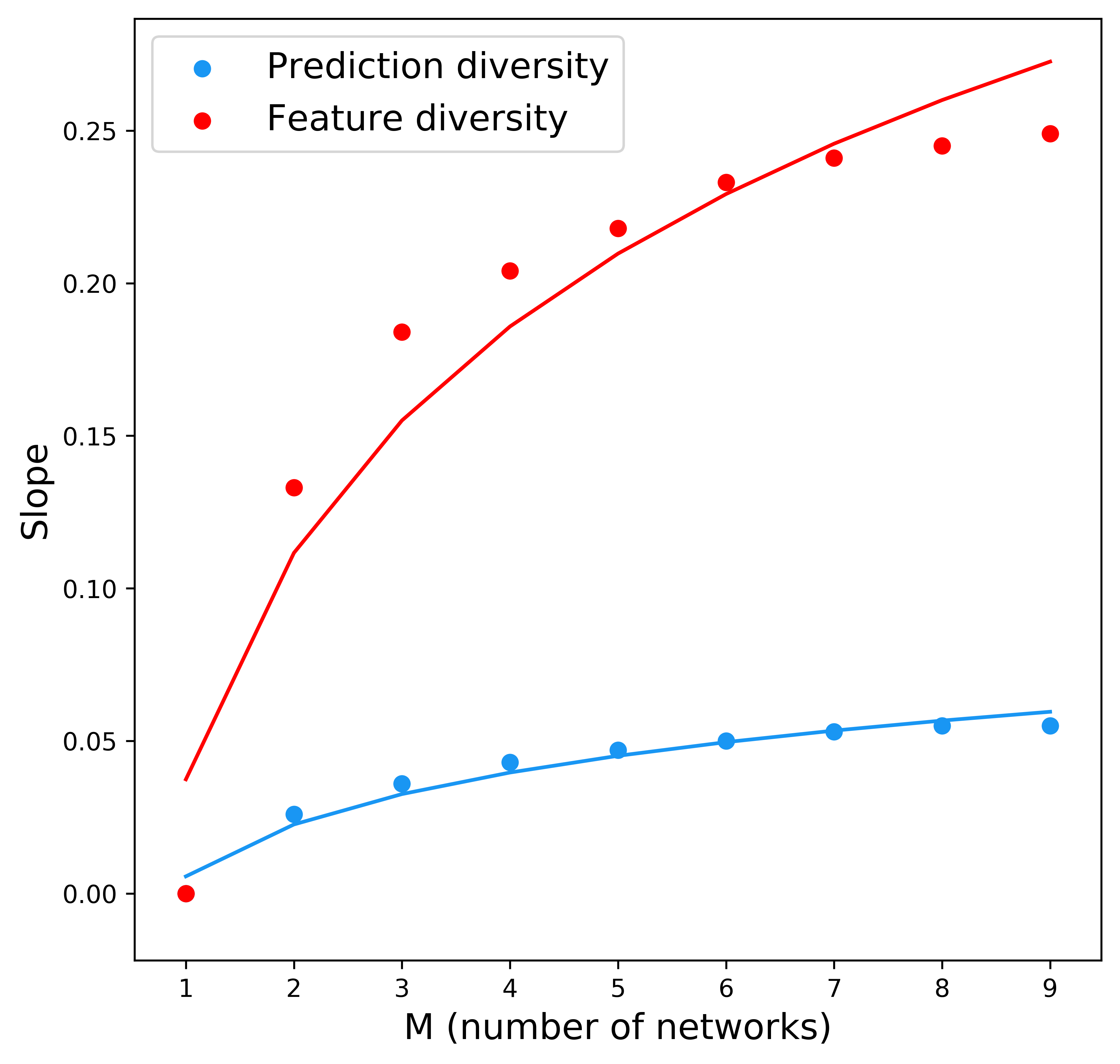}%
        \caption{Same as \Cref{fig:soup:m_slope}.}
        \label{fig:soup:pacs_m_slope}%
    \end{subfigure}%
    \caption{Same analysis on PACS as previously done on OfficeHome.}%
    \label{fig:soup:pacs_div_soup-netm}%
\end{figure}%

\begin{figure}[H]
    \vspace{-1.em}
    \centering
    \begin{subfigure}{.33\textwidth}%
        \centering%
        \includegraphics[width=\textwidth]{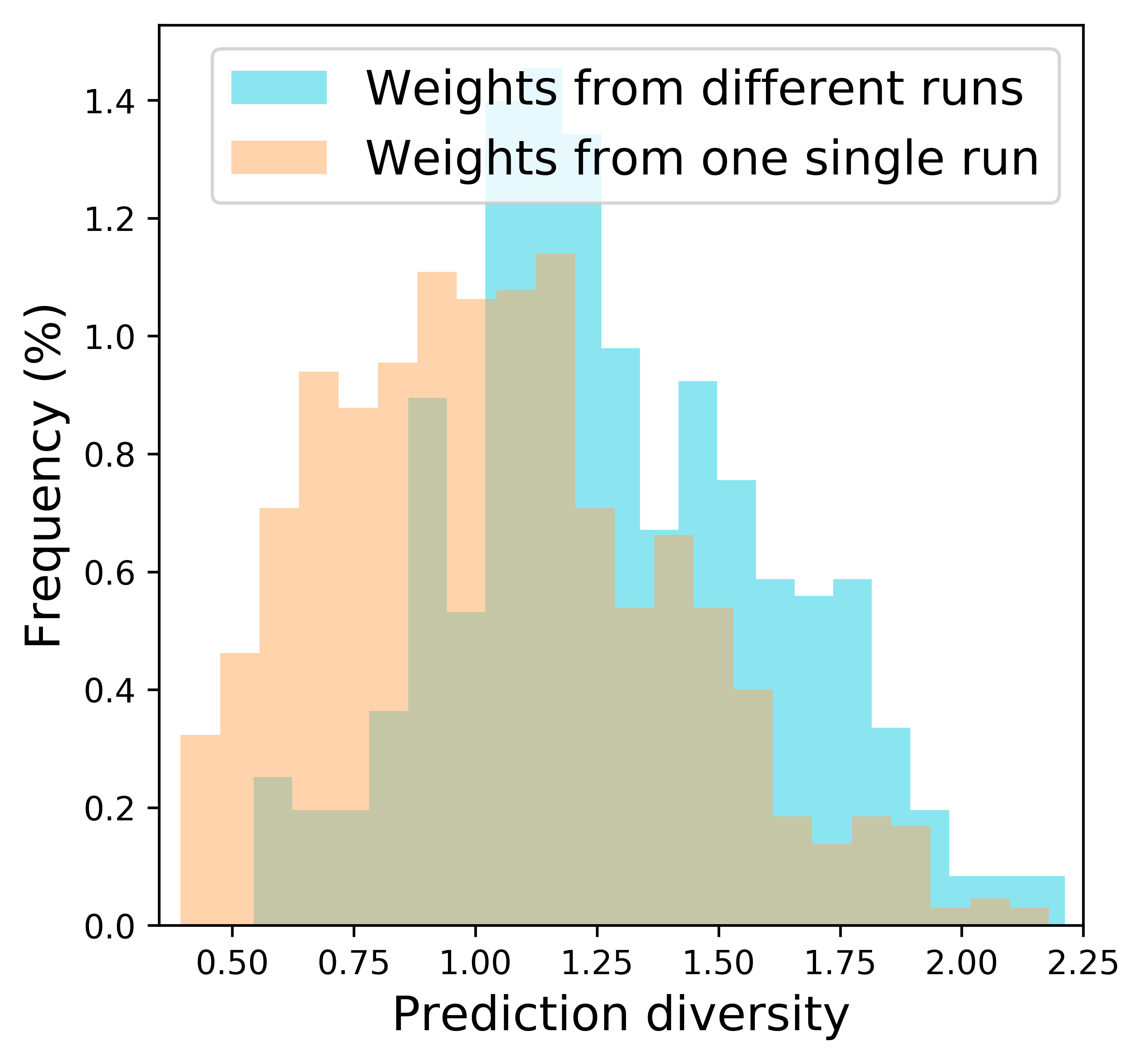}%
        \caption{Same as \Cref{fig:home0_dr_frequency}.}
    \end{subfigure}%
    \begin{subfigure}{.33\textwidth}%
        \centering%
        \includegraphics[width=\textwidth]{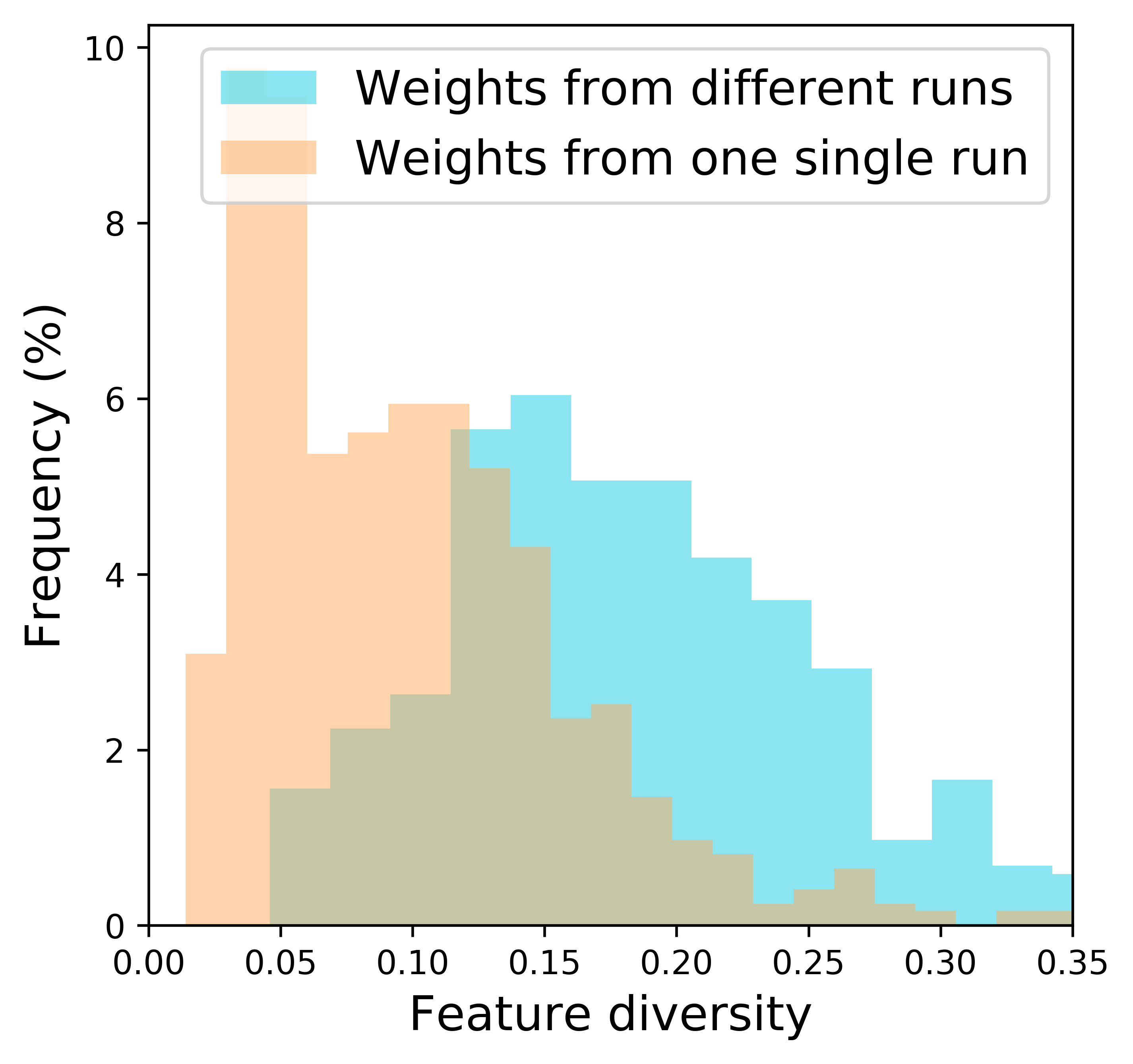}%
        \caption{Same as \Cref{fig:home0_df_frequency}.}
    \end{subfigure}%
    \begin{subfigure}{.33\textwidth}%
        \centering%
        \includegraphics[width=0.9\textwidth]{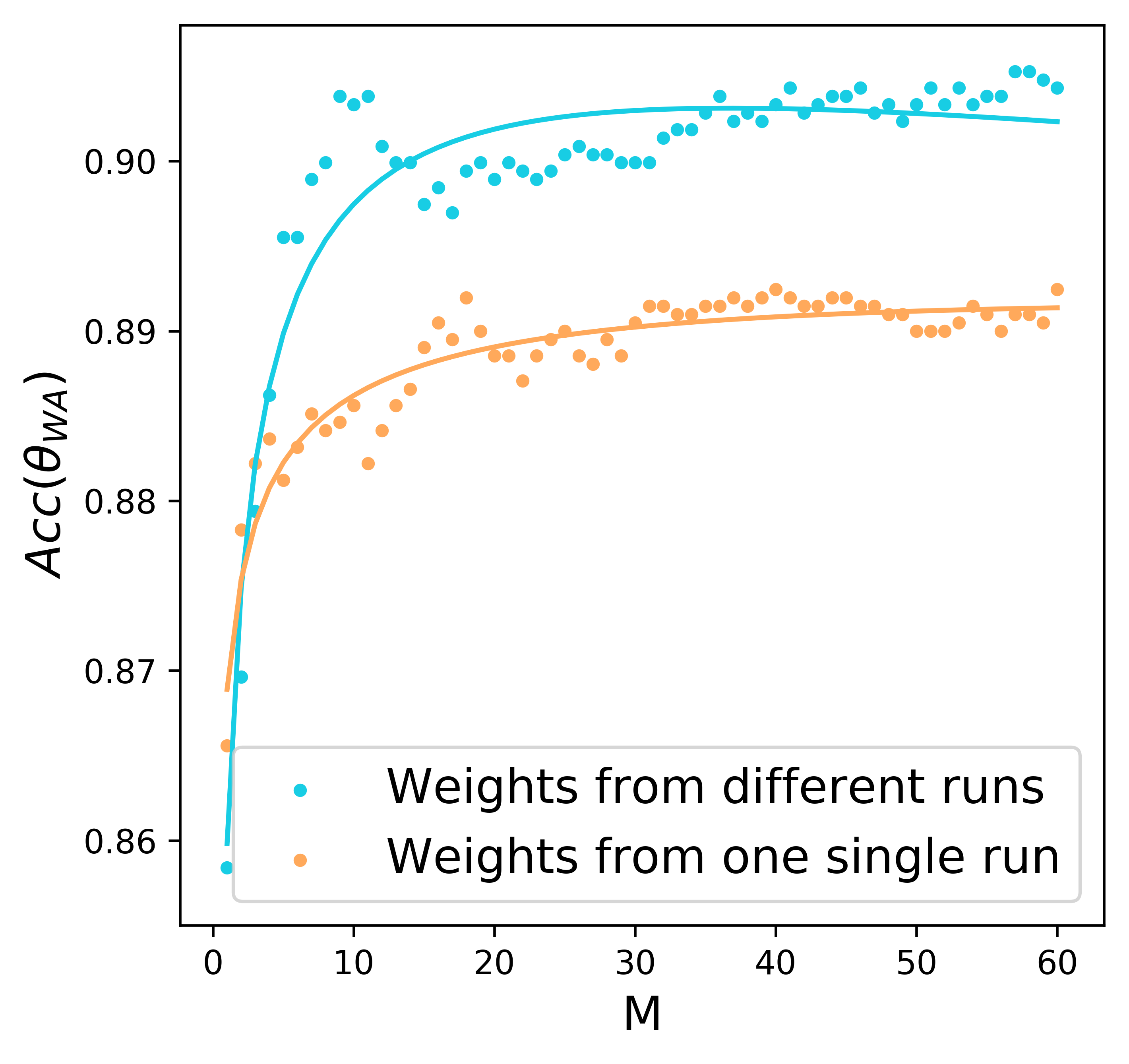}%
        \caption{Same as \Cref{fig:home0_m_vs_acc}.}
    \end{subfigure}
    \label{fig:same_analysis_pacs}%
    \caption{Same analysis on PACS as previously done on OfficeHome.}%
\end{figure}%

\section{\reb{Number of training runs}}

\label{app:valueofm}

\reb{In our experiments, we train $20$ independent training runs per data split. We selected this value as $20$ is the standard number of hyperparameter trials in DomainBed \cite{gulrajani2021in}. In \Cref{fig:home0_valueofm} we ablate this choice on the OOD domain \enquote{Art} of OfficeHome. We observe that a larger number of runs leads to improved performance and reduced standard deviation. These results are consistent with our theoretical analysis, as the variance is divided per $M$ in \Cref{prop:b_var_cov}. If reducing the training time is critical, one could benefit from significant gains over ERM even with a smaller number of runs: for example, $10$ runs seem sufficient in this case. This analysis complements \Cref{fig:home0_m_vs_acc} --- where $60$ runs were launched then sorted in increasing validation accuracy.}

\begin{figure}[H]
    \centering
    \includegraphics[width=0.5\textwidth]{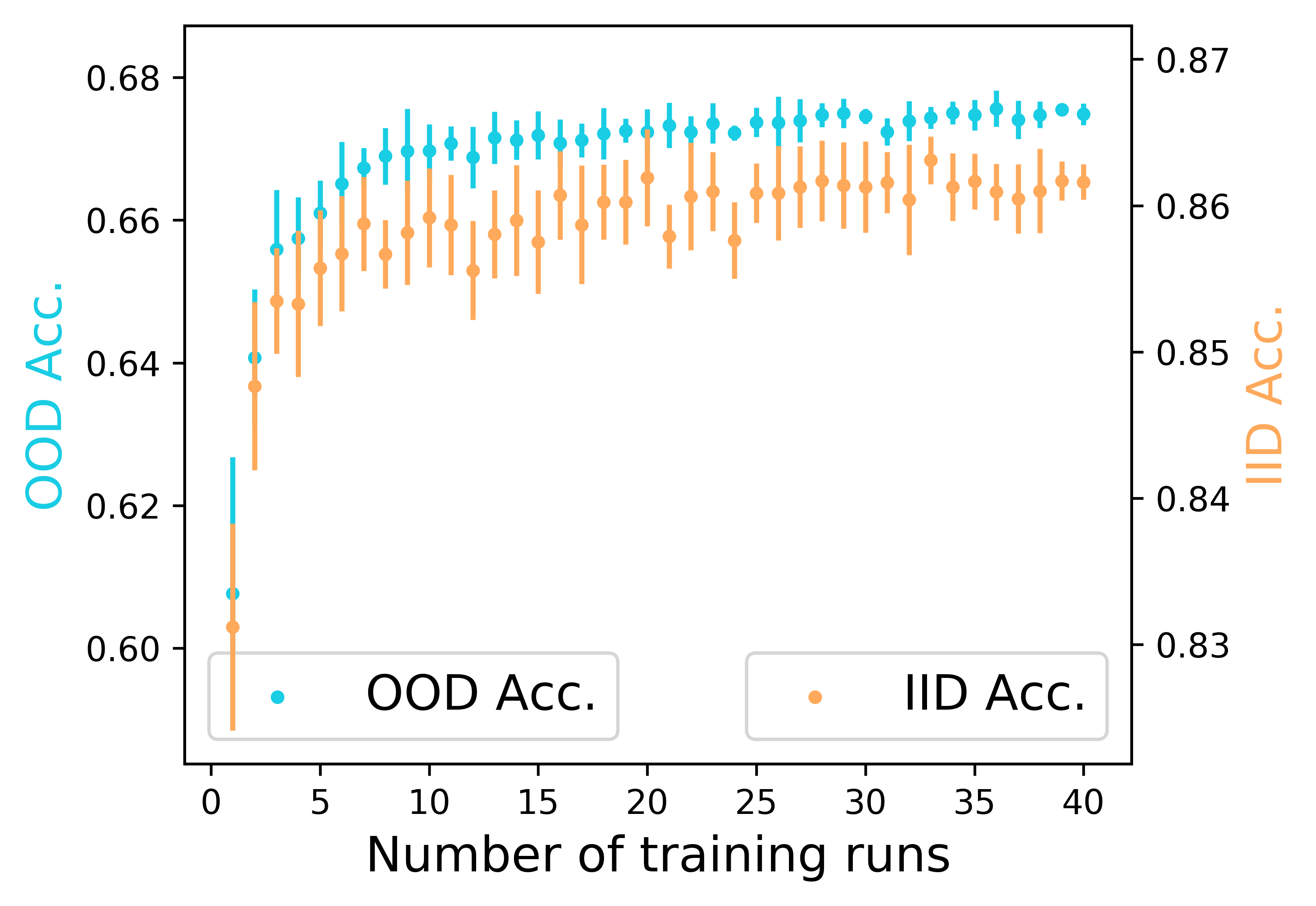}%
    \caption{\reb{\textbf{Mean and standard deviation of DiWA-uniform's accuracy ($\uparrow$)} on OfficeHome when increasing the number of training runs and uniformly averaging all weights. \ood accuracy is computed on domain \enquote{Art}, while \iid accuracy is computed on validation data from the \enquote{Clipart}+\enquote{Product}+\enquote{Photo} domains.}}
    \label{fig:home0_valueofm}%
\end{figure}

\reb{Moreover, in \Cref{tab:m5} we report DiWA's results when considering only $5$ runs, with uniform weight selection. Interestingly, it shows that $M=5$ is enough to be competitive against SWAD \cite{cha2021wad}, the previous state of the art.}
\begin{table}[h]%
    \caption{\reb{\textbf{Accuracy ($\%, \uparrow$) on DomainBed}. DiWA-uniform and LP initialization \cite{kumar2022finetuning}.}}%
    \centering
    \adjustbox{width=1.0\textwidth}{
        \reb{\begin{tabular}{l|cccccc}
            \toprule
            \textbf{Algorithm}               & \textbf{PACS}  & \textbf{VLCS}  & \textbf{OfficeHome} & \textbf{TerraInc} & \textbf{DomainNet} & \textbf{Avg} \\
            \midrule
            SWAD \cite{cha2021wad}           & 88.1 $\pm$ 0.1 & \textbf{79.1} $\pm$ 0.1 & 70.6 $\pm$ 0.2      & 50.0 $\pm$ 0.3    & 46.5 $\pm$ 0.1     & 66.9         \\
            DiWA: $M=5$              & 87.9 $\pm$ 0.2 & 78.3 $\pm$ 0.3 & 71.5 $\pm$ 0.2      & 51.0 $\pm$ 0.7    & 46.9 $\pm$ 0.3     & 67.1         \\
            DiWA: $M=20$             & 88.7 $\pm$ 0.2 & 78.4 $\pm$ 0.2 & 72.1 $\pm$ 0.2      & 51.4 $\pm$ 0.6    & 47.4 $\pm$ 0.2     & 67.6         \\
            DiWA$^{\dagger}$: $M=60$ & \textbf{89.0}           & 78.6           & \textbf{72.8}                & \textbf{51.9}              & \textbf{47.7}               & \textbf{68.0}         \\
            \bottomrule%
        \end{tabular}%
    }}%
    \label{tab:m5}%
\end{table}%

\section{DomainBed}
\label{app:domainbeddetails}

\subsection{Description of the DomainBed benchmark}
\label{app:domainbed_description}

We now further detail our experiments on the DomainBed benchmark \cite{gulrajani2021in}.

\paragraph{Data.} DomainBed includes several computer vision classification datasets divided into multiple domains.
Each domain is successively considered as the test domain while other domains are used in training.
In practice, the data from each domain is split into 80\% (used as training and testing) and 20\% (used as validation for hyperparameter selection) splits. This random process is repeated with 3 different seeds: the reported numbers are the means and the standard errors over these 3 seeds.

\paragraph{Training protocol.}
We follow the training protocol from \url{https://github.com/facebookresearch/DomainBed}.
For each dataset, domain and seed, we perform a random search of $20$ trials on the hyperparameter distributions described in Table \ref{tab:hyperparam}.
Our mild distribution is taken directly from \cite{cha2021wad}, yet could be adapted by dataset for better results.
Even though these distributions are more restricted than the extreme distributions introduced \cite{gulrajani2021in}, our ERM runs perform better.
It leads to a total amount of $2640$ runs only for \Cref{table:db_all_training}.
In \Cref{app:limit_proof_swad}, the $\rho$ hyperparameter for SAM is sampled from $[0.001, 0.002, 0.005, 0.01, 0.02, 0.05]$.
In \Cref{table:db_home0_training}, hyperparameters specific to Interdomain Mixup \cite{yan2020improve} (\enquote{mixup\_alpha}) and Coral \cite{coral216aaai} (\enquote{mmd\_gamma}) are sampled from the distributions defined in \cite{gulrajani2021in}.
We use a ResNet50 \cite{he51deep} pretrained on ImageNet, with a dropout layer before the newly added dense layer and fine-tuned with frozen batch normalization layers. The optimizer is Adam \cite{kingma2014adam}. Our classifier is either initialized randomly or with Linear Probing \cite{kumar2022finetuning}; in the latter case, we first learn only the classifier (with the encoder frozen) with the default hyperparameters defined in \Cref{tab:hyperparam}; the classifier's weights are then used to initialize all subsequent runs.
All runs are trained for 5k steps, except on DomainNet with 15k steps as done in concurrent works \cite{cha2021wad,arpit2021ensemble}. As in \cite{cha2021wad}, validation accuracy is calculated every $50$ steps for VLCS, $500$ steps for DomainNet and $100$ steps for others.

\begin{table}[H]%
    \centering%
    \caption{Hyperparameters, their default values and distributions for random search.}%
    \begin{tabular}{cccc}%
        \toprule
        \multirow{3}{*}{\textbf{Hyperparameter}} & \multirow{3}{*}{\textbf{Default value}} & \multicolumn{2}{c}{\textbf{Random distribution}} \\
                                                 &                                         & Extreme                  & Mild  \\
                                                 &                                          & (DomainBed \cite{gulrajani2021in}) & (DiWA as \cite{cha2021wad}) \\
        \midrule
        Learning rate                            & $5\cdot 10^{-5}$                        & $10^{\U(-5,-3.5)}$                  & $[1,3,5]\cdot 10^{-5}$         \\
        Batch size                               & $32$                                      & $2^{\U(3,5.5)}$                                  & $32$                             \\
        ResNet dropout                           & $0$                                       & $[0,0.1,0.5]$                                    & $[0, 0.1, 0.5]$                \\
        Weight decay                             & $0$                                       & $10^{\U(-6,-2)}$                                 & $[10^{-6}, 10^{-4}]$           \\
        \bottomrule
    \end{tabular}\label{tab:hyperparam}
\end{table}%

\paragraph{Model selection and scores.}
We consider the training-domain validation set protocol.
From each run, we thus take the weights of the epoch with maximum accuracy on the validation dataset --- which follows the training distribution. Our restricted weight selection is also based on this training-domain validation set. This strategy is not possible for DiWA$^\dagger$ as it averages $M=20\times 3$ weights trained with different data splits: they do not share a common validation dataset.
The scores for ERM and Coral are taken from DomainBed \cite{gulrajani2021in}.
Scores for SWAD \cite{cha2021wad} and MA \cite{arpit2021ensemble} are taken from their respective papers.
Note that MA and SWAD perform similarly even though SWAD introduced three additional hyperparameters tuned per dataset: \enquote{an optimum patient parameter, an overfitting patient parameter, and the tolerance rate for searching the start iteration and the end iteration}. Thus we reproduced MA \cite{arpit2021ensemble} which was much easier to implement, and closer to our uniform weight selection.%

\subsection{DomainBed results detailed per domain for each real-world dataset}
\label{app:full_results}

Tables below detail results per domain for the $5$ multi-domain real-world datasets from DomainBed: PACS \cite{li2017deeper}, VLCS \cite{fang2013unbiased}, OfficeHome \cite{venkateswara2017deep}, TerraIncognita \cite{beery2018recognition} and DomainNet \cite{peng2019moment}.
Critically, \cite{ye2021odbench} showed that diversity shift dominates in these datasets.

\begin{table}[h]%
    \caption{\textbf{Accuracy ($\%,\uparrow$) on PACS with ResNet50} (best in \textbf{bold} and second best \underline{underlined}).}%
    \centering
    \adjustbox{max width=\textwidth}{%
        \begin{tabular}{llll|ccccc}
            \toprule
             & \textbf{Algorithm}                                 & \textbf{Weight selection} & \textbf{Init}                                  & \textbf{A}       & \textbf{C}       & \textbf{P}                 & \textbf{S}     & \textbf{Avg}   \\
            \midrule
             & ERM                                                & N/A                       & \multirow{4}{*}{Random}                        & 84.7 $\pm$ 0.4   & 80.8 $\pm$ 0.6   & 97.2 $\pm$ 0.3             & 79.3 $\pm$ 1.0 & 85.5 $\pm$ 0.2 \\
             & Coral\cite{coral216aaai}                           & N/A                       &                                                & 88.3 $\pm$ 0.2   & 80.0 $\pm$ 0.5   & 97.5 $\pm$ 0.3             & 78.8 $\pm$ 1.3 & 86.2 $\pm$ 0.3 \\
             & SWAD \cite{cha2021wad}                             & Overfit-aware             &                                                & 89.3 $\pm$ 0.5   & 83.4 $\pm$ 0.6   & 97.3 $\pm$ 0.3             & 82.5 $\pm$ 0.8 & 88.1 $\pm$ 0.1 \\
             & MA \cite{arpit2021ensemble}                        & Uniform                   &                                                & 89.1 $\pm$ 0.1   & 82.6 $\pm$ 0.2   & 97.6 $\pm$ 0.0             & 80.5 $\pm$ 0.9 & 87.5 $\pm$ 0.2 \\
             & DENS \cite{Lakshminarayanan2017,arpit2021ensemble} & Uniform: $M=6$            &                                                & 88.3             & \underline{83.6} & 96.5                       & 81.9           & 87.6           \\
            \midrule
            \multirow{13}{*}{\begin{turn}{90}Our runs\end{turn}}
             & ERM                                                & N/A                       & \multirow{6}{*}{Random}                        & 87.6 $\pm$ 0.4   & 80.1 $\pm$ 1.5   & 97.7 $\pm$ 0.3             & 76.7 $\pm$ 1.2 & 85.5 $\pm$ 0.5 \\
             & MA \cite{arpit2021ensemble}                        & Uniform                   &                                                & 89.9 $\pm$ 0.1   & 83.3 $\pm$ 0.4   & 97.8 $\pm$ 0.2             & 80.6 $\pm$ 0.3 & 87.9 $\pm$ 0.1 \\
             & ENS                                                & Uniform: $M=20$           &                                                & 88.9 $\pm$ 0.4   & 82.3 $\pm$ 0.5   & 97.4 $\pm$ 0.3             & 83.2 $\pm$ 0.3 & 88.0 $\pm$ 0.1 \\
             & DiWA                                               & Restricted: $M \leq 20$   &                                                & 90.0 $\pm$ 0.3   & 82.0 $\pm$ 0.5   & 97.5 $\pm$ 0.1             & 82.0 $\pm$ 0.6 & 87.9 $\pm$ 0.2 \\
             & DiWA                                               & Uniform: $M=20$           &                                                & 90.1 $\pm$ 0.6   & 83.3 $\pm$ 0.6   & \underline{98.2} $\pm$ 0.1 & 83.4 $\pm$ 0.4 & 88.8 $\pm$ 0.4 \\
             & DiWA$^{\dagger}$                                   & Uniform: $M=60$           &                                                & \underline{90.5} & \textbf{83.7}    & \underline{98.2}           & \textbf{83.8}  & \textbf{89.0}  \\
            \cmidrule{2-9}
             & ERM                                                & N/A                       & \multirow{7}{*}{LP \cite{kumar2022finetuning}} & 86.8 $\pm$ 0.8   & 80.6 $\pm$ 1.0   & 97.4 $\pm$ 0.4             & 78.7 $\pm$ 2.0 & 85.9 $\pm$ 0.6 \\
             & MA \cite{arpit2021ensemble}                        & Uniform                   &                                                & 89.5 $\pm$ 0.1   & 82.8 $\pm$ 0.2   & 97.8 $\pm$ 0.1             & 80.9 $\pm$ 1.3 & 87.8 $\pm$ 0.3 \\
             & ENS                                                & Uniform: $M=20$           &                                                & 89.6 $\pm$ 0.2   & 81.6 $\pm$ 0.3   & 97.8 $\pm$ 0.2             & 83.5 $\pm$ 0.5 & 88.1 $\pm$ 0.3 \\
             & DiWA                                               & Restricted: $M \leq 20$   &                                                & 89.3 $\pm$ 0.2   & 82.8 $\pm$ 0.2   & 98.0 $\pm$ 0.1             & 82.0 $\pm$ 0.9 & 88.0 $\pm$ 0.3 \\
             & DiWA                                               & Uniform: $M=5$            &                                                & 89.9 $\pm$ 0.5   & 82.3 $\pm$ 0.3   & 97.7 $\pm$ 0.4             & 81.7 $\pm$ 0.8 & 87.9 $\pm$ 0.2 \\
             & DiWA                                               & Uniform: $M=20$           &                                                & 90.1 $\pm$ 0.2   & 82.8 $\pm$ 0.6   & \textbf{98.3} $\pm$ 0.1    & 83.3 $\pm$ 0.4 & 88.7 $\pm$ 0.2 \\
             & DiWA$^{\dagger}$                                   & Uniform: $M=60$           &                                                & \textbf{90.6}    & \underline{83.4} & \underline{98.2}           & \textbf{83.8}  & \textbf{89.0}  \\
            \bottomrule
        \end{tabular}}
\end{table}

\begin{table}[h]%
    \caption{\textbf{Accuracy ($\%,\uparrow$) on VLCS with ResNet50} (best in \textbf{bold} and second best \underline{underlined}).}%
    \centering
    \adjustbox{max width=\textwidth}{%
        \begin{tabular}{llll|ccccc}
            \toprule
             & \textbf{Algorithm}                                 & \textbf{Weight selection} & \textbf{Init}                                  & \textbf{C}                 & \textbf{L}              & \textbf{S}                 & \textbf{V}                 & \textbf{Avg}               \\
            \midrule
             & ERM                                                & N/A                       & \multirow{4}{*}{Random}                        & 97.7 $\pm$ 0.4             & 64.3 $\pm$ 0.9          & 73.4 $\pm$ 0.5             & 74.6 $\pm$ 1.3             & 77.5 $\pm$ 0.4             \\
             & Coral\cite{coral216aaai}                           & N/A                       &                                                & 98.3 $\pm$ 0.1             & 66.1 $\pm$ 1.2          & 73.4 $\pm$ 0.3             & 77.5 $\pm$ 1.2             & 78.8 $\pm$ 0.6             \\
             & SWAD \cite{cha2021wad}                             & Overfit-aware             &                                                & 98.8 $\pm$ 0.1             & 63.3 $\pm$ 0.3          & 75.3 $\pm$ 0.5             & \underline{79.2} $\pm$ 0.6 & 79.1 $\pm$ 0.1             \\
             & MA \cite{arpit2021ensemble}                        & Uniform                   &                                                & \textbf{99.0} $\pm$ 0.2    & 63.0 $\pm$ 0.2          & 74.5 $\pm$ 0.3             & 76.4 $\pm$ 1.1             & 78.2 $\pm$ 0.2             \\
             & DENS \cite{Lakshminarayanan2017,arpit2021ensemble} & Uniform: $M=6$            &                                                & 98.7                       & 64.5                    & 72.1                       & 78.9                       & 78.5                       \\
            \midrule
            \multirow{13}{*}{\begin{turn}{90}Our runs\end{turn}}
             & ERM                                                & N/A                       & \multirow{6}{*}{Random}                        & 97.9 $\pm$ 0.5             & 64.2 $\pm$ 0.3          & 73.5 $\pm$ 0.5             & 74.9 $\pm$ 1.2             & 77.6 $\pm$ 0.2             \\
             & MA \cite{arpit2021ensemble}                        & Uniform                   &                                                & 98.5 $\pm$ 0.2             & 63.5 $\pm$ 0.2          & 74.4 $\pm$ 0.8             & 77.3 $\pm$ 0.3             & 78.4 $\pm$ 0.1             \\
             & ENS                                                & Uniform: $M=20$           &                                                & 98.6 $\pm$ 0.1             & \textbf{64.9} $\pm$ 0.2 & 73.5 $\pm$ 0.3             & 77.7 $\pm$ 0.3             & 78.7 $\pm$ 0.1             \\
             & DiWA                                               & Restricted: $M \leq 20$   &                                                & 98.3 $\pm$ 0.1             & 63.9 $\pm$ 0.2          & \underline{75.6} $\pm$ 0.2 & 79.1 $\pm$ 0.3             & \underline{79.2} $\pm$ 0.1 \\
             & DiWA                                               & Uniform: $M=20$           &                                                & 98.4 $\pm$ 0.1             & 63.4 $\pm$ 0.1          & 75.5 $\pm$ 0.3             & 78.9 $\pm$ 0.6             & 79.1 $\pm$ 0.2             \\
             & DiWA$^{\dagger}$                                   & Uniform: $M=60$           &                                                & 98.4                       & 63.3                    & \textbf{76.1}              & \textbf{79.6}              & \textbf{79.4}              \\
            \cmidrule{2-9}
             & ERM                                                & N/A                       & \multirow{7}{*}{LP \cite{kumar2022finetuning}} & 98.1 $\pm$ 0.3             & 64.4 $\pm$ 0.3          & 72.5 $\pm$ 0.5             & 77.7 $\pm$ 1.3             & 78.1 $\pm$ 0.5             \\
             & MA \cite{arpit2021ensemble}                        & Uniform                   &                                                & \underline{98.9} $\pm$ 0.0 & 62.9 $\pm$ 0.5          & 73.7 $\pm$ 0.3             & 78.7 $\pm$ 0.6             & 78.5 $\pm$ 0.4             \\
             & ENS                                                & Uniform: $M=20$           &                                                & 98.5 $\pm$ 0.1             & \textbf{64.9} $\pm$ 0.1 & 73.4 $\pm$ 0.4             & 77.2 $\pm$ 0.4             & 78.5 $\pm$ 0.1             \\
             & DiWA                                               & Restricted: $M \leq 20$   &                                                & 98.4 $\pm$ 0.0             & 64.1 $\pm$ 0.2          & 73.3 $\pm$ 0.4             & 78.1 $\pm$ 0.8             & 78.5 $\pm$ 0.1             \\
             & DiWA                                               & Uniform: $M=5$            &                                                & 98.8 $\pm$ 0.0             & 63.8 $\pm$ 0.5          & 72.9 $\pm$ 0.2             & 77.6 $\pm$ 0.5             & 78.3 $\pm$ 0.3             \\
             & DiWA                                               & Uniform: $M=20$           &                                                & 98.8 $\pm$ 0.1             & 62.8 $\pm$ 0.2          & 73.9 $\pm$ 0.3             & 78.3 $\pm$ 0.1             & 78.4 $\pm$ 0.2             \\
             & DiWA$^{\dagger}$                                   & Uniform: $M=60$           &                                                & \underline{98.9}           & 62.4                    & 73.9                       & 78.9                       & 78.6                       \\
            \bottomrule
        \end{tabular}}
\end{table}

\begin{table}[h]%
    \caption{\textbf{Accuracy ($\%,\uparrow$) on OfficeHome with ResNet50} (best in \textbf{bold} and second best \underline{underlined}).}%
    \centering
    \adjustbox{max width=\textwidth}{%
        \begin{tabular}{llll|ccccc}
            \toprule
             & \textbf{Algorithm}                                 & \textbf{Weight selection} & \textbf{Init}                                  & \textbf{A}                 & \textbf{C}       & \textbf{P}                 & \textbf{R}                 & \textbf{Avg}               \\
            \midrule
             & ERM                                                & N/A                       & \multirow{4}{*}{Random}                        & 61.3 $\pm$ 0.7             & 52.4 $\pm$ 0.3   & 75.8 $\pm$ 0.1             & 76.6 $\pm$ 0.3             & 66.5 $\pm$ 0.3             \\
             & Coral\cite{coral216aaai}                           & N/A                       &                                                & 65.3 $\pm$ 0.4             & 54.4 $\pm$ 0.5   & 76.5 $\pm$ 0.1             & 78.4 $\pm$ 0.5             & 68.7 $\pm$ 0.3             \\
             & SWAD \cite{cha2021wad}                             & Overfit-aware             &                                                & 66.1 $\pm$ 0.4             & 57.7 $\pm$ 0.4   & 78.4 $\pm$ 0.1             & 80.2 $\pm$ 0.2             & 70.6 $\pm$ 0.2             \\
             & MA \cite{arpit2021ensemble}                        & Uniform                   &                                                & 66.7 $\pm$ 0.5             & 57.1 $\pm$ 0.1   & 78.6 $\pm$ 0.1             & 80.0 $\pm$ 0.0             & 70.6 $\pm$ 0.1             \\
             & DENS \cite{Lakshminarayanan2017,arpit2021ensemble} & Uniform: $M=6$            &                                                & 65.6                       & 58.5             & 78.7                       & 80.5                       & 70.8                       \\
            \midrule
            \multirow{13}{*}{\begin{turn}{90}Our runs\end{turn}}
             & ERM                                                & N/A                       & \multirow{6}{*}{Random}                        & 62.9 $\pm$ 1.3             & 54.0 $\pm$ 0.2   & 75.7 $\pm$ 0.9             & 77.0 $\pm$ 0.8             & 67.4 $\pm$ 0.6             \\
             & MA \cite{arpit2021ensemble}                        & Uniform                   &                                                & 65.0 $\pm$ 0.2             & 57.9 $\pm$ 0.3   & 78.5 $\pm$ 0.1             & 79.7 $\pm$ 0.1             & 70.3 $\pm$ 0.1             \\
             & ENS                                                & Uniform: $M=20$           &                                                & 66.1 $\pm$ 0.1             & 57.0 $\pm$ 0.3   & 79.0 $\pm$ 0.2             & 80.0 $\pm$ 0.1             & 70.5 $\pm$ 0.1             \\
             & DiWA                                               & Restricted: $M \leq 20$   &                                                & 66.7 $\pm$ 0.1             & 57.0 $\pm$ 0.3   & 78.5 $\pm$ 0.3             & 79.9 $\pm$ 0.3             & 70.5 $\pm$ 0.1             \\
             & DiWA                                               & Uniform: $M=20$           &                                                & 67.3 $\pm$ 0.2             & 57.9 $\pm$ 0.2   & 79.0 $\pm$ 0.2             & 79.9 $\pm$ 0.1             & 71.0 $\pm$ 0.1             \\
             & DiWA$^{\dagger}$                                   & Uniform: $M=60$           &                                                & 67.7                       & \underline{58.8} & 79.4                       & 80.5                       & 71.6                       \\
            \cmidrule{2-9}
             & ERM                                                & N/A                       & \multirow{7}{*}{LP \cite{kumar2022finetuning}} & 63.9 $\pm$ 1.2             & 54.8 $\pm$ 0.6   & 78.7 $\pm$ 0.1             & 80.4 $\pm$ 0.2             & 69.4 $\pm$ 0.2             \\
             & MA \cite{arpit2021ensemble}                        & Uniform                   &                                                & 67.4 $\pm$ 0.4             & 57.3 $\pm$ 0.9   & 79.7 $\pm$ 0.1             & \underline{81.7} $\pm$ 0.6 & 71.5 $\pm$ 0.3             \\
             & ENS                                                & Uniform: $M=20$           &                                                & 67.0 $\pm$ 0.1             & 57.9 $\pm$ 0.4   & \underline{80.0} $\pm$ 0.2 & \underline{81.7} $\pm$ 0.3 & 71.7 $\pm$ 0.1             \\
             & DiWA                                               & Restricted: $M \leq 20$   &                                                & 67.8 $\pm$ 0.5             & 57.2 $\pm$ 0.5   & 79.6 $\pm$ 0.1             & 81.4 $\pm$ 0.4             & 71.5 $\pm$ 0.2             \\
             & DiWA                                               & Uniform: $M=5$            &                                                & 68.4 $\pm$ 0.4             & 57.4 $\pm$ 0.5   & 79.2 $\pm$ 0.2             & 80.9 $\pm$ 0.4             & 71.5 $\pm$ 0.3             \\
             & DiWA                                               & Uniform: $M=20$           &                                                & \underline{68.4} $\pm$ 0.2 & 58.2 $\pm$ 0.5   & \underline{80.0} $\pm$ 0.1 & \underline{81.7} $\pm$ 0.3 & \underline{72.1} $\pm$ 0.2 \\
             & DiWA$^{\dagger}$                                   & Uniform: $M=60$           &                                                & \textbf{69.2}              & \textbf{59.0}    & \textbf{80.6}              & \textbf{82.2}              & \textbf{72.8}              \\
            \bottomrule
        \end{tabular}}
\end{table}

\begin{table}[h]%
    \caption{\textbf{Accuracy ($\%,\uparrow$) on TerraIncognita with ResNet50} (best in \textbf{bold} and second best \underline{underlined}).}%
    \centering
    \adjustbox{max width=\textwidth}{%
        \begin{tabular}{llll|ccccc}
            \toprule
             & \textbf{Algorithm}                                 & \textbf{Weight selection} & \textbf{Init}                                  & \textbf{L100}              & \textbf{L38}               & \textbf{L43}            & \textbf{L46}               & \textbf{Avg}               \\
            \midrule
             & ERM                                                & N/A                       & \multirow{4}{*}{Random}                        & 49.8 $\pm$ 4.4             & 42.1 $\pm$ 1.4             & 56.9 $\pm$ 1.8          & 35.7 $\pm$ 3.9             & 46.1 $\pm$ 1.8             \\
             & Coral\cite{coral216aaai}                           & N/A                       &                                                & 51.6 $\pm$ 2.4             & 42.2 $\pm$ 1.0             & 57.0 $\pm$ 1.0          & 39.8 $\pm$ 2.9             & 47.6 $\pm$ 1.0             \\
             & SWAD \cite{cha2021wad}                             & Overfit-aware             &                                                & 55.4 $\pm$ 0.0             & 44.9 $\pm$ 1.1             & 59.7 $\pm$ 0.4          & 39.9 $\pm$ 0.2             & 50.0 $\pm$ 0.3             \\
             & MA \cite{arpit2021ensemble}                        & Uniform                   &                                                & 54.9 $\pm$ 0.4             & 45.5 $\pm$ 0.6             & 60.1 $\pm$ 1.5          & 40.5 $\pm$ 0.4             & 50.3 $\pm$ 0.5             \\
             & DENS \cite{Lakshminarayanan2017,arpit2021ensemble} & Uniform: $M=6$            &                                                & 53.0                       & 42.6                       & 60.5                    & 40.8                       & 49.2                       \\
            \midrule
            \multirow{13}{*}{\begin{turn}{90}Our runs\end{turn}}
             & ERM                                                & N/A                       & \multirow{6}{*}{Random}                        & 56.3 $\pm$ 2.9             & 43.1 $\pm$ 1.6             & 57.1 $\pm$ 1.0          & 36.7 $\pm$ 0.7             & 48.3 $\pm$ 0.8             \\
             & MA \cite{arpit2021ensemble}                        & Uniform                   &                                                & 53.2 $\pm$ 0.4             & 46.3 $\pm$ 1.0             & 60.1 $\pm$ 0.6          & 40.2 $\pm$ 0.8             & 49.9 $\pm$ 0.2             \\
             & ENS                                                & Uniform: $M=20$           &                                                & 56.4 $\pm$ 1.5             & 45.3 $\pm$ 0.4             & \textbf{61.0} $\pm$ 0.3 & \underline{41.4} $\pm$ 0.5 & 51.0 $\pm$ 0.5             \\
             & DiWA                                               & Restricted: $M \leq 20$   &                                                & 55.6 $\pm$ 1.5             & 47.5 $\pm$ 0.5             & 59.5 $\pm$ 0.5          & 39.4 $\pm$ 0.2             & 50.5 $\pm$ 0.5             \\
             & DiWA                                               & Uniform: $M=20$           &                                                & 52.2 $\pm$ 1.8             & 46.2 $\pm$ 0.4             & 59.2 $\pm$ 0.2          & 37.8 $\pm$ 0.6             & 48.9 $\pm$ 0.5             \\
             & DiWA$^{\dagger}$                                   & Uniform: $M=60$           &                                                & 52.7                       & 46.3                       & 59.0                    & 37.7                       & 49.0                       \\
            \cmidrule{2-9}
             & ERM                                                & N/A                       & \multirow{7}{*}{LP \cite{kumar2022finetuning}} & \textbf{59.9} $\pm$ 4.2    & 46.9 $\pm$ 0.9             & 54.6 $\pm$ 0.3          & 40.1 $\pm$ 2.2             & 50.4 $\pm$ 1.8             \\
             & MA \cite{arpit2021ensemble}                        & Uniform                   &                                                & 54.6 $\pm$ 1.4             & 48.6 $\pm$ 0.4             & 59.9 $\pm$ 0.7          & \textbf{42.7} $\pm$ 0.8    & 51.4 $\pm$ 0.6             \\
             & ENS                                                & Uniform: $M=20$           &                                                & 55.6 $\pm$ 1.4             & 45.4 $\pm$ 0.4             & \textbf{61.0} $\pm$ 0.4 & 41.3 $\pm$ 0.3             & 50.8 $\pm$ 0.5             \\
             & DiWA                                               & Restricted: $M \leq 20$   &                                                & \underline{58.5} $\pm$ 2.2 & 48.2 $\pm$ 0.3             & 58.5 $\pm$ 0.3          & 41.1 $\pm$ 1.2             & \underline{51.6} $\pm$ 0.9 \\
             & DiWA                                               & Uniform: $M=5$            &                                                & 56.0 $\pm$ 2.5             & 48.9 $\pm$ 0.8             & 58.4 $\pm$ 0.2          & 40.6 $\pm$ 0.8             & 51.0 $\pm$ 0.7             \\
             & DiWA                                               & Uniform: $M=20$           &                                                & 56.3 $\pm$ 1.9             & \underline{49.4} $\pm$ 0.7 & 59.9 $\pm$ 0.4          & 39.8 $\pm$ 0.5             & 51.4 $\pm$ 0.6             \\
             & DiWA$^{\dagger}$                                   & Uniform: $M=60$           &                                                & 57.2                       & \textbf{50.1}              & 60.3                    & 39.8                       & \textbf{51.9}              \\
            \bottomrule
        \end{tabular}}
\end{table}

\begin{table}[h]%
    \caption{\textbf{Accuracy ($\%,\uparrow$) on DomainNet with ResNet50} (best in \textbf{bold} and second best \underline{underlined}).}%
    \centering
    \adjustbox{max width=\textwidth}{%
        \begin{tabular}{llll|ccccccc}
            \toprule
             & \textbf{Algorithm}                                 & \textbf{Weight selection} & \textbf{Init}                                  & \textbf{clip}              & \textbf{info}           & \textbf{paint}             & \textbf{quick}             & \textbf{real}              & \textbf{sketch}            & \textbf{Avg}            \\
            \midrule
             & ERM                                                & N/A                       & \multirow{4}{*}{Random}                        & 58.1 $\pm$ 0.3             & 18.8 $\pm$ 0.3          & 46.7 $\pm$ 0.3             & 12.2 $\pm$ 0.4             & 59.6 $\pm$ 0.1             & 49.8 $\pm$ 0.4             & 40.9 $\pm$ 0.1          \\
             & Coral\cite{coral216aaai}                           & N/A                       &                                                & 59.2 $\pm$ 0.1             & 19.7 $\pm$ 0.2          & 46.6 $\pm$ 0.3             & 13.4 $\pm$ 0.4             & 59.8 $\pm$ 0.2             & 50.1 $\pm$ 0.6             & 41.5 $\pm$ 0.1          \\
             & SWAD \cite{cha2021wad}                             & Overfit-aware             &                                                & 66.0 $\pm$ 0.1             & 22.4 $\pm$ 0.3          & 53.5 $\pm$ 0.1             & 16.1 $\pm$ 0.2             & 65.8 $\pm$ 0.4             & 55.5 $\pm$ 0.3             & 46.5 $\pm$ 0.1          \\
             & MA \cite{arpit2021ensemble}                        & Uniform                   &                                                & 64.4 $\pm$ 0.3             & 22.4 $\pm$ 0.2          & 53.4 $\pm$ 0.3             & 15.4 $\pm$ 0.1             & 64.7 $\pm$ 0.2             & 55.5 $\pm$ 0.1             & 46.0 $\pm$ 0.1          \\
             & DENS \cite{Lakshminarayanan2017,arpit2021ensemble} & Uniform: $M=6$            &                                                & \textbf{68.3}              & 23.1                    & 54.5                       & \underline{16.3}           & 66.9                       & \textbf{57.0}              & \textbf{47.7}           \\
            \midrule
            \multirow{13}{*}{\begin{turn}{90}Our runs\end{turn}}
             & ERM                                                & N/A                       & \multirow{6}{*}{Random}                        & 62.6 $\pm$ 0.4             & 21.6 $\pm$ 0.3          & 50.4 $\pm$ 0.1             & 13.8 $\pm$ 0.2             & 63.6 $\pm$ 0.4             & 52.5 $\pm$ 0.4             & 44.1 $\pm$ 0.1          \\
             & MA \cite{arpit2021ensemble}                        & Uniform                   &                                                & 64.5 $\pm$ 0.2             & 22.7 $\pm$ 0.1          & 53.8 $\pm$ 0.1             & 15.6 $\pm$ 0.1             & 66.0 $\pm$ 0.1             & 55.7 $\pm$ 0.1             & 46.4 $\pm$ 0.1          \\
             & ENS                                                & Uniform: $M=20$           &                                                & \underline{67.3} $\pm$ 0.4 & 22.9 $\pm$ 0.1          & 54.2 $\pm$ 0.2             & 15.5 $\pm$ 0.2             & 67.7 $\pm$ 0.2             & \underline{56.7} $\pm$ 0.2 & 47.4 $\pm$ 0.2          \\
             & DiWA                                               & Restricted: $M \leq 20$   &                                                & 65.2 $\pm$ 0.3             & 23.0 $\pm$ 0.3          & 54.0 $\pm$ 0.1             & 15.9 $\pm$ 0.1             & 66.2 $\pm$ 0.1             & 55.5 $\pm$ 0.1             & 46.7 $\pm$ 0.1          \\
             & DiWA                                               & Uniform: $M=20$           &                                                & 63.4 $\pm$ 0.2             & 23.1 $\pm$ 0.1          & 53.9 $\pm$ 0.2             & 15.4 $\pm$ 0.2             & 65.5 $\pm$ 0.2             & 55.1 $\pm$ 0.2             & 46.1 $\pm$ 0.1          \\
             & DiWA$^{\dagger}$                                   & Uniform: $M=60$           &                                                & 63.5                       & \textbf{23.3}           & 54.3                       & 15.6                       & 65.7                       & 55.3                       & 46.3                    \\
            \cmidrule{2-11}
             & ERM                                                & N/A                       & \multirow{7}{*}{LP \cite{kumar2022finetuning}} & 63.4 $\pm$ 0.2             & 21.1 $\pm$ 0.4          & 50.7 $\pm$ 0.3             & 13.5 $\pm$ 0.4             & 64.8 $\pm$ 0.4             & 52.4 $\pm$ 0.1             & 44.3 $\pm$ 0.2          \\
             & MA \cite{arpit2021ensemble}                        & Uniform                   &                                                & 64.8 $\pm$ 0.1             & 22.3 $\pm$ 0.0          & 54.2 $\pm$ 0.1             & 16.0 $\pm$ 0.1             & 67.4 $\pm$ 0.0             & 55.2 $\pm$ 0.1             & 46.6 $\pm$ 0.0          \\
             & ENS                                                & Uniform: $M=20$           &                                                & 66.7 $\pm$ 0.4             & 22.2 $\pm$ 0.1          & 54.1 $\pm$ 0.2             & 15.1 $\pm$ 0.2             & \underline{68.4} $\pm$ 0.1 & 55.7 $\pm$ 0.2             & 47.0 $\pm$ 0.2          \\
             & DiWA                                               & Restricted: $M \leq 20$   &                                                & 66.7 $\pm$ 0.2             & \textbf{23.3} $\pm$ 0.2 & \underline{55.3} $\pm$ 0.1 & \underline{16.3} $\pm$ 0.2 & 68.2 $\pm$ 0.0             & 56.2 $\pm$ 0.1             & \textbf{47.7} $\pm$ 0.1 \\
             & DiWA                                               & Uniform: $M=5$            &                                                & 65.7 $\pm$ 0.5             & 22.6 $\pm$ 0.2          & 54.4 $\pm$ 0.4             & 15.5 $\pm$ 0.5             & 67.7 $\pm$ 0.0             & 55.5 $\pm$ 0.4             & 46.9 $\pm$ 0.3          \\
             & DiWA                                               & Uniform: $M=20$           &                                                & 65.9 $\pm$ 0.4             & 23.0 $\pm$ 0.2          & 55.0 $\pm$ 0.3             & 16.1 $\pm$ 0.2             & \underline{68.4} $\pm$ 0.1 & 55.7 $\pm$ 0.4             & 47.4 $\pm$ 0.2          \\
             & DiWA$^{\dagger}$                                   & Uniform: $M=60$           &                                                & 66.2                       & \textbf{23.3}           & \textbf{55.4}              & \textbf{16.5}              & \textbf{68.7}              & 56.0                       & \textbf{47.7}           \\
            \bottomrule
        \end{tabular}}
\end{table}

\FloatBarrier

\section{Failure of WA under correlation shift on ColoredMNIST}
\label{app:failure_corr_shift}
Based on \Cref{eq:b_var_cov}, we explained that WA is efficient when variance dominates; we showed in \Cref{subsec:expression_ood_var} that this occurs under diversity shift.
This is confirmed by our state-of-the-art results in \Cref{table:db_all_training} and \Cref{app:full_results} on PACS, OfficeHome, VLCS, TerraIncognita and DomainNet.
In contrast, we argue that WA is inefficient when bias dominates, \ie in the presence of correlation shift (see \Cref{subsec:expression_ood_bias}).
We verify this failure on the ColoredMNIST \cite{arjovsky2019invariant} dataset, which is dominated by correlation shift \cite{peng2019moment}.

Colored MNIST is a colored variant of the MNIST handwritten digit classification dataset \cite{lecun2010mnist} where the correlation strengths between color and label vary across domains.
We follow the protocol described in \Cref{app:domainbed_description} except that (1) we used the convolutional neural network architecture introduced in DomainBed \cite{gulrajani2021in} for MNIST experiments and (2) we used the test-domain model selection in addition to the train-domain model selection. Indeed, as stated in \cite{ye2021odbench}, \enquote{it may be improper to apply training-domain validation to datasets dominated by correlation shift since under the influence of spurious correlations, achieving excessively high accuracy in the training environments often leads to low accuracy in novel test environments}.

In \Cref{table:db_cmnist_training,table:db_cmnist}, we observe that DiWA-uniform and MA both perform poorly compared to ERM.
Note that DiWA-restricted does not degrade ERM as it selects only a few models for averaging (low $M$).
This confirms that our approach is useful to tackle diversity shift but not correlation shift, for which invariance-based approaches as IRM \cite{arjovsky2019invariant} or Fishr \cite{rame_fishr_2021} remain state-of-the-art.%
\begin{table}[h]%
    \caption{\textbf{Accuracy ($\%,\uparrow$) on ColoredMNIST}. WA does not improve performance under correlation shift. Random initialization of the classifier. Training-domain model selection.}%
    \centering
    \vspace{0.5em}
    \adjustbox{width=0.8\textwidth}{
        \begin{tabular}{lll|cccc}
            \toprule
                                                       & \textbf{Algorithm}               & \textbf{Weight selection} & \textbf{+90\%}             & \textbf{+80\%}             & \textbf{-90\%}             & \textbf{Avg}              \\
            \midrule
                                                       & ERM                              & N/A                       & 71.7 $\pm$ 0.1             & 72.9 $\pm$ 0.2             & 10.0 $\pm$ 0.1             & 51.5  $\pm$ 0.1           \\
                                                       & Coral \cite{coral216aaai}        & N/A                       & 71.6 $\pm$ 0.3             & 73.1 $\pm$ 0.1             & 9.9 $\pm$ 0.1              & 51.5  $\pm$ 0.1           \\
                                                       & IRM \cite{arjovsky2019invariant} & N/A                       & \textbf{72.5} $\pm$ 0.1    & \underline{73.3} $\pm$ 0.5 & 10.2 $\pm$ 0.3             & \textbf{52.0} $\pm$ 0.1   \\
                                                       & Fishr \cite{rame_fishr_2021}     & N/A                       & \underline{72.3} $\pm$ 0.9 & \textbf{73.5} $\pm$ 0.2    & 10.1 $\pm$ 0.2             & \textbf{52.0}   $\pm$ 0.2 \\
            \midrule
            \multirow{6}{*}{\begin{turn}{90}Our runs\end{turn}} & ERM                              & N/A                       & 71.5 $\pm$ 0.4             & 73.3 $\pm$ 0.2             & \underline{10.3} $\pm$ 0.2 & 51.7 $\pm$ 0.2            \\
                                                       & MA \cite{arpit2021ensemble}      & Uniform                   & 68.9 $\pm$ 0.0             & 71.8 $\pm$ 0.1             & 10.0 $\pm$ 0.1             & 50.3 $\pm$ 0.0            \\
                                                       & ENS                              & Uniform: $M=20$           & 71.0 $\pm$ 0.2             & 72.9 $\pm$ 0.2             & 9.9 $\pm$ 0.0              & 51.3  $\pm$ 0.1           \\
                                                       & DiWA                             & Restricted: $M \leq 20$   & 71.3 $\pm$ 0.2             & 72.9 $\pm$ 0.1             & 10.0 $\pm$ 0.1             & 51.4 $\pm$ 0.1            \\
                                                       & DiWA                             & Uniform: $M=20$           & 69.1 $\pm$ 0.8             & 72.6 $\pm$ 0.4             & \textbf{10.6} $\pm$ 0.1    & 50.8 $\pm$ 0.4            \\
                                                       & DiWA$^{\dagger}$                 & Uniform: $M=60$           & 69.3                       & 72.3                       & \underline{10.3}           & 50.6                      \\
            \bottomrule
        \end{tabular}}
    \label{table:db_cmnist_training}%
\end{table}

\begin{table}[h]%
    \caption{\textbf{Accuracy ($\%,\uparrow$) on ColoredMNIST}. WA does not improve performance under correlation shift. Random initialization of the classifier. Test-domain model selection.}%
    \centering
    \vspace{0.5em}
    \adjustbox{width=0.8\textwidth}{
        \begin{tabular}{lll|cccc}
            \toprule
                                                       & \textbf{Algorithm}               & \textbf{Weight selection} & \textbf{+90\%}             & \textbf{+80\%}             & \textbf{-90\%}             & \textbf{Avg}                \\
            \midrule
                                                       & ERM                              & N/A                       & 71.8 $\pm$ 0.4             & 72.9 $\pm$ 0.1             & 28.7 $\pm$ 0.5             & 57.8 $\pm$ 0.2              \\
                                                       & Coral \cite{coral216aaai}        & N/A                       & 71.1 $\pm$ 0.2             & 73.4 $\pm$ 0.2             & 31.1 $\pm$ 1.6             & 58.6 $\pm$ 0.5              \\
                                                       & IRM \cite{arjovsky2019invariant} & N/A                       & \underline{72.0} $\pm$ 0.1 & 72.5 $\pm$ 0.3             & \underline{58.5} $\pm$ 3.3 & \underline{67.7}  $\pm$ 1.2 \\
                                                       & Fishr \cite{rame_fishr_2021}     & N/A                       & \textbf{74.1} $\pm$ 0.6    & 73.3 $\pm$ 0.1             & \textbf{58.9} $\pm$ 3.7    & \textbf{68.8} $\pm$ 1.4     \\
            \midrule
            \multirow{6}{*}{\begin{turn}{90}Our runs\end{turn}} & ERM                              & N/A                       & 71.5 $\pm$ 0.3             & \textbf{74.1} $\pm$ 0.4    & 21.5 $\pm$ 1.9             & 55.7 $\pm$ 0.4              \\
                                                       & MA \cite{arpit2021ensemble}      & Uniform                   & 68.8 $\pm$ 0.2             & 72.1 $\pm$ 0.2             & 10.2 $\pm$ 0.0             & 50.4 $\pm$ 0.1              \\
                                                       & ENS                              & Uniform: $M=20$           & 71.0 $\pm$ 0.2             & 72.9 $\pm$ 0.2             & 9.9 $\pm$ 0.0              & 51.3  $\pm$ 0.1             \\
                                                       & DiWA                             & Restricted: $M \leq 20$   & 71.9 $\pm$ 0.4             & \underline{73.6} $\pm$ 0.2 & 21.5 $\pm$ 1.9             & 55.7 $\pm$ 0.8              \\
                                                       & DiWA                             & Uniform: $M=20$           & 69.1 $\pm$ 0.8             & 72.6 $\pm$ 0.4             & 10.6 $\pm$ 0.1             & 50.8 $\pm$ 0.4              \\
                                                       & DiWA$^{\dagger}$                 & Uniform: $M=60$           & 69.3                       & 72.3                       & 10.3                       & 50.6                        \\
            \bottomrule
        \end{tabular}}
    \label{table:db_cmnist}%
\end{table}

% \paragraph{Training model selection}
% \begin{center}
% \adjustbox{max width=\textwidth}{%
% \begin{tabular}{lcccc}
% \toprule
% \textbf{Algorithm}   & \textbf{+90\%}       & \textbf{+80\%}       & \textbf{-90\%}       & \textbf{Avg}         \\
% \midrule
% ERM                  & 70.6         & 72.8         & 10.3         & 51.2               \\
% SWAD                 & 68.9         & 71.7         & 10.1         & 50.2               \\
% DiWA & 71.2 & 72.9  & 10.0 & 51.4 \\
% \bottomrule
% \end{tabular}}
% \end{center}

\FloatBarrier
\section{\reb{Last layer retraining when some target data is available}}
\label{app:llr}

\reb{The traditional OOD generalization setup does not provide access to target samples (labelled or unlabelled). The goal is to learn a model able to generalize to any kind of distributions. This is arguably the most challenging generalization setup: under these strict conditions, we showed that DiWA outperforms other approaches on DomainBed. Yet, in real-world applications, some target data is often available for training; moreover, last layer retraining on these target samples was shown highly efficient in \cite{kirichenko2022last,rosenfeld2022domain}. The complete analysis of DiWA for this new scenario should be properly addressed in future work; yet, we now hint that a DiWA strategy could be helpful.}

\reb{Specifically, in \Cref{table:db_home0_llr}, we consider that after a first training phase on the \enquote{Clipart}, \enquote{Product} and \enquote{Photo} domains, we eventually have access to some samples from the target \enquote{Art} domain (20\% or 80\% of the whole domain). Following \cite{kirichenko2022last}, we re-train only the last layer of the network on these samples before testing.
We observe improved performance when the (frozen) feature extractor was obtained via DiWA (from the first stage) rather than from ERM. It suggests that features extracted by DiWA are more adapted to last layer retraining/generalization than those of ERM. In conclusion, we believe our DiWA strategy has great potential for many real-world applications, whether some target data is available for training or not.}
\begin{table}[h]%
    \caption{\reb{\textbf{Accuracy ($\uparrow$)} on domain \enquote{Art} from OfficeHome when some target samples are available for last layer retraining (LLR) \cite{kirichenko2022last}. The feature extractor is either pre-trained only on ImageNet (\xmark), fine-tuned on the source domains \enquote{Clipart}, \enquote{Product} and \enquote{Photo} (ERM), or obtained by averaging multiple runs on these source domains (DiWA-uniform $M=20$).}}
    \centering
    \vspace{0.5em}
    \adjustbox{width=0.8\textwidth}{
        \reb{\begin{tabular}{l|lll}
            \toprule
            \multirow{2}{*}{Training on source domains} & \multicolumn{3}{c}{LLR on target domain (\% domain in training)}                                                     \\
                                                                                                  & \xmark (0\%)                                                    & \checkmark (20\%)       & \checkmark (80\%)       \\
            \midrule
            \xmark                                                                                & -                                                               & 61.2 $\pm$ 0.6          & 74.4 $\pm$ 1.2          \\
            ERM                                                                                   & 62.9 $\pm$ 1.3                                                  & 68.0 $\pm$ 0.7          & 74.7 $\pm$ 0.6          \\
            DiWA & \textbf{67.3} $\pm$ 0.3                                         & \textbf{70.4} $\pm$ 0.1 & \textbf{78.1} $\pm$ 0.6 \\
            \bottomrule
        \end{tabular}}}
    \label{table:db_home0_llr}%
\end{table}

% DEPRECATED
% \input{sections/appendix/06_app_correlationshift.tex}
% \input{sections/appendix/app_failedidea.tex}
% \input{sections/appendix/app_initialization}
% \input{sections/appendix/app_hessian.tex}
% \input{sections/appendix/app_domaininformation.tex}

\end{appendices}
\end{document}